\newcommand{\G}{\mathcal{G}}
\newcommand{\Z}{\mathcal{Z}}
\newcommand{\shortversion}[1]{} 
\newcommand{\F}{\mathcal{F}}
\begin{document}

\title{
Small-loss bounds for online learning with partial information
}

 \author{
 Thodoris Lykouris \thanks{Massachusetts Institute of Technology, \texttt{lykouris@mit.edu}. Work mostly conducted while the author was a Ph.D. student at Cornell University and was supported under NSF grant CCF-1563714.}
 \and
 Karthik Sridharan
 \thanks{Cornell University, \texttt{sridharan@cs.cornell.edu}. Work supported in part by NSF grant CDS\&E-MSS 1521544.}
 \and
 \'{E}va Tardos\thanks{Cornell University, \texttt{eva@cs.cornell.edu}. Work supported in part by NSF grant CCF-1563714.}
}

\date{First version: November 2017\\Current version: July 2021%
\footnote{The current version represents the content that will appear in Mathematics of Operations Research. An extended abstract of the paper appeared at the 31st Annual Conference on Learning Theory (COLT 2018).}}

\maketitle
\thispagestyle{empty}

\begin{abstract}
% !TEX root = main.tex
We consider the problem of adversarial (non-stochastic) online learning with partial information feedback, where at each round, a decision maker selects an action from a finite set of alternatives. We develop a black-box approach for such problems where the learner observes as feedback only losses of a subset of the actions that includes the selected action. When losses of actions are non-negative, under the graph-based feedback model introduced by Mannor and Shamir, we offer algorithms that attain the so called ``small-loss'' $o(\alpha L^{\star})$ regret bounds with high probability, where $\alpha$ is the independence number of the graph, and $L^{\star}$ is the loss of the best action. Prior to our work, there was no data-dependent guarantee for general feedback graphs even for pseudo-regret (without dependence on the number of actions, i.e. utilizing the increased information feedback). Taking advantage of the black-box nature of our technique, we extend our results to many other applications such as semi-bandits (including routing in networks), contextual bandits (even with an infinite comparator class), as well as learning with slowly changing (shifting) comparators.

In the special case of classical bandit and semi-bandit problems, we provide optimal small-loss,  high-probability guarantees of $\widetilde{\bigO}(\sqrt{dL^{\star}})$ for actual regret, where $d$ is the number of actions, answering open questions of Neu.  Previous bounds for bandits and semi-bandits were known only for pseudo-regret and only in expectation. We also offer an optimal $\widetilde{\bigO}(\sqrt{\kappa L^{\star}})$ regret guarantee for fixed feedback graphs with clique-partition number at most $\kappa$.
\end{abstract}
\newpage
\setcounter{page}{1}

\section{Introduction}
\label{sec:introduction}
% !TEX root = main.tex
The online learning paradigm \cite{Littlestone1994,prediction_book} has become a key tool for solving a wide spectrum of problems such as  developing strategies for players in large multiplayer games \cite{Blum2006,Blum2008,Roughgarden15,LykourisST16,FosterLLST16},  designing online marketplaces and auctions \cite{BlumHartline,Cesa-Bianchi2013,Roughgarden:2016MRM}, portfolio investment \cite{Cover91,Freund97,HazanKaleAurora07}, online routing \cite{AwerbuchK04,KalaiVempala}. In each of these applications, the learner has to repeatedly select an action on every round. Different actions have different costs or losses associated with them on every round. The goal of the learner is to minimize her cumulative loss and the performance of the learner is evaluated by the notion of ``\emph{regret}'', defined as the difference between the cumulative loss of the learner, and the cumulative loss $L^\star$ of the benchmark. 

The term ``\emph{small-loss} regret bound'' is often used to refer to bounds on regret that depend (or mostly depend) on $L^\star$, rather than the total number of rounds played $T$ often referred to as the time horizon. For instance, for many classical online learning problems, one can in fact show that regret can be bounded by $\widetilde O(\sqrt{L^\star})$ rather than $\widetilde O(\sqrt{T})$. However, these algorithms use the \emph{full information} model:  assume that on every round, the learner receives as feedback the losses of all possible actions (not only the selected actions). In such full information settings, it is well understood when small-loss bounds are achievable and how to design learning algorithms that attain them. However, in most applications, full information about losses of all actions is not available. Unlike the full information case, the problem of obtaining small-loss regret bounds for partial information settings is poorly understood. Even in the classical multi-armed bandit problem, small-loss bounds are only known in expectation against the so called oblivious adversaries or comparing against the lowest expected cost of an arm (and not the actual lowest cost), referred to as pseudo-regret.

The goal of this paper is to develop robust techniques for extending the small-loss guarantees to a broad range of partial-feedback settings where learner only observes losses of selected actions and some neighboring actions. In the basic online learning model, at each round $t$, the decision maker or \emph{learner} chooses one action from a set of $d$ actions, typically referred to as \emph{arms}. Simultaneously an adversary picks a loss vector $\ell^t\in[0,1]^d$ indicating the losses for the $d$ arms. The learner suffers the loss of her chosen arm and observes some feedback. The variants of online learning differ by the nature of feedback received. The two most prominent such variants are the \emph{full information setting}, where the feedback is the whole loss vector, and the \emph{bandit setting} where only the loss of the selected arm is observed. Bandits and full information represent two extremes. In most realistic applications, a learner  choosing an action $i$, learns not only the loss $\ell_i^t$ associated with her chosen action $i$, but also some partial information about losses of some other actions. A simple and elegant model of this partial information is the \emph{graph-based} feedback model of \cite{MannorS11,AlonCGMMS}, where at every round, there is a (possibly time-varying) undirected graph $G^t$ representing the information structure, where the possible actions are the nodes. If the learner selects an action $i$ and incurs the loss $\ell_i^t$, she observes the losses of all the nodes connected to node $i$ by an edge in $G^t$. Our main result in Section \ref{sec:black_box} is a general technique that allows us to use any full information learning algorithm as a black-box, and design a learning algorithm whose regret can be bounded with high probability as $o(\alpha L^{\star})$, where $\alpha$ is the maximum independence number of the feedback graphs. This graph-based information feedback model is a very general setting that can encode all of full information, bandit, as well as a number of other applications. 

\subsection{Our contribution}

\paragraph{Our results.} We develop a unified, black-box technique to achieve small-loss regret guarantees with high probability in various partial-feedback models. We obtain the following results.
\begin{compactitem}
\item  In Section \ref{sec:black_box}, we provide a generic black-box reduction from any small-loss full information algorithm. When used with known algorithms it achieves regret guarantees of 
$\widetilde{\bigO}\prn*{(L^{\star})^{2/3}}$ 
that hold with high probability for any of multi-armed bandits, combinatorial semi-bandits, contextual bandits, or feedback graphs (with dependence on the information structure in the $\widetilde{\bigO}$ as $d^{1/3}$ for the first three, and $\alpha^{1/3}$ for feedback graphs). There are three novel features of this result. First, unlike most previous work in partial information that is heavily algorithm-specific, our technique is black-box in the sense that it takes as input a small-loss full information algorithm and, via a small modification, makes it work under partial information. Second, prior to our work, there was no data-dependent guarantee for general feedback graphs even for pseudo-regret (without dependence on the number of actions, i.e., taking advantage of the increased information feedback),  while we provide a high probability small-loss guarantee. Last, our guarantees are not for pseudo-regret but are rather regret guarantees that hold with high probability. 
\item In Section \ref{sec:optimal_lst_highprob}, we combine our black-box framework with specific algorithms to achieve an optimal dependence on $L^\star$ for important feedback structures. We first show an algorithm with an optimal high-probability guarantee for the multi-armed bandit setting (Section~\ref{ssec:optimal_bandit}). This improves on previous work that only applies to the weaker notion of pseudo-regret \cite{Allenberg2006,FosterLLST16}. Subsequently, we extend this guarantee to attain an optimal dependence with respect to $L^{\star}$ for general graph-based feedback (Section~\ref{ssec:clique_partition}). Two caveats of the last result is that it scales with the clique-partition number (number of parts needed to partition the feedback graph into complete graphs) instead of the independence number (maximum size of an independent set) and that it requires the graph to be fixed across time. Improving on either of these fronts is an interesting open direction.
\item In Section~\ref{sec:applications}, we show the wide applicability of our framework by deriving results for combinatorial semi-bandits (Section \ref{ssec:semi_bandits_black_box}), computationally efficient contextual bandits (Section~\ref{ssec:contextual_bandits_black_box}), as well as learning with slowly changing (shifting) comparators (Section~\ref{ssec:shifting_comparators}). For combinatorial semi-bandits \cite{AudibertBubLug2014} which captures settings such as online routing, we provide in particular the first optimal high-probability small-loss guarantee answering an open question of \cite{Neu15_semibandits,Neu2015_implicit}; this improves on the pseudo-regret guarantees of prior work. For computationally efficient contextual bandits \cite{AuerCeFrSc03,Langford2007}, we do not provide a purely small-loss guarantee since such a guarantee is not known even for full feedback but we replace some of the dependence on the time horizon $T$ by $L^{\star}$. Providing a purely small-loss guarantee still remains an open problem \cite{pmlr-v65-agarwal17a}. At the absence of computational considerations, our work provides a $\widetilde{\bigO}((L^\star)^{2/3})$ small-loss guarantee which has been later improved \cite{AllenZBL2018} to $\widetilde{\bigO}(\sqrt{L^{\star}})$ for the weaker notion of pseudo-regret. Finally, our guarantees for shifting comparators have important game-theoretic implications regarding the efficiency of learning outcomes in games with dynamic population.
\item Our results are derived by first obtaining a bound on a multiplicative approximation of regret; the eventual guarantees come via tuning appropriately the approximation parameter. Beyond serving as building blocks, these approximate regret guarantees have merit on their own right as they allow convergence to efficient outcomes in dynamic population games without requiring knowledge of the number of changes in the comparator sequence (see discussion in Section~\ref{ssec:shifting_comparators}).
\end{compactitem}

\paragraph{Our techniques.}
Classical partial-information guarantees are based on creating an unbiased estimator for the loss of each arm and then running a full information algorithm on the estimated losses. The most prominent such unbiased estimator, called \emph{importance sampling}, is equal to the actual loss divided by the probability with which the action is played. This division can make the estimated losses unbounded in the absence of a lower bound on the probability of being played. Algorithms like EXP3 \cite{AuerCeFrSc03} for the bandit setting or Exp3-DOM \cite{AlonCGMMS} for the graph-based feedback setting mix in a $1/\sqrt{T}$ amount of uniform action distribution
which ensures that the range of losses is bounded. This step necessarily leads to a dependence on the performance of the worst action, as regardless how poor it may be, we still select it with enough probability at every round. As a result, even when the best arm has small loss, the performance of the algorithm scales with the loss of the worst arm; this approach therefore leads to a
$\bigO{(\sqrt{T})}$ regret. 
Some specialized algorithms do achieve small-loss bounds for bandits, but these techniques extend neither to graph feedback nor to high probability guarantees (see also the discussion below about related work).

Instead of mixing with uniform action distribution, we temporarily avoid playing suboptimal actions (those with probability less than some threshold $\gamma$).
This \emph{freezing} technique was originally introduced by Allenberg et al. \cite{Allenberg2006} with a single threshold $\gamma$ offering a new way to adapt the exponential weights algorithm to the bandit setting. The resulting estimator is negatively biased for the arms that are frozen but is always unbiased for the selected arm. Using these expectations, the regret bound of the full information algorithm can be used to bound the expected regret compared to the expected loss of any fixed arm, achieving low pseudo-regret in expectation. To achieve good bounds, we need to guarantee that the total probability frozen is limited. By freezing arms with probability less than $\gamma$, the total probability that is frozen at each round is at most $d\gamma$ and therefore contributes to a regret term of $d\gamma$ times the loss of the algorithm which gives a dependence on $d$ on the regret bound. This was analyzed in the context of exponential weights in \cite{Allenberg2006}.

Our main technical contribution is to greatly expand the power of this freezing technique. We show how to apply it in a black-box manner with any full information learning algorithm and extend it to graph-based feedback. To deal with the graph-based feedback setting, we suggest a novel and technically more challenging double-threshold freezing scheme (Section~\ref{sec:black_box}). The natural way to apply importance sampling in the graph-based feedback is by dividing the actual loss with the probability of being observed, i.e. the sum of the probabilities that the action and its neighbors are played. An initial approach is to freeze an action if its probability of being observed is below some threshold $\gamma$. We show that the total probability frozen by this step is bounded by $\alpha\gamma$, where $\alpha$ is the size of the maximum number of nodes in an independent set, a subset of nodes of the feedback graph with no edges. To see why, consider a maximal independent set $S$ of the frozen actions and note that all frozen actions are observed by some node in $S$. This observation seems to imply that we can replace the dependence on $d$ by a dependence on $\alpha$. However there are externalities among actions as freezing one action may affect the probability of another being observed. As a result, the latter may need to be frozen as well to ensure that all active arms are observed with probability at least $\gamma$ (and therefore obtain our desired upper bound on the range of the estimated losses). This causes a cascade of freezing, with possibly freezing a large amount of additional probability. To limit this cascade effect, we develop a double-threshold freezing technique: we initially freeze arms that are observed with probability less than $\gamma$, and subsequently use a lower threshold $\gamma'=\gamma/3$ and only freeze arms that are observed with probability less than $\gamma'$. This technique allows us to bound the total probability of arms that are frozen subsequently by the total probability of arms that are frozen initially. We prove this via an elegant combinatorial charging argument of Claim~\ref{Claim_2}. 

To go beyond pseudo-regret and guarantee regret bounds with high probability, it does not suffice to have the estimator be negatively biased but we need to also obtain a handle on the variance. We prove that freezing also provides such a lever leading to a high-probability $\widetilde{\bigO}{(\alpha^{1/3}\prn*{L^{\star}}^{2/3})}$ regret guarantee that holds in a black-box manner. Interestingly, this freezing technique via a small modification enables the same guarantee for semi-bandits where the independent set is replaced by the number of elements (edges). Finally, in order to obtain the optimal high-probability guarantee for bandits and semi-bandits (Sections~\ref{ssec:optimal_bandit} and \ref{ssec:semi_bandits_black_box}), we need to combine our black box analysis with taking advantage of features of concrete full information learning algorithms. The black-box nature of the previous analysis is extremely useful in demonstrating where additional features are needed. 

\subsection{Related work}
\label{ssec:related_work}
Online learning with partial information dates back to the seminal work of Lai and Robbins \cite{Lai1985}. They consider a stochastic version, where losses come from fixed distributions. The case where the losses are selected adversarially, i.e. they do not come from a distribution and may be adaptive to the algorithm's choices, which we examine in this paper, was first studied by Auer et al. \cite{AuerCeFrSc03} who provided the EXP3 algorithm for multi-armed bandits and the EXP4 algorithm for learning with expert advice (a more general model than contextual bandits considered in \cite{Langford2007}). They focus on uniform regret bounds, i.e. that grow as a function of time $o(T)$, and bound mostly the expected performance, but such guarantees can also be derived with high probability \cite{AuerCeFrSc03,AudibertB10,pmlr-v15-beygelzimer11a}. When the performance of arms is evaluated based on non-negative rewards rather than non-negative losses, regret guarantees that depend on the total reward $R^{\star}$ of the best arm are easily derived from the above algorithms as even getting reward $0$ with probability of $\epsilon$ only contribute $\epsilon R^{\star}$ in the regret. In contrast, incurring a loss of $1$ with probability $\epsilon$ can be arbitrarily larger than $\epsilon L^{\star}$ when the best arm has very small loss. In this paper we develop such guarantees that depend on $L^{\star}$ for partial information algorithm for the cases of losses. There are a few specialized algorithms that achieve such small-loss guarantees for the case of bandits for pseudo-regret, e.g. by ensuring that the estimated losses of all arms remain close \cite{Allenberg2006,Neu15_semibandits} or using a stronger regularizer \cite{RakhlinS13predictablesequences,FosterLLST16}, but all of these methods neither offer high probability small-loss guarantees even for the bandit setting, nor extend to graph-based feedback. Beyond bandits, small-loss guarantees are also achieved again for pseudo-regret for label-efficient prediction \cite{CesaLuSt05}, combinatorial semi-bandits \cite{Neu15_semibandits}, and subsequently to our work computationally inefficient contextual bandits via a a more sophisticated use of freezing suboptimal arms \cite{AllenZBL2018}. For bandits and semi-bandits, our technique allows us to derive the first optimal high-probability small-loss bound (extending beyond pseudo-regret).

The graph-based partial information that we examine in this paper was introduced by Mannor and Shamir \cite{MannorS11} who provided ELP, a linear programming based algorithm achieving $\widetilde{\bigO}{(\sqrt{\alpha T})}$ regret for undirected graphs. Alon et al. \cite{AlonCGM13,AlonCGMMS} provided variants of Exp3  (Exp3-SET) that recovered the previous bound via what they call \emph{explicit exploration}. Following this work, there have been multiple results on this setting, e.g.\cite{AlonCBDK15,Cohen2016, KocakNV16, tossou:aaai2017b}, but prior to our work, there was no small-loss guarantee for the feedback graph setting that could exploit the graph structure. To obtain a regret bound depending on the graph structure, the above techniques upper bound the losses of the arms by the maximum loss which results in a dependence on the time horizon $T$ instead of $L^{\star}$. Addressing this, we achieve regret that scales with an appropriate problem dimension, the size of the maximum independent set $\alpha$, instead of ignoring the extra information and only depending on the number of arms as all small-loss results of prior work.

Biased estimators have been used prior to our work for achieving better regret guarantees and have been used, for example, for high-probability guarantees in EXP3.P \cite{AuerCeFrSc03}. The freezing technique of \cite{Allenberg2006} can be thought of as the first use of biased estimators towards small-loss guarantees.  Their \emph{GREEN} algorithm uses freezing in the context of the exponential weights algorithm for the case of multi-armed bandits. Freezing keeps the range of estimated losses bounded and when used with the exponential weights algorithm, also keeps the cumulative estimated losses very close, which ensures that one does not lose much in the application of the full information algorithm. Using these facts Allenberg et al. \cite{Allenberg2006} achieved small-loss guarantees for  pseudo-regret in the classical multi-armed bandit setting. An approach very close to freezing is the \emph{implicit exploration} of Koc\'ak et al. \cite{Kock2014EfficientLB} that adds a term in the denominator of the estimator making the estimator biased, even for the selected arms. The\emph{FPL-TrIX} algorithm of Neu \cite{Neu15_semibandits} is based on the Follow the Perturbed Leader algorithm using implicit exploration together with truncating the perturbations to  guarantee that the estimated losses of all actions are close to each other and the \emph{geometric resampling} technique of Neu and Bart\'ok \cite{NeuB13} to obtain these estimated losses. His analysis provides small-loss regret bounds for pseudo-regret, but does not extend to high-probability guarantees. The \emph{EXP3-IX} algorithm of Koc\'ak et al. \cite{Kock2014EfficientLB} combines implicit exploration with exponential weights to obtain, via the analysis of Neu \cite{Neu2015_implicit}, high-probability uniform bounds. Focusing on uniform regret bounds, implicit exploration and truncation were presented as strictly superior to freezing. In this paper, we show an important benefit of the freezing technique: it can be extended to handle feedback graphs (via our dual-thresholding). We also combine freezing with exponential weights to develop an algorithm we term \emph{GREEN-IX} which achieves optimal high-probability small-loss $\widetilde{\bigO}(\sqrt{d L^{\star}})$ for the multi-armed bandit setting. Finally, combining freezing with the truncation idea, we obtain the corresponding result for semi-bandits; in contrast, it is unclear whether the geometric resampling analysis suffices to provide such a handle on the variance of the estimated loss. 

\section{Model}
\label{sec:model}
% !TEX root = main.tex
In this section we describe the basic online learning protocol and the partial information feedback model we consider in this paper. In the online learning protocol, in each round $t$, the learner selects a distribution $w^t$ over $d$ possible actions, i.e. $w_i^t$ denotes the probability with which action $i$ is selected on round $t$. The adversary then picks losses $\ell^t=(\ell_1^t,\dots, \ell_d^t)$ where $\ell_i^t \in [0,1]$ denotes the loss of action $i$ in round $t$. The learner then draws action $I(t)$ from the distribution $w^t$ and suffers the corresponding loss $\ell^t_{I(t)}$ for that round. In the end of round $t$, the learner receives feedback about the losses of the selected action and some neighboring actions. The feedback received by the learner in each round is based on a feedback graph model described below. 

\subsection{Feedback graph model}
We assume that the learner receives partial information based on an undirected feedback graph $G^t$ that is allowed to vary across rounds. The learner observes the loss $\ell_{I(t)}^t$ of the selected arm $I(t)$ and, in addition, she also observes the losses of all arms connected to the selected arm $I(t)$ in the feedback graph. More formally, she observes the loss $\ell_j^t$ for all the arms $j\in N_{I(t)}^t$ where $N_i^t$ denotes the set containing arm $i$ and all neighbors of $i$ in $G^t$ at round $t$. The \emph{full information feedback setting} and the \emph{bandit feedback setting} are special cases of this model where the graphs $G^t$ are the complete and the empty graphs respectively for all rounds $t$. The graph parameters that are useful in our work are the independence number of the graph $\alpha(G)$, which is the size of the maximum subset of nodes with no edges between them, and the clique-cover number $\kappa(G)$, which is the number of parts needed to partition the feedback graph into complete graphs.

We allow the feedback graph $G^t$ to change each round $t$, but assume that the graph $G^t$ is known to the player before selecting her distribution $w^t$. This model also includes the \emph{contextual bandits} problem of \cite{AuerCeFrSc03,Langford2007} as a special case, where each round the learner is also presented with an additional input $x^t$, the context. In this contextual setting, the learner is offered $d$ policies, each suggesting an action depending on the context, and each round the learner can decide which policy's recommendation to follow. To model this with our evolving feedback graph, we use the policies as nodes, and connect two policies with an edge in $G^t$ if they recommend the same action in the context $x^t$ of round $t$.

\subsection{Regret}
In the adversarial online learning framework, we assume only that losses $\ell_i^t$ are in the range $[0,1]$. The goal of the learner is to minimize the so called regret against an appropriate benchmark.  The traditional notion of regret  compares the performance of the algorithm to the best fixed action $f$ in hindsight. For an arm $f$ we define regret as:
$$
\regret(f)=\sum_{t=1}^T \brk*{\ell_{I(t)}^t-\ell_f^t}
$$
where $T$ is the time horizon. To evaluate performance, we consider regret against the best arm:
$$
\regret=
\max_f \regret(f)
$$
Note that the regrets
$\regret(f)$ and $\regret$ 
are random variables. 

A slightly weaker notion of regret is
the notion of pseudoregret (c.f. \cite{BubeckC12}), that compares the expected performance of the algorithm to the expected loss of any fixed arm $f$, fixed in advance and not in hindsight. More formally, this notion of expected regret is:
$$\pseudoregret=\max_f \En_{I(1)\dots I(t)}\brk*{ \regret(f)}
$$
This is weaker than the expected regret $\En_{I(1)\dots I(t)} \brk*{\regret}=\En_{I(1)\dots I(t)}\brk*{\max_f \regret(f)}$.\footnote{To see the difference, consider $n$ arms that are similar but have high variance. Pseudoregret compares the algorithm's performance against the expected performance of arms, while regret compares against the ``best'' arm depending on the outcomes of the randomness. This difference can be quite substantial, like when throwing $n$ balls into $n$ bins the expected load of any bin is $1$, while the expected maximum load is $\Theta(\log n/\log \log n)$.}.

We aim for an even stronger notion of regret, guaranteeing low regret with high probability, i.e. for all $\delta>0$ with probability $1-\delta$, instead of only in expectation, at the expense of a logarithmic dependence on $1/\delta$ in the regret bound for any fixed $\delta$. Note that any high-probability guarantee concerning $\regret(f)$ for any fixed arm $f$ with failure probability $\delta'$ can automatically provide an overall regret guarantee with failure probability $\delta=d\delta'$.  A high-probability guarantee on low $\regret$ also implies low regret in expectation.\footnote{If the algorithm guarantees regret at most $B\log(1/\delta)$ with probability at least $(1-\delta)$ for any $\delta>0$, then we can obtain the expected regret bound of $O(B)$ by upper bounding it by the integral $\int_0^{\infty}x \cdot\mathbb{P}(\regret>x) dx$.}

\textbf{Small-loss regret bound.}
The goal of this paper is to develop algorithms with \emph{small-loss} regret bounds, where the loss remains small when the best arm has small loss, i.e. when regret depends on the loss of the comparator, and not on the time horizon. To achieve this, we focus on the notion of approximate regret (c.f. \cite{FosterLLST16}), which is a multiplicative relaxation of the regret notion. We define \emph{$\eps$-approximate regret} for a parameter $\eps>0$ as
$$
\apx\prn*{f,\eps} = (1-\eps) \sum_{t=1}^T 
\ell_{I(t)}^t-  \sum_{t=1}^T \ell_f^t
$$
We prove bounds on $\apx\prn*{f,\eps}$ in high probability and in expectation, and then provide small-loss regret bounds by tuning $\eps$ appropriately, an approach that is often used in the literature in achieving classical regret guarantees and is referred to as \emph{doubling trick}. 
Typically, approximate regret bounds depend inversely on the parameter $\eps$. For instance, in classical full information algorithms, the expected approximate regret is bounded by $O\prn{\frac{\log(d)}{\eps}}$ and therefore setting $\eps=\sqrt{\frac{\log(d)}{T}}$, one obtains the classical $O\prn{\sqrt{T\log(d)}}$ uniform bounds. If we knew $L^{\star}$, the loss of the best arm at the end of round $T$, one could set $\eps=\sqrt{\frac{\log(d)}{L^{\star}}}$ and get the desired $O\prn*{\sqrt{L^{\star}\log(d)}}$ guarantee. Of course, $L^{\star}$ is not known in advance, and depending on the model of feedback, may not even be observed either. To overcome these difficulties, we can make the choice of $\eps$ depend on $\widehat{L}$, the loss of the  algorithm instead, and apply doubling trick: start with a large $\eps$, hoping for a small $\widehat{L}$ and halve $\eps$ when we observe higher losses; the latter is known as self-confident approach and was introduced in \cite{Auer2002Adaptive}.

\subsection{Other applications}\label{ssec:appl_models}
\textbf{Combinatorial semi-bandits.}
We also extend our results to a different form of partial information: semi-bandits. In the semi-bandit problem we have a set of \emph{elements} $\prn*{\mathcal{E}}$, such as edges in a network, and the learner needs to select from a set of possible actions $\prn*{\mathcal{F}}$, where each possible action $f \in \mathcal{F}$  corresponds to a subset of the elements $\mathcal{E}$. An example is selecting a path in a graph, where at round $t$, each element $e\in \mathcal{E}$ has a delay $\ell^t_e$, and the learner needs to select a path $P$ (connecting her source to her destination), and suffers the sum of the losses $\sum_{e\in P} \ell_e^t$. We use $\ell_P^t=\sum_{e\in P} \ell_e^t$ as the loss of the strategy $P$ at time $t$. We assume that the learner observes the loss on all edges $e\in P$ in her selected strategy, but does not observe other losses. We measure regret compared to the best single strategy $f\in \mathcal{F}$ with hindsight, so use $\mathcal{F}$ as the set of (possibly exponentially many) \emph{comparators}.

\textbf{Contextual bandits.}
Another important application is the contextual bandit problem, where the learner has a set of $\mathcal{A}$ actions to choose from, but each step $t$ also has a context: At each time step $t$, she is presented with a context $x_t\in\X$, and can base her choice of action on the context. She also has a set $\mathcal{F}$ policies where each $f\in \mathcal{F}$ if a function $f_i(x)\in \mathcal{A}$ from contexts to actions. As an example, actions can be a set of medical treatment options, and contexts are the symptoms of the patient. A possible policy class $\mathcal{F}$ can be finite given explicitly, or large and only implicitly given, or even can be an infinite class of possible policies. 

\textbf{Regret with shifting comparators.} In studying learning in changing environments \cite{Herbster1998}, such as games with dynamic populations \cite{LykourisST16}, it is useful to have regret guarantees against not only a single best arm, but also against a sequence of comparators, as changes in the environment may change the best arm over time. We overload $f$ to denote the vector of the comparators $(f(1),\dots,f(T))$ in such settings. If the comparator changes too often, no learning algorithm can do well against this standard. We consider sequences where $f$ has only a limited number of changes, that is $f(t)=f(t+1)$ for all but $k$ rounds (with $k$ not known to the algorithm).
To compare the performance to a sequence of different comparators, we
extend our regret notions to this case:
$$
\apx\prn*{f,\eps}=\sum_{t=1}^T
    \brk*{(1-\epsilon)\ell_{I(t)}^t-\ell^t_{f(t)}}
$$
where $\epsilon$ corresponds to the multiplicative factor that comes in the regret relaxation. Typically the approximate regret depends linearly on the number of changes in the comparator sequence.

\section{The black-box reduction for graph-based feedback}
\label{sec:black_box}
% !TEX root = main.tex
In this section, we present our black-box framework (outlined in Section~\ref{sec:introduction}) that turns any full-information small-loss learning algorithm into an algorithm with a high-probability small-loss guarantee in the partial-feedback setting. To deal with partial feedback, our algorithm builds on the importance-weighted sampling that divides the loss of an observed arm $\ell_i^t$ by the probability that it is observed $W_i^t$ (step~\ref{step:importance_weight_black_box} in Algorithm~\ref{alg:black_box}). The classical way to get a handle on the variance of this estimator is via mixing with a uniform distribution \cite{AuerCeFrSc03}; however, this incurs dependence on the loss of the \emph{worst} rather than the \emph{best} arm, thereby leading to uniform rather than small-loss bounds. Our approach is based on an improved version of this idea initially proposed by Allenberg et al. \cite{Allenberg2006} in the context of Exponential Weights who derived a pseudo-regret small-loss guarantee for the bandit feedback setting. 

At each round $t$, we select a subset of the arms that are considered suboptimal and neither plays nor updates their loss. We refer to such arms as (temporarily) frozen and note that frozen arms may get unfrozen in later rounds if other arms incur losses. To select this subset, we first freeze arms that are observed with probability less than some threshold $\gamma$ (step~\ref{step:initial_black_box} in Algorithm~\ref{alg:black_box}) and set their estimated loss to $0$ (step~\ref{step:importance_weight_black_box} in Algorithm~\ref{alg:black_box}). The goal behind this step is to guarantee an upper bound of the variance of the estimator in a way that scales with the loss of the algorithm rather than the worst arm. Unfortunately, freezing an arm in turn decreases the probability that the neighbors are observed. This effect can propagate and cause additional arms to be observed with probability less than $\gamma$, violating the upper bound on the estimated loss. To ensure a lower bound on non-frozen arms, we recursively freeze all arms whose observation probability is smaller than a smaller threshold $\gamma'=\gamma/3$ (Step~\ref{step:recursive_black_box} in Algorithm~\ref{alg:black_box}). 
\begin{algorithm}[!h]
\caption{Double-Threshold Freezing Algorithm}
\label{alg:black_box}
\begin{algorithmic}[1]
\REQUIRE Full information algorithm $\mathcal{A}$, an upper bound on the size of maximum independent sets $\alpha$, number of arms $d$, learning parameter $\epsilon'$, freezing thresholds $\gamma:=\frac{\epsilon'}{4\alpha}$ and $\gamma':=\frac{\gamma}{3}$.
\STATE Initialize $p_i^1$ for arm $i$ based on the initialization of $\mathcal{A}$ and set $t=1$ (round 1).
\FOR{$t=1$ \TO $T$}
\STATE Initial step: Freeze all arms whose observation probability is below $\gamma$ to obtain:
$$
F_0^t= \crl*{i: \sum_{j \in N_i^t} p_j^t<\gamma}
$$ \label{step:initial_black_box}
\STATE Propagation steps: Recursively freeze remaining arms if their probability of being observed by non-frozen arms is below $\gamma'$ to obtain 
$F^t=\bigcup_{k\geq 0}F_{k}^t$ where, 
$$
F_k^t= \crl*{i\notin \prn*{\bigcup_{m=0}^{k-1} F_m^t} : \sum_{j\in \prn*{N_i^t\setminus\bigcup_{m=0}^{k-1} F_m^t}} p_j^t < \gamma'
}
$$ \label{step:recursive_black_box}
\STATE Normalize the probabilities of non-frozen arms so that they form a distribution.$$
w_i^t=\begin{cases}
0 & \text{ if } i \in F^t \\
\frac{p_i^t}{1-\sum_{j\in 
F^t} p_j^t} & \text{ else }
\end{cases}
$$  \label{step_normalize_black_box}
\STATE Draw arm $I(t)\sim w^t$ and incur loss $\ell_{I(t)}^t$.
\STATE Compute estimated loss:

$$
\widetilde{\ell}_i^t=\begin{cases}\frac{\ell_i^t}{W_i^t} & \text{ if } i\in N_{I(t)}^t \backslash  F^t\\
0 & \text{ else }
\end{cases}.
$$
where $W^t_i=\sum_{j \in N^t_i} w_j^t$ \label{step:importance_weight_black_box}
\STATE Update $p_i^{t+1}$ using full information algorithm $\mathcal{A}$ with
loss $\widetilde{\ell}^t$ for round $t$.
\ENDFOR
\end{algorithmic}
\end{algorithm}
Lemma \ref{lem:bounding_total_freezing} bounds the probability that is frozen via this process. Since we select an arm based on the distribution normalized on the non-frozen arms (Step~\ref{step_normalize_black_box} in Algorithm~\ref{alg:black_box}), this probability multiplies the loss of our algorithm in the final regret, making our bounds relate to the loss of the best rather than the worst arm.

For clarity of presentation we first provide the 
approximate regret guarantee in expectation (Theorem \ref{thm:black_box}) and then show its high-probability version (Theorem \ref{thm:black_box_whp}), in both cases assuming that the algorithm has access to an upper bound of the maximum independence number $\alpha$  as an input parameter. 
In Theorem \ref{thm:black_box_small_loss} we provide the small-loss version of the above bound without explicit knowledge of this quantity.

\vspace{0.1in}
\begin{lemma}\label{lem:bounding_total_freezing}
 At every round $t$, the total probability of frozen arms is at most $\eps'$:
$\sum_{i \in F^t}^tp_i^t \le \epsilon'$, and hence
any non-frozen arm $i$ increases its probability due to freezing by a factor of at most $(1-\eps')$.
\end{lemma}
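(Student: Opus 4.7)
The plan is to leverage the two combinatorial claims that the authors have flagged (Claim~\ref{Claim_3gamma_frozen} for the initial freezing step and Claim~\ref{Claim_2} for the propagation steps) to bound $\sum_{i\in F^t} p_i^t$ in two pieces, and then deduce the ``hence'' clause from a one-line normalization computation.

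First I would split the total frozen probability as
\[
\sum_{i\in F^t} p_i^t \;=\; \sum_{i\in F_0^t} p_i^t \;+\; \sum_{k\ge 1}\sum_{i\in F_k^t} p_i^t.
\]
For the first term I would invoke Claim~\ref{Claim_3gamma_frozen}: every arm in $F_0^t$ has total observation probability (including itself) less than $\gamma$, so by taking a maximal independent set $S\subseteq F_0^t$ of the subgraph induced on $F_0^t$, every $i\in F_0^t$ is in $N_j^t$ for some $j\in S$, and hence $\sum_{i\in F_0^t}p_i^t \le \sum_{j\in S}\sum_{i\in N_j^t}p_i^t<|S|\,\gamma \le \alpha(G^t)\,\gamma \le \alpha\gamma$. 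For the propagation term I would invoke Claim~\ref{Claim_2}, which says $\sum_{k\ge 1}\sum_{i\in F_k^t}p_i^t \le 3\sum_{i\in F_0^t}p_i^t$. Combining and plugging in $\gamma=\epsilon'/(4\alpha)$ yields
\[
\sum_{i\in F^t} p_i^t \;\le\; 4\sum_{i\in F_0^t}p_i^t \;\le\; 4\alpha\gamma \;=\; \epsilon',
\]
establishing the first half of the lemma.

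For the ``hence'' clause, note that for any $i\notin F^t$ the renormalization of Step~5 in Algorithm~\ref{alg:black_box} gives $w_i^t = p_i^t/(1-\sum_{j\in F^t}p_j^t)$. The bound just proved shows that the denominator is at least $1-\epsilon'$, so the multiplicative blow-up $w_i^t/p_i^t$ is at most $1/(1-\epsilon')$ (equivalently, $p_i^t \ge (1-\epsilon')\,w_i^t$), which is the statement the authors are after.

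The only genuinely nontrivial ingredients are the two cited claims. Claim~\ref{Claim_3gamma_frozen} is a one-shot independent-set argument of the flavor sketched above and should be routine. The real combinatorial content sits in Claim~\ref{Claim_2}: one must argue that each time a propagation step freezes new arms, the newly frozen probability can be charged against probability mass that was already frozen earlier, with total charge at most $3$. I expect the charging to exploit the gap between the two thresholds, namely that an arm entering $F_k^t$ for $k\ge 1$ had neighborhood mass $\ge \gamma'$ just before the preceding freezing event but $<\gamma'$ after it (since by monotonicity such arms had neighborhood mass $\ge \gamma$ before the \emph{initial} step, where ``neighborhood'' refers to the currently unfrozen neighbors), so the drop must be accounted for by mass removed from the unfrozen set, which is precisely the mass of the arms newly frozen in the previous step; carefully telescoping this across all propagation rounds, the factor of three emerges from the ratio $\gamma/\gamma'=3$. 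This telescoping/charging is the main technical obstacle, but the lemma itself is just the packaging of the two claims together with an elementary normalization.
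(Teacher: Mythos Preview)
Your proposal is correct and follows exactly the paper's approach: split $\sum_{i\in F^t}p_i^t$ into the $F_0^t$ piece (bounded by $\alpha\gamma$ via Claim~\ref{Claim_3gamma_frozen}) and the propagation piece (bounded by $3$ times the first via Claim~\ref{Claim_2}), then plug in $\gamma=\epsilon'/(4\alpha)$ and read off the normalization bound from Step~5. Your sketch of Claim~\ref{Claim_3gamma_frozen} also matches the paper; as an aside, the paper's proof of Claim~\ref{Claim_2} is not quite the telescoping you anticipate but instead a bilinear charging via the auxiliary quantity $\sum_{(i,j)\in E,\, i\prec j} p_i p_j$, though this is outside the scope of the present lemma.
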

\begin{proof} We first consider the arms that are frozen due to the $\gamma$-threshold (line 3 of the algorithm). Claim \ref{Claim_3gamma_frozen} below shows that the total probability frozen in the initial step is bounded by $\sum_{i\in F_0^t} p_i^t\leq \alpha(G^t)\gamma$. We then focus on the arms frozen due to the recursive $\gamma'$-threshold (line 4 of the algorithm). Claim \ref{Claim_2} below bounds the total probability frozen in the propagation process by three times the total probability frozen in the initial step. Combining the two claims, we obtain: 
\begin{equation*}
    \sum_{i\in F^t}p_i^t=
    \sum_{i\in F_0^t}p_i^t+\sum_{i\in \bigcup_{k\geq 1} F_k^t}p_i^t\leq  
    \alpha(G^t)\gamma+3\alpha(G^t)\gamma=4\alpha(G^t)\gamma \leq\eps'.
\end{equation*}
The lemma then follows from the relation in the normalization step of the algorithm (line 5). 
\end{proof}

\begin{claim}\label{Claim_3gamma_frozen} 
The total probability frozen in the initial step is bounded by $\sum_{i\in F_0^t} p_i^t\leq \alpha(G^t) \gamma$. \vspace{-0.1in}
\end{claim}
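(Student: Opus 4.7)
The plan is to cover $F_0^t$ by a small number of closed neighborhoods and use the freezing threshold on each of them. Concretely, let $H^t$ denote the subgraph of $G^t$ induced by the vertex set $F_0^t$, and let $S \subseteq F_0^t$ be a maximal independent set in $H^t$. By maximality, every vertex $i \in F_0^t$ is either itself in $S$ or is adjacent in $G^t$ to some vertex of $S$; in either case $i \in N_s^t$ for some $s \in S$ (using the convention that $N_s^t$ contains $s$ itself together with its $G^t$-neighbors). Hence
\[
F_0^t \;\subseteq\; \bigcup_{s \in S} N_s^t .
\]

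Next I would use the defining inequality of $F_0^t$ at each $s \in S$: since $s \in F_0^t$, its observation probability satisfies $\sum_{j \in N_s^t} p_j^t < \gamma$. Summing the probabilities of frozen arms through this cover (with possible overcounting, which only helps), I get
\[
\sum_{i \in F_0^t} p_i^t
\;\le\; \sum_{s \in S} \sum_{j \in N_s^t} p_j^t
\;<\; |S|\,\gamma .
\]
Finally, since $S$ is an independent set in the subgraph $H^t$ of $G^t$, it is also an independent set in $G^t$, and therefore $|S| \le \alpha(G^t)$. Combining these bounds gives $\sum_{i \in F_0^t} p_i^t \le \alpha(G^t)\,\gamma$, as claimed.

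There is no real obstacle here; the only point that requires a moment's care is to invoke maximality of $S$ \emph{inside} $F_0^t$ (rather than globally in $G^t$), so that every frozen arm is covered by some closed neighborhood $N_s^t$ with $s \in F_0^t$ — this is precisely what enables us to apply the $\gamma$-threshold at each $s \in S$. The conceptually delicate ``dual-threshold'' part of the argument is deferred to Claim~\ref{Claim_2}, which controls the cascade of further freezings on top of the bound proved here.
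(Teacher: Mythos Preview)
Your proof is correct and follows essentially the same approach as the paper: take a maximal independent set $S$ inside $F_0^t$, use maximality to cover $F_0^t$ by the closed neighborhoods $N_s^t$ for $s\in S$, bound each neighborhood's mass by $\gamma$ via the freezing rule, and bound $|S|$ by $\alpha(G^t)$. The paper's write-up is more terse but the argument is identical, including the key point you flag about taking $S$ maximal within $F_0^t$ so that each $s$ is itself frozen.
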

\vspace{0.05in}
\begin{proof}
Let $S^t$ be a maximal independent set on $F_0^t$. Since the independent set is maximal, every node in $F_0^t$ either is in $S^t$ or has a neighbor in $S^t$, so we obtain:
\begin{equation*}\label{eq:3gamma_frozen}
\sum_{i\in F_0^t} p_i^t\leq \sum_{i\in S^t}\sum_{j \in \prn*{N_i^t\cap  F_0^t}}p_j^t < \alpha(G^t) \cdot \gamma.
\end{equation*}
where the last inequality follows from the fact that 
there are at most $\alpha(G^t)$ nodes in $S^t$ and, since they are frozen, the probability of being observed is at most $\gamma$ for each of them.
\end{proof}

\begin{claim} \label{Claim_2} The total probability frozen in the propagation steps is bounded by three times the total probability frozen at the initial step. More formally:
\begin{equation*}
\sum_{i \in \bigcup_{k\geq 1} F_k^t}p_i^t\leq 3\sum_{i\in F_0^t} p_i^t. 
\end{equation*}
\end{claim}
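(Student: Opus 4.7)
My plan is to prove the claim by a proportional charging argument. Fix a round $t$ and, for a propagation-frozen arm $v \in \bigcup_{k\geq 1} F_k^t$, let $k_v$ denote the step at which $v$ is frozen and let $B_v := \bigcup_{m<k_v} F_m^t$ denote the set of arms frozen strictly before $v$. First I would observe the basic fact that $D_v := \sum_{j \in N_v^t \cap B_v} p_j^t > \gamma - \gamma' = 2\gamma/3$: since $v \notin F_0^t$, we have $\sum_{j \in N_v^t} p_j^t \geq \gamma$, while the condition that puts $v$ into $F_{k_v}^t$ gives $\sum_{j \in N_v^t \setminus B_v} p_j^t < \gamma'$, so the complementary mass inside $B_v$ exceeds $2\gamma/3$.

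The charging scheme is to split each $p_v^t$ among the earlier-frozen neighbors of $v$: each $j \in N_v^t \cap B_v$ receives charge $p_v^t \cdot p_j^t / D_v$ from $v$. Since the charges emanating from $v$ sum to $p_v^t$, the total charge placed in the system equals exactly the quantity we want to bound, namely $\sum_{v \in \bigcup_{k\geq 1} F_k^t} p_v^t$. I would then bound the total charge received by each frozen arm $j$, using the crude estimate $1/D_v < 3/(2\gamma)$ on every term.

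For $j \in F_0^t$ the total charge received is at most $(3/(2\gamma))\, p_j^t \sum_{v \in N_j^t \cap \bigcup_{k\geq 1} F_k^t} p_v^t \leq (3/(2\gamma))\, p_j^t \sum_{v \in N_j^t} p_v^t < (3/2)\, p_j^t$, where the last strict inequality uses the defining property $\sum_{v \in N_j^t} p_v^t < \gamma$ of an initially frozen arm. For $j \in F_k^t$ with $k \geq 1$, only arms $v$ frozen strictly after $j$ can charge $j$; any such $v$ is unfrozen at the start of step $k_j$, so it belongs to $N_j^t \cap U_{k_j}$, where $U_{k_j}$ denotes the unfrozen set entering step $k_j$. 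The very condition that placed $j$ in $F_{k_j}^t$ is $\sum_{u \in N_j^t \cap U_{k_j}} p_u^t < \gamma'$, which caps the relevant inner sum at $\gamma'$ and hence the charge received by $j$ at $(3/(2\gamma))\, \gamma' p_j^t = p_j^t/2$.

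Combining the two cases, $\sum_{v \in \bigcup_{k\geq 1} F_k^t} p_v^t \leq (3/2) \sum_{j \in F_0^t} p_j^t + (1/2) \sum_{j \in \bigcup_{k\geq 1} F_k^t} p_j^t$, and rearranging yields the advertised factor-of-three bound. I expect the main subtlety to be the second charge bound: pinning down that every later-frozen neighbor of $j$ actually lies in $N_j^t \cap U_{k_j}$, so that $j$'s own freezing condition doubles as an a priori bound on the mass of its future propagation-frozen neighbors. This is exactly where the choice $\gamma' = \gamma/3$ is calibrated so that $(3/(2\gamma))\cdot\gamma' = 1/2 < 1$, making the rearrangement close.
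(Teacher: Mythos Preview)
Your proof is correct and is essentially the same argument as the paper's. The paper double-counts the bilinear sum $\sum_{i\prec j,\,(i,j)\in E} p_i p_j$, lower-bounding it by $2\gamma'\sum_{j\in F_{\ge 1}} p_j$ (each later-frozen $j$ has at least $2\gamma'$ mass in earlier-frozen neighbors) and upper-bounding it by $3\gamma'\sum_{i\in F_0} p_i + \gamma'\sum_{i\in F_{\ge 1}} p_i$ (the two cases for the later-neighbor mass of $i$), then rearranging; your proportional charging scheme is exactly this double count after dividing through by $D_v$ and using the crude bound $1/D_v < 3/(2\gamma)$, so the two computations coincide line for line.
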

\begin{proof}
The purpose of the lower threshold $\gamma'$ in line 4 is to limit the propagation of frozen probability. Consider an arm $i$ frozen on step $k\geq 1$. Since arm $i$ was not frozen at step $0$, the initial probability of being observed by any node of $G^t$ is at least $\gamma=3\gamma'$. When this arm becomes frozen, it is observed with probability at most $\gamma'$. Hence $2\gamma'$ of the original probability stems from arms frozen earlier. Using this, we can bound the probability mass in $F_1^t$ by at most 1.5 times the mass of $F_0^t$. Further, from these arms at most $\gamma'$ of the originally at least $3\gamma'$ probability is newly frozen, and hence can affect non yet frozen arms, creating a further cascade. We show that the total frozen probability can be at most $3$ times the probability of nodes in $F_0^t$. The proof of this fact follows in a way that is analogous of how the number of internal nodes of a binary tree is bounded by the number of leaves, as any node can have at most 1 parent, while having 2 children.

More formally, we consider an auxiliary function that serves as an upper bound of the left hand side and a lower bound of the right hand side, proving the claim. The claim is focused on a single round $t$. For simplicity of notation, we drop the dependence on $t$ from the notations, i.e., use $F=\cup_k F_k$ for the set of nodes frozen, $p_i$ for the probability of node $i$, use $G$ for the graph, and $E$ for its edge-set. Let $F_{\ge 1}=\bigcup_{k\geq 1} F_k^t$. We order all nodes in $F$ based on when they are frozen. More formally, if $i\in F_m$ and $j\in F_k$ with $m<k$ then $i\prec j$. This is a partial ordering as $\prec$ does not order nodes frozen at the same iteration of the recursive freezing. We now introduce the heart of the auxiliary function which lies in the sum of the products of probabilities  $p_i \cdot p_j$ along  edges $(i,j)$ with $i\prec j$, such that  $(i,j)\in E$, i.e.
\begin{equation}\label{eq:potential_graphs}
\sum_{\substack{i\in F,j\in F_{\ge 1},i\prec j\\(i,j)\in E}}p_i p_j
\end{equation}
To lower bound the quantity in \eqref{eq:potential_graphs}, we sum over $j$ first. Node $j$ was not in $F_0$ so its neighborhood has a total probability mass of at least $\gamma=3\gamma'$. By the time $j$ is frozen, the remaining probability mass is less than $\gamma'$, so a total probability mass of at least $2\gamma'$ must come from earlier frozen neighbors.
$$
\sum_{\substack{i\in F, j \in F_{\ge 1}, i\prec j\\ (i,j)\in E}} p_i p_j =
\sum_{j\in F_{\geq 1}} p_j \brk*{\sum_{\substack{i\in F, i\prec j\\ (i,j)\in E}} p_i }\ge
\sum_{j\in F_{\ge 1}} p_j \cdot 2\gamma'
$$
To upper bound the quantity in \eqref{eq:potential_graphs}, we sum over $i$ first, and separate the sum for $i \in F_0$ and $i \in F_{\geq 1}$. Nodes $i\in F_0$ have a total probability of less than $\gamma=3\gamma'$ in their neighborhood, as they are frozen in line 3 of the algorithm. Nodes $i \in F_{\ge 1}$ have at most $\gamma'$ probability mass left in their neighborhood when they become frozen and therefore contributes at most this much total probability to the products with neighbors later in the ordering.
$$
\sum_{\substack{i\in F, j \in F_{\ge 1}, i\prec j\\ (i,j)\in E}} p_i p_j =
\sum_{i\in F_0}  p_i \brk*{\sum_{\substack{j\in F_{\ge 1}, i\prec j\\(i,j)\in E}} p_j }+
\sum_{i \in F_{\ge 1}} p_i \brk*{\sum_{\substack{j \in F_{\ge 1}, i\prec j\\(i,j)\in E}} p_j} \le 
\sum_{i \in F_0} p_i \cdot
3\gamma'+ \sum_{i \in F_{\ge 1}} p_i\cdot  \gamma'
$$
The above lower and upper bounds imply that
$3\gamma'\sum_{i \in F_0}p_i  +\gamma'\sum_{i \in F_{\ge 1}} p_i  \ge 2\gamma'\sum_{i\in F_{\ge 1}} p_i $
and hence we obtain the claimed bound (reintroducing the round $t$ in the notation):
\begin{equation*}\label{eq:gamma_frozen}
\sum_{i \in \bigcup_{k\geq 1} F_k^t} p_i^t \le 3\sum_{i \in F_0^t} p_i^t.
\end{equation*}
\end{proof}

\textbf{Bounding pseudoregret.}
We are now ready to prove our first result: a bound for learning with partial information based on feedback graphs. We first provide the guarantee for approximate pseudoregret in expectation. We assume both the learning rate $\eps$ as well as an upper bound  $\alpha$ on the size of the independent sets are given as an input. At the end of this section, we show how the results can be turned into regret guarantees via doubling trick without knowledge of the independence number. 

\begin{theorem}
\label{thm:black_box}
\vspace{0.1in} Let $\mathcal{A}$ be any full information algorithm with an expected approximate regret guarantee given by: $\En\brk*{\apx\prn*{f,\eps/2}} \le \frac{2 L\cdot A(d,T)}{\eps}$  against any arm $f$, when run on losses in $[0,L]$. The \emph{Double-Threshold Freezing Algorithm} (Algorithm~\ref{alg:black_box}) run with learning parameter $\epsilon'=\frac{\eps}{2}$ on input $\mathcal{A}$, $\alpha$, $d$, has expected $\epsilon$-approximate regret guarantee: $\En\brk*{\apx\prn*{f,\eps}}=48\alpha\cdot A(d,T)/\eps^2$. 
\end{theorem}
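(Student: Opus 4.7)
I would structure the proof in four steps, matching the standard ``reduction to full information via importance sampling'' template but exploiting the two key features of the dual-threshold freezing algorithm: (a) every unfrozen arm is observed with probability at least $\gamma' = \eps/(24\alpha)$, giving a bounded range for the estimator; and (b) by Lemma~\ref{lem:bounding_total_freezing}, at most $\eps' = \eps/2$ of the probability mass is frozen, so the normalization distorts the played distribution by only a $(1-\eps/2)$ factor.

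\textbf{Step 1 (range of the estimator).} I would observe that for any unfrozen arm $i \notin F^t$ the propagation rule guarantees $\sum_{j\in N_i^t\setminus F^t} p_j^t \ge \gamma'$, hence $W_i^t \ge \gamma'$ and $\widetilde{\ell}_i^t \in [0,L]$ with $L = 1/\gamma' = 24\alpha/\eps$. This is the loss range in which $\mathcal{A}$ operates.

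\textbf{Step 2 (bias of the estimator).} Conditioning on the state before round $t$, I would compute $\En_{I(t)}[\widetilde{\ell}_i^t] = \ell_i^t$ for $i \notin F^t$ (exact unbiasedness, since $W_i^t = \sum_{j\in N_i^t\setminus F^t} w_j^t$), and $\En_{I(t)}[\widetilde{\ell}_i^t] = 0 \le \ell_i^t$ for $i \in F^t$. In particular, $\En[\widetilde{\ell}_f^t] \le \ell_f^t$ for every comparator $f$.

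\textbf{Step 3 (translating played loss to the full-information surrogate).} Since $w_i^t = p_i^t/(1-\sum_{j\in F^t} p_j^t)$ for unfrozen arms and the normalization denominator is at least $1-\eps/2$ by Lemma~\ref{lem:bounding_total_freezing}, I would show
\begin{equation*}
(1-\tfrac{\eps}{2})\,\En_{I(t)}[\ell_{I(t)}^t] = (1-\tfrac{\eps}{2})\sum_{i\notin F^t} w_i^t \ell_i^t \le \sum_{i\notin F^t} p_i^t \ell_i^t = \En_{I(t)}\Bigl[\sum_i p_i^t\,\widetilde{\ell}_i^t\Bigr].
\end{equation*}
Summing over $t$ and taking total expectation gives $(1-\eps/2)\,\En\bigl[\sum_t \ell_{I(t)}^t\bigr] \le \En\bigl[\sum_{t,i} p_i^t\,\widetilde{\ell}_i^t\bigr]$.

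\textbf{Step 4 (invoking the full-information guarantee and composing).} The distributions $p^t$ are produced by running $\mathcal{A}$ on the losses $\widetilde{\ell}^t \in [0,L]$, so by hypothesis
\begin{equation*}
\En\Bigl[(1-\tfrac{\eps}{2})\sum_{t,i} p_i^t\,\widetilde{\ell}_i^t - \sum_t \widetilde{\ell}_f^t\Bigr] \le \frac{2L\cdot A(d,T)}{\eps}.
\end{equation*}
Combining with Step~3 yields $\En[(1-\eps/2)^2 \sum_t \ell_{I(t)}^t] \le \En[\sum_t \widetilde{\ell}_f^t] + 2L\cdot A(d,T)/\eps$, and using $(1-\eps/2)^2 \ge 1-\eps$ together with $\En[\widetilde{\ell}_f^t] \le \ell_f^t$ from Step~2 gives
\begin{equation*}
\En[\apx(f,\eps)] \le \frac{2L\cdot A(d,T)}{\eps} = \frac{48\alpha \cdot A(d,T)}{\eps^2} = O\!\left(\frac{\alpha\,A(d,T)}{\eps^2}\right).
\end{equation*}

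The main subtlety (not a deep obstacle, but the key place where care is needed) is the double appearance of the $(1-\eps/2)$ factor: once from the renormalization after freezing, once from invoking $\mathcal{A}$ with parameter $\eps/2$. It is precisely because we run $\mathcal{A}$ with learning parameter $\eps/2$ and freeze with $\eps' = \eps/2$ that these two multiplicative slacks compose into the target $(1-\eps)$. All other ingredients --- Lemma~\ref{lem:bounding_total_freezing} for the mass bound, and the propagation invariant $\sum_{j\in N_i^t\setminus F^t} p_j^t \ge \gamma'$ for the range bound --- are exactly what make the black-box reduction go through with $\alpha$ in place of $d$.
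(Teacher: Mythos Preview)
Your proof is correct and follows essentially the same approach as the paper: both use the unbiasedness of $\widetilde{\ell}_i^t$ on unfrozen arms, invoke Lemma~\ref{lem:bounding_total_freezing} to pass from $w^t$ to $p^t$ at a $(1-\eps/2)$ cost, apply the $(\eps/2)$-approximate-regret guarantee of $\mathcal{A}$ on losses of range $L=1/\gamma'$, and use the negative bias on the comparator, arriving at the same constant $48\alpha A(d,T)/\eps^2$. The only cosmetic difference is that the paper factors out $(1-\eps)\le(1-\eps/2)^2$ at the start while you combine the two $(1-\eps/2)$ factors at the end.
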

\begin{proof}
The proof carefully applies freezing to overcome the shortcomings in the classical reductions to bandit feedback described in the beginning of the section. First, freezing guarantees that the maximum estimated loss is $L=1/\gamma'$ (since the probability of being observed is at least $\gamma'$ for any non-frozen arm; else this arm becomes frozen at step 4 of the algorithm). Second, although the estimator is no longer unbiased for all arms, it is unbiased for all non-frozen arms $i\notin F^t$ at all rounds $t$, i.e., $\sum_{j\in N_i^t}w_j^t\widetilde{\ell}_j^t=\ell_i^t$. Third, it always negatively (optimistically) biased regardless whether the arm is frozen or not, i.e., $\En\brk*{\widetilde{\ell}_i^t}\leq \ell_i^t$. Finally, the frozen probability is distributed to all the non-frozen arms proportionally to their probabilities prior to the normalization. This is in contrast to mixing a uniform action distribution where the extra probability is distributed across all arms uniformly, resulting to guarantees that scale with the performance of the worst arm. Formally: 
\begin{align*}
    (1-\eps)\En\brk*{\sum_t \ell^t_{I(t)}} &=(1-\eps)\En\brk*{\sum_t \sum_i w_i^t \widetilde{\ell_i^t}} && \text{as $I(t)\notin F^t$ and $\widetilde{\ell^t}$ is unbiased on all $i\notin F^t$.}\\
    &\leq \frac{1-\eps}{1-\frac{\epsilon}{2}}\En\brk*{\sum_t \sum_i p_i^t \widetilde{\ell_i^t}} &&\text{by Lemma \ref{lem:bounding_total_freezing}.}\\
    &\leq \En\brk*{{\sum_t \widetilde{\ell_f^t}}+L\cdot \frac{A(d,T)}{\eps'}} && \text{by the low approx regret of $\mathcal{A}$.}\\
    &\leq \sum_t \En\brk*{\ell_f^t}+\frac{A(d,T)}{
    \gamma'\cdot  \eps'}&&\text{as the estimator is negatively biased}\\
    &= \sum_t \En\brk*{\ell_f^t}+48\alpha\cdot\frac{A(d,T)}{\eps^2} && \text{using definitions of $L$, $\gamma'$, $\gamma$ and $\eps'$.}
\end{align*}
The second inequality also uses the fact that $(1-\epsilon)\leq (1-\epsilon/2)^2$.
\end{proof}
Notice that it was important to be able to use a freezing threshold $\gamma\propto \eps/\alpha$ instead of $\gamma\propto \eps/d$ for the above analysis, allowing an approximate regret bound with no dependence on $d$.

We now instantiate the theorem with a particular algorithm. Multiplicative Weights satisfies the condition in the theorem with $A(d,T)=\frac{\log(d)}{2}$ which automatically implies the following corollary. 
\begin{corollary}
The \emph{Double-Threshold Freezing Algorithm} (Algorithm~\ref{alg:black_box}) run with Multiplicative Weights as the full information subroutine with learning parameter $\epsilon'=\frac{\eps}{2}$ on input $\mathcal{A}$, $\alpha$, $d$, has expected $\epsilon$-approximate regret guarantee: $\En\brk*{\apx\prn*{f,\eps}}\leq 24\alpha\cdot \frac{\log d}{\eps^2}
$.
\end{corollary}

\textbf{High probability bound.}
To obtain a high-probability guarantee (and hence a bound on the regret, not only the pseudoregret), we encounter an additional complication since we need to upper bound the cumulative estimated loss of the comparator by its cumulative actual loss. For this purpose, the mere fact that the estimator is negatively biased does not suffice.
The estimator may, in principle, be unbiased (if the arm is never frozen), and the variance it suffers can be
high, which could ruin the small-loss guarantee. To deal with this, we apply a concentration inequality, comparing the expected loss to a multiplicative approximation of the actual loss. This is inspired by the approximate regret notion, is a quantity with negative mean, and has variance that depends on $1/\eps$ as well as the magnitude of the estimated losses which is $\frac{1}{\gamma'}$. 
\begin{theorem}
\label{thm:black_box_whp}
Let $\mathcal{A}$ be any full information algorithm with an expected approximate regret guarantee of: $\En\brk*{\apx\prn*{f,\eps/5}} \le \frac{5 L\cdot A(d,T)}{\eps}$,  against any arm $f$, when run on losses in $[0,L]$. For any $\delta>0$, with probability $1-\delta$, the \emph{Double-Threshold Freezing Algorithm} (Algorithm~\ref{alg:black_box}) run with learning parameter $\epsilon'=\frac{\eps}{5}$ on input $\mathcal{A}$, $\alpha$, $d$, has $\epsilon$-approximate regret of: $$\apx\prn*{f,\eps}\leq\frac{100\alpha\cdot \prn*{A\prn*{d,T}+3\log\prn*{(d+2)/\delta}}}{\eps^2}.
$$
\end{theorem}
To prove this, we need the following concentration inequality, showing that the sum of a sequence of (possibly dependent) random variables cannot be much higher than the sum of their expectations: 
\begin{lemma}\label{lem:black_box_high_probability}
Let $x_1,x_2,\dots,x_T$
be a sequence of non-negative random variables, s.t. $x_t \in [0,1]$. Let  $\En_{t-1}\brk*{x_t}=\En\brk*{x_t|x_1, \ldots, x_{t-1}}$.  Then, for any $\epsilon, \delta > 0$,  with probability at least $1-\delta$
$$
\sum_{t=1}^T x_t -\prn*{1+\eps} \sum_{t=1}^T \mathbb{E}_{t-1}[x_t] \leq \frac{\prn*{1+\eps}\ln\prn*{1/\delta}}{\eps} $$
and also with probability at least $1-\delta$
$$
\prn*{1-\eps}\sum_{t=1}^T \mathbb{E}_{t-1}[x_t] -
\sum_{t=1}^T x_t \leq \frac{\prn*{1+\eps}\ln\prn*{1/\delta}}{\eps}
$$
\end{lemma}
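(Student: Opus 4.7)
\textbf{Proof plan for Lemma \ref{lem:black_box_high_probability}.}

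My plan is to prove both inequalities via the exponential moment (Chernoff) method adapted to the martingale setting, which is needed because $\mathbb{E}_{t-1}[x_t]$ itself depends on the history $x_1,\ldots,x_{t-1}$. For a parameter $\lambda \in [-1,1]$ to be tuned later, I would define the process
$$
M_t = \exp\bigl(\lambda x_t - (\lambda + \lambda^2)\,\mathbb{E}_{t-1}[x_t]\bigr),
$$
and consider the product $Y_T = \prod_{t=1}^{T} M_t$. The first goal is to show that $\{Y_t\}$ is a supermartingale with $\mathbb{E}[Y_T] \le 1$.

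The key calculation bounds $\mathbb{E}_{t-1}[e^{\lambda x_t}]$. Because $x_t \in [0,1]$ and $|\lambda|\le 1$, we have $|\lambda x_t|\le 1$, so the elementary inequality $e^y \le 1 + y + y^2$ (valid for $|y|\le 1$) gives
$$
\mathbb{E}_{t-1}[e^{\lambda x_t}] \le 1 + \lambda\,\mathbb{E}_{t-1}[x_t] + \lambda^2\,\mathbb{E}_{t-1}[x_t^2] \le 1 + (\lambda+\lambda^2)\,\mathbb{E}_{t-1}[x_t],
$$
where in the last step I use $x_t^2\le x_t$ since $x_t\in[0,1]$. Applying $1+z\le e^z$ to the right-hand side yields $\mathbb{E}_{t-1}[M_t]\le 1$, and iterating the tower property gives $\mathbb{E}[Y_T]\le 1$.

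Next I would apply Markov's inequality to $Y_T$: with probability at least $1-\delta$ we have $Y_T \le 1/\delta$, which upon taking logarithms rearranges to
$$
\lambda\sum_t x_t - (\lambda+\lambda^2)\sum_t \mathbb{E}_{t-1}[x_t] \le \ln(1/\delta).
$$
For the first inequality of the lemma, I would set $\lambda=\eps$ (w.l.o.g.\ $\eps\le 1$, since otherwise the claimed bound is trivial from $\sum x_t \le T$ after absorbing constants), divide by $\eps$, and use the trivial fact $1 \le 1+\eps$ to match the stated form. For the second inequality, I would rerun the construction with $\lambda=-\eps$: the bound $\mathbb{E}_{t-1}[e^{\lambda x_t}] \le 1 + (\lambda+\lambda^2)\mathbb{E}_{t-1}[x_t]$ did not depend on the sign of $\lambda$, so the same supermartingale argument applies, and rearrangement gives $(1-\eps)\sum_t \mathbb{E}_{t-1}[x_t] - \sum_t x_t \le \ln(1/\delta)/\eps$, which in turn implies the stated form.

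The main obstacle is engineering the right supermartingale despite the dependencies between the $x_t$: a direct Chernoff bound on the sum fails, since $\mathbb{E}_{t-1}[x_t]$ is itself random, so one must instead cancel the random ``centering'' term inside the exponent, which is exactly the role of the $(\lambda+\lambda^2)\mathbb{E}_{t-1}[x_t]$ correction. A minor subtlety is that the quadratic bound $e^y \le 1+y+y^2$ requires $|y|\le 1$, so the argument is restricted to $|\lambda|\le 1$; this is not a genuine restriction because the interesting regime is $\eps$ small, and the stated bound is vacuous for large $\eps$.
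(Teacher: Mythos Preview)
Your proposal is correct and follows essentially the same route as the paper: a Chernoff-type exponential-moment bound combined with the tower property to handle the martingale structure, then Markov's inequality on the product. The only cosmetic differences are that the paper chooses $\lambda=\ln(1+\eps)$ (respectively $\lambda=-\ln(1-\eps)$) together with the convexity bound $e^{\lambda x}\le x e^{\lambda}-x+1$ on $[0,1]$, whereas you take $\lambda=\pm\eps$ with the quadratic bound $e^{y}\le 1+y+y^{2}$; your choice is slightly simpler and in fact yields the marginally sharper right-hand side $\ln(1/\delta)/\eps$ before you relax it to match the stated form.
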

The proof follows the outline of classical Chernoff bounds for independent variables combined with the law of total expectation to handle the dependence. For completeness, the proof details are provided in Appendix \ref{app:proofs_section_3}.
\begin{proof}[Proof of Theorem \ref{thm:black_box_whp}]
To obtain a high-probability statement, we use Lemma \ref{lem:black_box_high_probability} multiple times as follows:
\begin{enumerate}
\setlength{\itemsep}{0pt}\setlength{\parsep}{0pt}\setlength{\parskip}{0pt}
    \item \label{alg-loss} Show that the sum of the algorithm's losses stays close to the sum of the expected losses.
  \item \label{alg-fullInfo} Show that the sum of the expected losses stays close to the sum of the expected estimated losses used by the full information algorithm $\mathcal{A}$ 
  \item \label{comparator} Show that the sum of the estimated losses of each arm $f$  stays close to the sum of the actual losses.
\end{enumerate}
Starting with the item \ref{alg-loss}, we use $x_t=\ell^t_{I(t)}$, and note that its expectation conditioned on the previous losses is $m_t=\sum_i w^t_i \ell^t_i$ so we obtain that, for any $\delta' , \eps >0$, with probability at least $(1-\delta')$
$$
\sum_t \ell^t_{I(t)}-(1+\eps')\sum_t \sum_i w^t_i \ell^t_i \le \frac{(1+\eps')\ln(1/\delta')}{\eps'}
$$

Next item \ref{comparator}, for a comparator $f$ we use the lemma with $x_t=\widetilde \ell^t_f$ and its expectation $m_t=\ell^t_f$. Now $x_t$ is bounded by $\frac{1}{\gamma}$ and not 1, so by scaling we obtain that with probability $(1-\delta')$
$$
\sum_t \widetilde \ell^t_f-(1+\eps')\sum_t \ell^t_f \le \frac{(1+\eps')\ln(1/\delta')}{\gamma \eps'}
$$
Finally, we use the lower bound in the lemma to show item \ref{alg-fullInfo}: for $x_t=\sum_i p^t_i \widetilde \ell^t_i$, the expected losses observed by the full information algorithm, and its expectation $m_t=\sum_i p^t_i \ell^t_i$. Again, the $x_t\in [0,\frac{1}{\gamma}]$ so we obtain that with probability $(1-\delta')$, 
$$
\sum_t \sum_i p^t_i \ell^t_i-(1+\eps')\sum_t \sum_i p^t_i \widetilde \ell^t_i \le \frac{(1+\eps')\ln(1/\delta')}{\gamma \eps'}
$$
Using union bound and $\delta'=\delta/(d+2)$, all these inequalities hold simultaneously for all $\delta>0$. To simplify notation, we use $B=\frac{(1+\eps')\ln((d+2)/\delta)}{\gamma \eps'}$ for the error bounds above.

Combining all the bounds we obtain that
\begin{align*}
    \sum_t \ell^t_{I(t)} &\le (1+\eps')\sum_t \sum_i w^t_i \ell^t_i +B &&\text{by item \ref{alg-loss} above}\\
    &\le \frac{1+\eps'}{1-\eps'}\prn*{\sum_t \sum_i p^t_i \ell^t_i +B} && \text{by Lemma \ref{lem:bounding_total_freezing}}\\
    &\le \frac{(1+\eps')^2}{1-\eps'}\prn*{\sum_t \sum_i p^t_i \widetilde \ell^t_i +2B}&&\text{by item \ref{alg-fullInfo} above}\\
    & \le \frac{(1+\eps')^2}{(1-\eps')^2}\prn*{ \sum_t \widetilde \ell_f^t+2B+\frac{A(d,T)}{\gamma\cdot  \eps'}} && \text{by the low approx. regret of $\mathcal{A}$}\\
    &\le   \frac{(1+\eps')^3}{(1-\eps')^2}\prn*{ \sum_t \ell_f^t+3B+\frac{A(d,T)}{\gamma\cdot  \eps'}} && \text{by \ref{comparator} applied to $f$}
\end{align*}
The theorem then follows as $\frac{(1+\eps')^3}{(1-\eps')^2}\leq (1-\eps)^{-1}$ for $\eps'=\frac{\eps}{5}$.
\end{proof}

\textbf{The small-loss guarantee without knowing $\alpha$.} 
We presented the results so far in terms of approximate regret and assuming we have $\alpha$, an upper bound for the maximum independent set, as an input. Next we show that we can use this algorithm with the classical doubling trick without knowing $\alpha$, and achieving low regret both in expectation as well as with high probability, not only approximate regret. We start with a large $\epsilon$ and small $\alpha$ and halve and double them respectively, when observing that they are not set right. There are two issues worth mentioning. 

First, unlike full information, partial information does not provide access to the loss of the comparator $L^{\star}$. As a result, we apply doubling trick on the loss of the algorithm instead and then bound the regret of the algorithm appropriately. This is formalized in the following lemma which follows standard doubling arguments and whose proof is provided in Appendix \ref{app:proofs_section_3} for completeness.
\begin{lemma}[standard doubling trick]\label{lem:double}
Suppose we have a randomized algorithm that takes as input any $\eps > 0$ and guarantees that, for some $q\geq 1$ and some function $\Psi(\cdot)$, and any $\delta>0$, with probability $1-\delta$, for any time horizon $s$ and any comparator $f$: 
$$
(1 - \eps)\sum_{t=1}^s \ell^t_{I(t)} \le \sum_{t=1}^s \ell^t_{f} + \frac{\Psi(\delta)}{\epsilon^q}.
$$
Assume that we use this algorithm over multiple phases (by restarting the algorithm when a phase ends): for some $\kappa>1$, we run each phase $\tau$ with $\epsilon_\tau = \kappa^{-\tau}$ until
$\epsilon_\tau \widehat{L}_\tau > \frac{\Psi(\delta)}{(\eps_{\tau})^q}$
where $\widehat{L}_{\tau}$ denotes the cumulative loss of the algorithm for phase $\tau$. Then, for any $\delta >0$, the regret for this multi-phase algorithm is bounded, with probability at least $1-\delta$ as:
\begin{align*}
    \mathrm{Reg}\leq
    \prn*{\widetilde{\Psi}(\delta,L^{\star})}^{\frac{1}{q+1}}\cdot
    \prn*{(q+1)\prn*{L^{\star}+\frac{\kappa}{\kappa-1}}+\prn*{\widetilde{\Psi}(\delta,L^{\star})}}^{\frac{q}{q+1}}+\frac{\kappa}{\kappa-1}.
\end{align*}
where $\widetilde{\Psi}(\delta,L^{\star})=\frac{4\kappa^{2q^2+2q+1}+1}{(\kappa-1)^{q+1}}\cdot \Psi(\frac{\delta}{\log(L^{\star}+1)+1})$.
Treating all terms other than $L^{\star}, T, \delta$ as constants:
$$\mathrm{Reg} =.  O\left(\prn*{L^{\star}}^\frac{q}{q+1}\Psi\prn*{\frac{\delta}{\log(L^\star+1)+1}}^{\frac{1}{q+1}}+\Psi\prn*{\frac{\delta}{\log(L^{\star}+1)+1}}\right).$$
\end{lemma}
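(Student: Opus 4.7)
The plan is to invoke the given per-phase high-probability guarantee once per phase via a union bound, use the stopping rule to telescope the per-phase regret contributions into just their last term, and then convert the resulting bound (which is naturally phrased in terms of the algorithm's cumulative loss $\widehat{L}$) into one in $L^\star$ by a two-case analysis.

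First, I would set per-phase failure probabilities $\delta_\tau = 6\delta/\prn*{\pi^2(\tau+1)^2}$, so that $\sum_\tau \delta_\tau \le \delta$, and apply the given guarantee inside each phase $\tau$ against the piece of the comparator that falls in that phase. Rewriting the per-phase guarantee as
\[
    \widehat{L}_\tau - L^\star_\tau \;\le\; \eps_\tau\,\widehat{L}_\tau + \frac{\Psi(\delta_\tau)}{\eps_\tau^{q}},
\]
I would then bound $\eps_\tau\widehat{L}_\tau$ using the stopping rule: for every non-final phase $\tau<\tau^\star$ the rule together with the unit bound on a single round's loss gives $\eps_\tau\widehat{L}_\tau \le \Psi(\delta_\tau)/\eps_\tau^{q}+1$, while for the last phase the rule has not yet triggered and so the same inequality holds by definition. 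In every phase the per-phase regret is thus $O\prn*{\Psi(\delta_\tau)/\eps_\tau^{q}}$ and $\widehat{L}_\tau \le O\prn*{\Psi(\delta_\tau)/\eps_\tau^{q+1}}$.

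Second, since $\eps_\tau=2^{-\tau}$, both $\sum_\tau \Psi(\delta_\tau)/\eps_\tau^{q}$ and $\sum_\tau \Psi(\delta_\tau)/\eps_\tau^{q+1}$ are geometric series dominated by their last terms, yielding simultaneously $\mathrm{Reg} \le O\prn*{\Psi(\delta_{\tau^\star})/\eps_{\tau^\star}^{q}}$ and $\widehat{L} \le O\prn*{\Psi(\delta_{\tau^\star})/\eps_{\tau^\star}^{q+1}}$. Rearranging the second as $1/\eps_{\tau^\star} \le O\prn*{(\widehat{L}/\Psi(\delta_{\tau^\star}))^{1/(q+1)}}$ and substituting into the first gives $\mathrm{Reg} \le O\prn*{\widehat{L}^{q/(q+1)} \Psi(\delta_{\tau^\star})^{1/(q+1)}}$.

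Third, I would eliminate $\widehat{L}$ from the final bound by a two-case split. If $\widehat{L} \le 2L^\star$, the previous display immediately gives $\mathrm{Reg} \le O\prn*{(L^\star)^{q/(q+1)} \Psi(\delta_{\tau^\star})^{1/(q+1)}}$. Otherwise $\mathrm{Reg} \ge \widehat{L}-L^\star \ge \widehat{L}/2$, so substituting $\widehat{L}\le 2\mathrm{Reg}$ produces a self-referential inequality $\mathrm{Reg} \le O\prn*{\Psi(\delta_{\tau^\star})^{1/(q+1)} \mathrm{Reg}^{q/(q+1)}}$ which rearranges to $\mathrm{Reg} \le O(\Psi(\delta_{\tau^\star}))$. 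In both cases the stopping rule forces $\widehat{L}$ to grow at least geometrically from phase to phase, giving $\tau^\star \le \log(L^\star+1)+O(1)$, so the $(\tau+1)^2$ factor inside $\delta_{\tau^\star}$ is absorbed by $\Psi$'s logarithmic dependence on $1/\delta$ and $\Psi(\delta_{\tau^\star})$ is bounded by $\Psi\prn*{\delta/(\log(L^\star+1)+1)}$ up to constants. The main obstacle is the circularity between $\widehat{L}$ and $\mathrm{Reg}$ in the last step, which the two-case split handles cleanly; a secondary subtlety is that $L^\star$ is unknown to the algorithm so $\delta_\tau$ cannot be tuned to $\tau^\star$ directly, which is why I rely on a summable $1/(\tau+1)^2$ schedule.
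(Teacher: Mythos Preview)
Your proposal is correct and follows the same overall skeleton as the paper's proof: invoke the per-phase guarantee, use the stopping rule to bound each phase's regret by $O(\Psi/\eps_\tau^q)$, sum the resulting geometric series, and then convert the bound in $\widehat L$ to one in $L^\star$.

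Two specific choices differ. First, the paper uses a uniform per-phase failure probability $\delta'=\delta/(\log(L^\star+1)+1)$, simply asserting that the number of phases is at most $\log(L^\star+1)+1$; you instead use a summable schedule $\delta_\tau\propto\delta/(\tau+1)^2$, which has the advantage of not requiring knowledge of $L^\star$ inside the union bound. Second, to pass from $\widehat L$ to $L^\star$ the paper applies Young's inequality ($\Psi^{1-a}\widehat L^{a}\le(1-a)\Psi+a\widehat L$) to first obtain $\widehat L\le O(L^\star+\Psi)$ and then substitutes back, whereas you use a direct two-case split on $\widehat L\lessgtr 2L^\star$. Both are valid; the two-case split is slightly more elementary, while Young's inequality gives the intermediate bound $\widehat L=O(L^\star+\Psi)$ explicitly, which is what cleanly justifies the phase count.

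One small gap to be aware of: your claim that $\tau^\star\le\log(L^\star+1)+O(1)$ in \emph{both} cases is not immediate. The stopping rule only gives $\tau^\star\le O(\log\widehat L)$, and in your Case~2 you have $\widehat L\le 2\,\mathrm{Reg}\le O(\Psi(\delta_{\tau^\star}))$ rather than $\widehat L\le O(L^\star)$; this is mildly self-referential and needs an extra line to close (the paper's Young-inequality route avoids this by producing $\widehat L\le O(L^\star+\Psi)$ directly). It resolves easily under the implicit assumption that $\Psi$ has at most logarithmic growth in $1/\delta$, but you should state that step explicitly.
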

Second, observing the maximum independent set is challenging since this task is NP-hard to approximate. However, if one looks carefully into our proofs, we just require knowledge of a maximal independent set on the $\gamma$-frozen arms and not one of maximum size. This can be easily computed greedily at each round and therefore our algorithm can handle changing graphs without requiring knowledge of the maximum independence number. Combining these two observations, we prove the following small-loss guarantee. 

\begin{theorem} \label{thm:black_box_small_loss}
Let $\mathcal{A}$ be any full information algorithm with $\eps$-approximate regret bounded by $\frac{L\cdot A(d,T)}{\eps}$ when run on losses in $[0,L]$ and with parameter $\eps>0$. If one runs the \emph{Dual-Threshold Freezing Algorithm} (Algorithm~\ref{alg:black_box}) as in Theorem~\ref{thm:black_box_whp} and using the doubling scheme as in Lemma \ref{lem:double} with $\kappa=1.2$ and tuning $\alpha$ appropriately on each phase, then for any $\delta >0$, with probability at least $(1-\delta)$ the regret of this algorithm is bounded by 
$\bigO\prn*{\prn*{\prn*{L^\star}^{\frac{2}{3}}
\prn*{\alpha 
A(d,T)}^{\frac{1}{3}}+\alpha A(d,T)}\log\prn*{\frac{d \log (L^{\star}+1)}{\delta}}\log(\alpha)}$. Expressing this in a non-asymptotic form, letting $\bar{\Psi}=200+600\cdot\log\prn*{\frac{d(d+2)(\log(L^{\star}+1)+1)\log(\alpha)}{\delta}}$:
\begin{eqnarray*}\mathrm{Reg} \leq \prn*{\prn*{3(L^{\star}+11)+\alpha A(d,T)\cdot\bar{\Psi}}^{\frac{2}{3}}\cdot \prn*{\alpha A(d,T)\cdot\bar{\Psi}}^{\frac{1}{3}}+12}\log(\alpha).
\end{eqnarray*}
\end{theorem}
\begin{proof}
First for simplicity assume that $\alpha$ is known in advance. In this case, using Theorem \ref{thm:black_box_whp}, we can conclude that for any $\delta, \eps >0$, Algorithm \ref{alg:black_box} run with $\mathcal{A}$ enjoys an $\epsilon$-approximate regret guarantee of $\frac{\Psi(\delta)}{\eps^q}$ for $\Psi(\delta)=100\alpha\cdot \prn*{A\prn*{d,T}+3\log\prn*{(d+2)/\delta}}$ and $q=2$. This means that Lemma~\ref{lem:double} can be applied with $\widetilde{\Psi}(\delta,L^{\star})\leq 20\Psi(\frac{\delta}{\log(L^{\star}+1)+1})$. Hence, running Algorithm \ref{alg:black_box} while tuning $\eps$-parameter using doubling trick as in Lemma \ref{lem:double} yields a regret guarantee of $$
    \prn*{3(L^{\star}+11)+\alpha A(d,T)\cdot\widehat{\Psi}}^{\frac{2}{3}}\cdot \prn*{\alpha A(d,T)\cdot\widehat{\Psi}}^{\frac{1}{3}}+11,$$
where $\widehat{\Psi}=200+600\log\prn*{\frac{d(d+2)(\log(L^{\star}+1)+1)}{\delta}}$.

If $\alpha$ is not known in advance, we can begin with a guess (say $\alpha'=1$) and double the guess every time that this is incorrect, i.e. the maximal independent set of the $\gamma$-frozen nodes has more than $\alpha'$ nodes. We make at most $\log\prn*{\alpha}$ updates. Within one phase with the same update, the previous guarantee holds with probability at least some $\delta'$. At the time of the update we can lose an extra of at most $1$. For the rest of the rounds, the guarantees work additively.  Therefore, setting $\delta'=\frac{\delta}{\log(\alpha)}$, we obtain the previous guarantee with an extra $\log(\alpha)$ decay in the guarantee. Since $\alpha<d$, the dependence on $\log(\alpha)$ is dropped in the $\bigO$ notation of the regret bound.
\end{proof}

\begin{remark}
The above approach uses doubling trick to obtain the guarantees without assuming any parametric form for the full-information algorithm. However, if the full-information algorithm is multiplicative weights or other mirror descent algorithms with a notion of step-size, the above guarantees can also be obtained via appropriate adaptive step-sizes circumventing the need for doubling trick.
\end{remark}
In the next two sections, we consider applications of this general technique to various special cases, and show that the  ${(L^\star)}^{2/3}$ dependence on the loss can be improved to $\sqrt{L^\star}$ in many applications when combined with appropriately selected full-information algorithms. It is an interesting open problem if this can be achieved for the general graph-based feedback discussed in this section.

\textbf{On the necessity of double thresholding.} After the initial appearance of this work, Liu and Sellke observed that the use of two thresholds is not essential for the above result if one appropriately modifies Claim~\ref{Claim_3gamma_frozen}  \cite{LiuSellke}. In particular, one can initially select any node with probability of observation less than $\gamma$ (call this node as \emph{special}) and freeze both this node and all its neighbors regardless whether the probability they are observed is less than $\gamma$. This process continues with selecting additional special nodes that are now observed with probability than than $\gamma$, freezing them and all their neighbors, etc., until no node has probability of observation less than $\gamma$. Note that all special nodes form an independent set (as all neighbors of a special nodes are frozen with it). Also the probability frozen associated to any special node is at most $\gamma$. As a result, the total probability frozen is at most $\gamma\cdot \alpha$, controlling the subsequent freezing cascade without two thresholds.

\section{High-probability $\sqrt{L^{\star}}$ bounds for pure bandits and fixed graphs}\label{sec:optimal_lst_highprob}
% !TEX root = main.tex
In this section, we show how combining the aforementioned black-box framework with properties of particular full-information algorithms leads to optimal dependence on the loss of the best arm $L^{\star}$ for important special feedback settings. In particular, in Section~\ref{ssec:optimal_bandit}, we present an optimal high-probability $\widetilde{\bigO}(\sqrt{L^{\star}})$ bound for the classical bandit feedback setting, answering an open question of Neu \cite{Neu15_semibandits} using exponential weights as the full-information algorithm. In Section~\ref{ssec:clique_partition}, we extend this guarantee to the graph-based feedback setting for the case where the graph but restrict our attention on fixed feedback graphs --- for this case we provide an optimal dependence on $L^{\star}$ at the expense of a dependence on the minimum clique partition instead of the independence number; as we discuss in Section~\ref{sec:conclusions}, going beyond fixed graphs and clique partition is probably the most interesting open direction from our work. Finally, in the next section (Section~\ref{ssec:semi_bandits_black_box}), we show how using a variant of follow the perturbed leader as full-feedback algorithm can lead to optimal high-probability guarantees for a different feedback setting, combinatorial semi-bandits, answering open questions raised in \cite{Neu15_semibandits,Neu2015_implicit}.

\subsection{Multi-armed bandits}\label{ssec:optimal_bandit}
To achieve optimal dependence on $L^{\star}$, we need to better understand the places where the inefficiency arises. The first such place is when we apply the bound of the full-information algorithm which, in a black-box analysis, needs to have dependence both on the magnitude of losses, $L \le 1/\gamma'$, and on the approximation
parameter $\eps'$. Instead of applying this bound, we provide a refined analysis that relates the expected estimated loss of the full information algorithm to the sum of the cumulative estimated losses of all the arms. Using exponential weights as a full-information algorithm guarantees that the cumulative estimated losses of all the arms are close to each other (Lemma \ref{lem:estimated_losses_close}) which enables us to remove this 
inefficiency. This was also used 
by Allenberg et al. \cite{Allenberg2006} to prove optimal pseudo-regret guarantees but their analysis did not extend to high-probability. To derive the high-probability guarantee, we address the second inefficiency of the black-box, where to bound the negative bias of the comparator's cumulative estimated loss by its cumulative actual loss, we again had dependence on both the magnitude of the estimated losses and $\eps$. For that we apply the implicit exploration idea of Koc\'ak et al. \cite{Kock2014EfficientLB} which creates a negative bias to all arms and not only the arms that are frozen (Lemma \ref{lem:implicit_exploration}). Although Neu \cite{Neu2015_implicit} used implicit exploration to provide high-probability uniform bounds his results did not extend to small-loss. Combining our framework with both exponential weights and implicit exploration, we obtain an algorithm we term \emph{GREEN-IX} (Algorithm \ref{alg:green_ix}) that, with  high-probability, guarantees regret bound of 
$\bigO(\sqrt{L^{\star}})$. 
\begin{theorem}\label{thm:green_ix} GREEN-IX run with learning parameter $\epsilon'=\frac{\epsilon}{2}$ guarantees an $\epsilon$-approximate regret of $\frac{6d\log(d^2/\delta)}{\eps} +d\prn*{1+2\ln(2d/\eps)+\log(d^2/\delta)}$
    with probability at least $1-\delta$.
\end{theorem}
Combining the above approximate regret guarantee with the doubling trick described in the previous section, we obtain a small-loss bound for GREEN-IX as a corollary; the proof follows similarly to the one of Theorem \ref{thm:black_box_whp} by applying Lemma \ref{lem:double} with $\Psi(\delta)=\bigO\prn*{d\log(d/\delta)}$ and $q=1$.
\begin{corollary}\label{cor:green_ix}
GREEN-IX with doubling trick on $\eps$ has regret $\widetilde{\bigO}\prn*{\sqrt{d\log(d/\delta)\cdot L^{\star}}+d\log(d/\delta)}$ with probability at least $1-\delta$, and hence expected regret at most $\widetilde{\bigO}(\sqrt{d\log(d)\cdot L^{\star}}+d\log(d))$.
\end{corollary}

\begin{algorithm}[!h]
\caption{GREEN-IX}
\label{alg:green_ix}
\begin{algorithmic}[1]
\REQUIRE Number of arms $d$, learning parameter $\epsilon'$, freezing threshold $\gamma:=\frac{\epsilon'}{d}$, full-feedback learning rate $\eta:=\frac{\epsilon'}{2d}$, implicit exploration parameter $\zeta:=\frac{\epsilon'}{2d}$.
\STATE Initialize $p_i^1$ for arm $i$ uniformly ($p_i^t=\frac{1}{d}$) and set $t=1$ (round 1).
\FOR{$t=1$ \TO $T$}
\STATE Freeze arm $i$ if its probability $p_i^t$ is below freezing threshold $\gamma$ to create the set $
F^t= \crl*{i: p_i^t< \gamma}.
$
\STATE Normalize the probabilities of non-frozen arms so that they form a distribution.\vspace{-0.1in}
$$
w_i^t=\begin{cases}
0 & \text{ if } i \in 
F^t\\
\frac{p_i^t}{1-\sum_{j\in F^t
} p_j^t} & \text{ else }
\end{cases}
$$\vspace{-0.18in}
\STATE Draw arm $I(t)\sim w^t$ and incur loss $\ell_{I(t)}^t$.
\STATE  Compute biased estimate of losses via implicit exploration with parameter $\zeta$:\vspace{-0.1in}
$$
\widetilde{\ell}_i^t=\begin{cases}\frac{\ell_i^t}{w_i^t+\zeta} & \text{ if } i=I(t) \\
0 & \text{ else }
\end{cases}.
$$\vspace{-0.2in}
\STATE Update $p_i^{t+1}$ via exponential weights update with learning rate $\eta$:~~ $p_i^{t+1}\propto p_i^t \exp(-\eta \widetilde{\ell_i^t})$. 
\ENDFOR
\end{algorithmic}
\end{algorithm}

\begin{lemma}[implied by proof of Theorem 2 in \cite{Allenberg2006}]\label{lem:estimated_losses_close}
GREEN-IX satisfies
for any arms $i,j$:
$$
\sum_{t=1}^T \widetilde{\ell}_i^t\leq \sum_{t=1}^T \widetilde{\ell}_j^t +\frac{1}{\gamma}+\frac{\ln(1/\gamma)}{\eta}
$$
\end{lemma}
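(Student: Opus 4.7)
The plan is to derive the inequality directly from the closed form of multiplicative weights updates. Since the update rule is $p_i^{t+1}\propto p_i^t \exp(-\eta \widetilde{\ell}_i^t)$ with uniform initialization $p_i^1=p_j^1=1/d$, unrolling gives $p_i^{T+1}/p_j^{T+1} = \exp\bigl(-\eta(\sum_t \widetilde{\ell}_i^t - \sum_t \widetilde{\ell}_j^t)\bigr)$ because the normalizing constants $Z^1,\dots,Z^T$ cancel out. Rearranging,
\[
\sum_{t=1}^T \widetilde{\ell}_i^t - \sum_{t=1}^T \widetilde{\ell}_j^t = \frac{1}{\eta} \ln\frac{p_j^{T+1}}{p_i^{T+1}} \le \frac{1}{\eta}\ln\frac{1}{p_i^{T+1}},
\]
so it suffices to show $p_i^{T+1} \ge \gamma\exp(-\eta/\gamma)$.

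The main lever is that freezing provides a uniform lower bound whenever an arm is last active. If arm $i$ is frozen in every round, then $\widetilde{\ell}_i^t=0$ for all $t$ and the claim is trivial. Otherwise let $\tau$ be the last round in which $i$ is not frozen. By the definition of $F^\tau$, we have $p_i^\tau \ge \gamma$, and by the normalization step $w_i^\tau \ge p_i^\tau \ge \gamma$, so the estimated loss satisfies $\widetilde{\ell}_i^\tau \le 1/w_i^\tau \le 1/\gamma$ (with implicit exploration this only gets smaller). Since $Z^\tau \le 1$ — each factor $\exp(-\eta \widetilde{\ell}_k^\tau)$ is at most one — we obtain
\[
p_i^{\tau+1} = \frac{p_i^\tau \exp(-\eta\widetilde{\ell}_i^\tau)}{Z^\tau} \ge p_i^\tau \exp(-\eta/\gamma) \ge \gamma\exp(-\eta/\gamma).
\]

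Finally, for rounds $t=\tau+1,\dots,T$ the arm $i$ is frozen, so $\widetilde{\ell}_i^t=0$ and $p_i^{t+1}=p_i^t/Z^t \ge p_i^t$; chaining gives $p_i^{T+1} \ge p_i^{\tau+1}\ge \gamma\exp(-\eta/\gamma)$. Plugging back yields
\[
\sum_{t=1}^T \widetilde{\ell}_i^t - \sum_{t=1}^T \widetilde{\ell}_j^t \le \frac{1}{\eta}\Bigl(\ln(1/\gamma) + \eta/\gamma\Bigr) = \frac{\ln(1/\gamma)}{\eta} + \frac{1}{\gamma},
\]
which is the desired inequality. There is no real obstacle here; the only subtlety worth flagging is that one must use the normalization bound $Z^\tau\le 1$ (rather than a crude lower bound on $Z^\tau$) together with the monotonicity of $p_i^t$ during frozen stretches to combine a single active round with an arbitrarily long tail of frozen rounds.
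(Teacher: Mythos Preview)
Your proof is correct and follows essentially the same approach as the paper's: both exploit the multiplicative-weights closed form together with the fact that at the last non-frozen round $\tau$ one has $p_i^\tau\ge\gamma$, the single-round bound $\widetilde{\ell}_i^\tau\le 1/\gamma$, and $\widetilde{\ell}_i^t=0$ thereafter. The only cosmetic difference is that the paper compares the ratio at time $T_i$ (your $\tau$) and then adds the leftover $1/\gamma$ term, whereas you propagate $p_i$ forward to $T+1$ via $Z^t\le 1$ and compare there; these are two packagings of the same computation.
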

\begin{proof}
The proof follows the lines of Theorem 2 in \cite{Allenberg2006} adding a $1/\gamma$ term that was missing in their analysis. Let $T_i$ be the last round that $i$ is not frozen. Thus its probability is then greater than $\gamma$.
$$
\gamma\leq p_i^{T_i}=
\frac{\exp\prn*{-\eta\sum_{t=1}^{T_i-1}\widetilde{\ell}_i^t}}{\sum_k \exp\prn*{ -\eta\sum_{t=1}^{T_i-1}\widetilde\ell_k^t}}
\leq \frac{\exp\prn*{-\eta\sum_{t=1}^{T_i-1}\widetilde{\ell}_i^t}}{ \exp\prn*{- \eta\sum_{t=1}^{T_i-1}\widetilde{\ell}_j^t}}
$$
As a result:
$$
\sum_{t=1}^{T_i-1}\widetilde{\ell_i^t}\leq \sum_{t=1}^{T_i-1}\widetilde{\ell_j^t} +\frac{\ln(1/\gamma)}{\eta}\Rightarrow \sum_{t=1}^T \widetilde{\ell_i^t}\leq \sum_{t=1}^T \widetilde{\ell_j^t}+\frac{1}{\gamma}+\frac{\ln(1/\gamma)}{\eta}, 
$$
where the last inequality follows as $\widetilde{ \ell_i^t} \le 1/\gamma$ for all arms at all times and the estimated loss of $i$ is $0$ after round $T_i$ by definition of $T_i$.
\end{proof}

\begin{lemma}[implied by Corollary 1 in \cite{Neu2015_implicit}]\label{lem:implicit_exploration} 
With probability at least $1-\delta$, any full information algorithm run on estimated losses $\widetilde{\ell^t}$ with implicit exploration satisfies for all arms $i\in[d]$ simultaneously:
$$
\sum_{t=1}^T \prn*{\widetilde{\ell_i^t}-\ell_i^t}\leq \frac{\log(d/\delta)}{2\zeta} 
$$
\end{lemma}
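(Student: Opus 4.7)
The plan is to prove this tail bound via the standard exponential--supermartingale technique specialized to the implicit-exploration estimator, following Neu's approach. The argument has three stages.

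\textbf{Stage 1 (per-step MGF bound).} The technical heart is to show that, for every arm $i$ and round $t$, conditional on the history $\mathcal{F}_{t-1}$ through round $t-1$,
\[
\mathbb{E}_t\brk*{\exp\prn*{2\zeta\,\widetilde{\ell}_i^t}} \;\le\; \exp\prn*{2\zeta\,\ell_i^t}.
\]
Since $\widetilde{\ell}_i^t$ equals $\ell_i^t/(w_i^t+\zeta)$ if $I(t)=i$ and $0$ otherwise, this reduces to the scalar inequality
\[
1 + w_i^t\prn*{e^{b} - 1} \;\le\; \exp\prn*{b\prn*{w_i^t+\zeta}},
\qquad b := \tfrac{2\zeta\,\ell_i^t}{w_i^t+\zeta} \in [0,2],
\]
whose right-hand side equals $e^{2\zeta\ell_i^t}$ (using $b(w_i^t+\zeta)=2\zeta\ell_i^t$). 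This captures the implicit-exploration ``magic'': the $\zeta$ added in the denominator of the estimator exactly compensates the $e^b-1>b$ overshoot from exponentiating, yielding the clean constant $1$ in front of $\ell_i^t$.

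\textbf{Stage 2 (supermartingale plus Markov).} Iterating Stage~1, the process $Z_t^{(i)} := \exp\prn*{2\zeta\sum_{s\le t}\prn*{\widetilde{\ell}_i^s - \ell_i^s}}$ is a supermartingale with $\mathbb{E}\brk*{Z_T^{(i)}} \le Z_0^{(i)} = 1$. Markov's inequality then gives, for any $c>0$,
\[
\mathbb{P}\prn*{\sum_{t=1}^{T}\prn*{\widetilde{\ell}_i^t - \ell_i^t} > c} \;=\; \mathbb{P}\prn*{Z_T^{(i)} > e^{2\zeta c}} \;\le\; e^{-2\zeta c},
\]
and choosing $c = \log(d/\delta)/(2\zeta)$ makes this failure probability at most $\delta/d$.

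\textbf{Stage 3 (union bound).} A union bound over the $d$ arms then yields the claimed uniform inequality with total failure probability at most $\delta$.

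The hard part is Stage~1. A naive convexity bound such as $e^b \le 1 + b(e^2-1)/2$ on $[0,2]$ loses a constant factor and only gives $\sum_t \widetilde{\ell}_i^t \le C\sum_t \ell_i^t + \log(d/\delta)/(2\zeta)$ with some $C>1$, which is insufficient for the downstream small-loss argument in the proof of Theorem~\ref{thm:green_ix}. Obtaining the tight constant $1$ requires exploiting the exact algebraic form of the IX bias correction, as in Neu's Lemma~1 in \cite{Neu2015_implicit}. Once Stage~1 is established, Stages~2 and~3 are textbook exponential-martingale reasoning.
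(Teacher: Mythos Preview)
Your proposal is correct. Both you and the paper rely on Neu's concentration result as the core ingredient; you reconstruct the supermartingale scaffolding around his Lemma~1 explicitly (MGF bound $\Rightarrow$ supermartingale $\Rightarrow$ Markov $\Rightarrow$ union bound), while the paper simply invokes Neu's Corollary~1 as a black box. The one substantive addition in the paper's proof is the explicit handling of freezing: it introduces fictitious losses $\bar{\ell}_i^t$ (equal to $\ell_i^t$ for non-frozen arms and $0$ for frozen arms), applies Neu's result to those to get $\sum_t(\widehat{\ell}_i^t-\bar{\ell}_i^t)\le \log(d/\delta)/(2\zeta)$, and then observes $\widehat{\ell}_i^t=\widetilde{\ell}_i^t$ and $\bar{\ell}_i^t\le\ell_i^t$. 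Your setup achieves the same thing implicitly, since for frozen arms $w_i^t=0$ implies $I(t)\ne i$ almost surely and hence $\widetilde{\ell}_i^t=0$, so your Stage~1 MGF bound holds trivially in that case. The two routes are equivalent; the paper's reduction is slightly cleaner in that it avoids reopening Neu's argument, while yours makes the mechanism more transparent.

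One minor remark on your Stage~1: the scalar inequality $1+w_i^t(e^b-1)\le e^{b(w_i^t+\zeta)}$ does \emph{not} hold for arbitrary $b\in[0,2]$, only for the specific value $b=2\zeta\ell_i^t/(w_i^t+\zeta)\le 2\zeta/(w_i^t+\zeta)$; the proof uses this constraint essentially (via the inequality $\tfrac{z}{1+z/2}\le\log(1+z)$ with $z=2\zeta\ell_i^t/w_i^t$, then $\ell_i^t\le 1$). Your phrasing ``$b\in[0,2]$'' is a true range statement but should not be read as the domain over which the inequality is asserted.
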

\begin{proof}
Let $\bar{\ell}_i^t$ be fictitious losses that are equal to the actual losses for arms $i\notin F^t$ and $0$ for $i\in~F^t$. Corollary 1 in \cite{Neu2015_implicit} establishes that: $\sum_{t=1}^T \prn*{\widetilde{\ell}_i^t -\bar{\ell}_i^t}\leq \frac{\log(d/\delta)}{2\zeta}$ simultaneously for all $i$ with probability at least $1-\delta$. The lemma follows noting that $ \bar \ell_i^t\le \ell_i^t$ as the estimator has negative bias.
\end{proof}

\begin{lemma}[following from the analysis in Theorem 2.4 of \cite{prediction_book} or Theorem 3.1 of \cite{BubeckC12}]\label{lem:mwu_second_order_regret}
Exponential weights with learning rate $\eta$ applied on the estimated losses satisfies:
$$
\sum_t\sum_i p_i^t\widetilde{\ell_i^t}-\sum_t \widetilde{\ell_f^t} \leq \eta \sum_t \sum_i p_i^t \prn*{\widetilde{\ell_i^t}}^2+\frac{\log(d)}{\eta}
$$
\end{lemma}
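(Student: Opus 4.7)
The proof will follow the standard potential-function analysis of the multiplicative weights (Hedge) algorithm, adapted to the fact that the estimated losses $\widetilde{\ell}_i^t$ are nonnegative but can be unbounded above. The plan is to pick the log-sum-of-weights as potential and bound it from above step-by-step and from below at the final round in terms of any fixed comparator $f$.

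First I would set $\widetilde{L}_i^t = \sum_{s\leq t}\widetilde{\ell}_i^s$ (with $\widetilde{L}_i^0=0$) and define unnormalized weights $W_i^t = \exp(-\eta\widetilde{L}_i^{t-1})$ and the potential $\Phi^t = \ln\sum_i W_i^t$. By definition of the multiplicative weights update in the lemma, the normalized distribution is $p_i^t = W_i^t / \sum_j W_j^t$, and $\Phi^1 = \ln d$. For the lower bound, I drop all but the $f$-th term: $\Phi^{T+1} \geq \ln W_f^{T+1} = -\eta\widetilde{L}_f^T$. Thus $\Phi^{T+1} - \Phi^1 \geq -\eta\sum_t \widetilde{\ell}_f^t - \ln d$.

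Next I would bound the one-step change of the potential. Writing the ratio of successive potentials as an expectation under $p^t$,
\begin{equation*}
\Phi^{t+1}-\Phi^t = \ln\sum_i p_i^t \exp\bigl(-\eta\widetilde{\ell}_i^t\bigr).
\end{equation*}
The key inequality is $e^{-y}\leq 1-y+y^2$ valid for all $y\geq 0$ (proved by differentiating $g(y)=1-y+y^2-e^{-y}$ twice and noting $g(0)=g'(0)=0$ with $g''(y)=2-e^{-y}\geq 1$). Applying this with $y=\eta\widetilde{\ell}_i^t\geq 0$ gives
\begin{equation*}
\sum_i p_i^t e^{-\eta\widetilde{\ell}_i^t} \leq 1 - \eta\sum_i p_i^t \widetilde{\ell}_i^t + \eta^2 \sum_i p_i^t \bigl(\widetilde{\ell}_i^t\bigr)^2.
\end{equation*}
Then using $\ln(1+z)\leq z$ (the argument is at least $e^{-\eta\widetilde{\ell}_i^t}>0$, so $z>-1$) yields
\begin{equation*}
\Phi^{t+1}-\Phi^t \leq -\eta\sum_i p_i^t \widetilde{\ell}_i^t + \eta^2 \sum_i p_i^t \bigl(\widetilde{\ell}_i^t\bigr)^2.
\end{equation*}

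Finally I would telescope over $t=1,\dots,T$ and combine with the lower bound on $\Phi^{T+1}-\Phi^1$:
\begin{equation*}
-\eta\sum_t \widetilde{\ell}_f^t - \ln d \;\leq\; -\eta\sum_t \sum_i p_i^t \widetilde{\ell}_i^t + \eta^2\sum_t \sum_i p_i^t\bigl(\widetilde{\ell}_i^t\bigr)^2,
\end{equation*}
and dividing by $\eta$ and rearranging gives the claimed bound. There is no real obstacle: the only thing to watch is that the quadratic bound $e^{-y}\leq 1-y+y^2$ is valid for all $y\geq 0$ without any upper bound on $y$, which is exactly what lets this analysis handle the potentially large importance-weighted estimates $\widetilde{\ell}_i^t$ coming out of the freezing/exploration construction — the price paid shows up only in the second-order term $\eta\sum_t\sum_i p_i^t(\widetilde{\ell}_i^t)^2$.
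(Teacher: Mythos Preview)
Your proof is correct and is precisely the standard potential (log-sum-of-weights) argument that the paper defers to by citing \cite{prediction_book}; the paper does not supply its own proof of this lemma, so there is nothing to compare against beyond noting that your argument is the textbook one. The one point worth remarking is that the inequality $e^{-y}\le 1-y+y^2$ for $y\ge 0$ (rather than the sharper $1-y+y^2/2$) is exactly what is needed to match the constant in the lemma as stated, and, as you observe, it places no upper-bound requirement on $\eta\widetilde{\ell}_i^t$.
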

\begin{proof}[Proof of Theorem \ref{thm:green_ix}]
The proof follows the roadmap of the proof of Theorem \ref{thm:black_box_whp} but handles the suboptimal places of the black-box theorem's proof by applying Lemmas \ref{lem:estimated_losses_close} and \ref{lem:implicit_exploration}. We show that for each arm $f$, the guarantee holds with failure probability $\delta'=\delta/d$. Therefore the guarantee holds against all the arms $f$ simultaneously with probability at least $1-\delta$. More formally: 
\begin{align*}
    (1-\eps)\sum_t  \ell^t_{I(t)} &=(1-\eps)\sum_t \sum_i\prn*{w_i^t+\zeta}\cdot \widetilde{\ell_i^t} &&\text{by definition of $\widetilde{\ell_i^t}$}\\
    &\leq\frac{1-\eps}{1-\eps'}\sum_t \sum_i p_i^t \widetilde{\ell_i^t}+  \zeta\sum_t \sum_i \widetilde{\ell_i^t} &&\text{by Lemma \ref{lem:bounding_total_freezing}}\\
    &\leq \frac{1-\eps}{1-\eps'
    }\sum_t \widetilde{\ell_f^t} + \eta \sum_t \sum_i p_i^t \prn*{\widetilde{\ell_i^t}}^2+\frac{\log(d)}{\eta}+\zeta\sum_t \sum_i \widetilde{\ell_i^t} && \text{by Lemma  \ref{lem:mwu_second_order_regret}}\\
    &\leq \frac{1-\eps}{1-\eps'
    }\sum_t \widetilde{\ell_f^t} +(\eta+\zeta)\sum_t \sum_i\widetilde{\ell_i^t}+\frac{\log(d)}{\eta} && \text{as $\ell_i^t \le 1$ and $p_i^t\le w_i^t+\zeta$}\\
    &\leq \frac{1-\eps}{1-\eps'    }
    \sum_t \widetilde{\ell_f^t}+(\eta+\zeta)\sum_{t=1} d \widetilde{\ell}_f^t\\
    &\qquad + d(\eta+\zeta)\prn*{\frac{1}{\gamma}+\frac{\ln(1/\gamma)}{\eta}}+\frac{\log(d)}{\eta}&& \text{by Lemma \ref{lem:estimated_losses_close}}
\end{align*}
Now we use the strict negative bias of Lemma \ref{lem:implicit_exploration} to get that with probability at least $(1-\delta')$ we can continue the above inequalities as:
\begin{align*}
  (1-\eps)\sum_t  \ell^t_{I(t)} &\leq 
    \frac{1-\eps}{1-\eps'}\sum_t \ell_f^t+\frac{\log(d/\delta')}{2\zeta}+(\eta+\zeta)\sum_{t=1} d \ell_f^t \\&+d(\eta+\zeta)\prn*{\frac{1}{\gamma}+\frac{\ln(1/\gamma)}{\eta}+\frac{\log(d/\delta')}{2\zeta}}+\frac{\log(d)}{\eta}\\
    &\leq 
    \sum_t \ell_f^t+\frac{
6d\log(d^2/\delta)}{\eps} +d\prn*{1+2\ln(2d/\eps)+\log(d^2/\delta)}
\end{align*}
where the final inequality is derived by replacing the parameters $\gamma$, $\zeta$, $\eta$, and $\delta'$, and using the fact that $\frac{1-\eps}{1-\gamma d}+(\eta+\zeta)d\leq 1$ for the selection of the parameters.
\end{proof}

\subsection{Fixed feedback graphs using clique partition}\label{ssec:clique_partition}
We now extend the above guarantee to the graph-feedback setting providing a guarantee that scales with the minimum clique partition number $\kappa(G)$ when the graph is fixed. This generalizes the bandit setting where the graph is fixed and corresponds to the empty set. Compared to the black-box result of the previous section, the dependence on $L^{\star}$ here is optimal, however the analysis relies on the graph being fixed rather than evolving, and the bound scales with the minimum clique partition rather than the independence number.

To achieve the $\widetilde{\bigO}\prn*{\sqrt{\kappa(G) L^{\star}\log(d)}}$ regret bound, we use the black box framework with exponential weights as the full information engine for the estimated losses along with freezing.  The resulting algorithm  GREEN-IX-Graph combines features of the black box framework of Algorithm \ref{alg:black_box} with the bandit algorithm GREEN-IX, and adds an additional freezing level to keep estimated losses of all arms close. We use three
freezing thresholds:  $\gamma$ and $\gamma'$ like in Algorithm \ref{alg:black_box}, but define $\gamma =\Theta{(\epsilon/\kappa)}$ using the clique-partition number $\kappa$ in place of $\alpha$, and add an additional $\beta=\Theta{(\epsilon/d)}$ on the probabilities of arms being played (rather than observed). 
Like GREEN-IX, its graph version GREEN-IX-Graph uses the exponential weights algorithm as its 
full information learning algorithm with learning rate $\eta =\Theta(\epsilon/\kappa)$, and updates estimated losses via implicit exploration with a $\zeta=\Theta(\epsilon/\kappa)$ using the formula
$\widetilde{\ell}_i^t=\frac{\ell_i^t}{(W_i^t+\zeta)}$  if $i\in N_{I(t)}^t$ and 0 otherwise. The algorithm is formally defined in Algorithm \ref{alg:green_ix_graph}.

\begin{algorithm}[!h]
\caption{GREEN-IX-Graph}
\label{alg:green_ix_graph}
\begin{algorithmic}[1]
\REQUIRE Number of arms $d$, learning parameter $\epsilon'$, an estimate on $\kappa\ge\kappa(G)$, freezing thresholds $\beta:=\frac{\epsilon'}{d}$, $\gamma:=\frac{\epsilon'}{\kappa}$, $\gamma':=\frac{\gamma}{3}$, implicit exploration parameter $\zeta=\frac{\epsilon'}{6\kappa}$, full-feedback learning rate $\eta=\zeta$.
\STATE Initialize $p_i^1$ for arm $i$ uniformly ($p_i^t=1/d$) and set $t=1$ (round 1).
\FOR{$t=1$ \TO $T$}
\STATE Freeze arms whose probability is below $\beta$ to obtain:
$$
D^t=\crl*{i: p_i^t<\beta}
$$
\STATE Freeze arms whose observation probability is below $\gamma$, to obtain:
$$
F_0^t= \crl*{i
: \sum_{j \in N_i^t\backslash D^t
} p_j^t< \gamma}
$$

\STATE Recursively freeze remaining arms if their probability of being observed by non-frozen arms is below $\gamma'$ to obtain 
$F^t= D^t\cup\bigcup_{k\geq 0}F_{k}^t $ where, 
$$
F_k^t= \crl*{i\notin \prn*{\bigcup_{m=0}^{k-1} 
F_m^t}: \sum_{j\in \prn*{N_i^t\setminus\bigcup_{m=0}^{k-1} 
F_m^t}} p_j^t < 
\gamma'
}
$$
\STATE Normalize the probabilities of unfrozen arms so that they form a distribution.
$$
w_i^t=\begin{cases}
0 & \text{ if } i \in F^t \\
\frac{p_i^t}{1-\sum_{j\in 
F^t} p_j^t} & \text{ else }
\end{cases}
$$
\STATE Draw arm $I(t)\sim w^t$ and incur loss $\ell_{I(t)}^t$.
\STATE  Compute biased estimate of losses via implicit exploration with parameter $\zeta$:
$$
\widetilde{\ell}_i^t=\begin{cases}\frac{\ell_i^t}{W_i^t+\zeta} & \text{ if } i\in N_{I(t)}^t \backslash  F^t\\ 
0 & \text{ otherwise }
\end{cases}
$$
where $W^t_i=\sum_{j \in N^t_i} w_j^t$
\STATE Update $p_i^{t+1}$ via exponential weights update with learning rate $\eta$:~~ $p_i^{t+1}\propto p_i^t \exp(-\eta \widetilde{\ell_i^t})$. 
\ENDFOR
\end{algorithmic}
\end{algorithm}
We now show an optimal high-probability small loss guarantee using the clique-partition number:
\begin{theorem}\label{thm:clique_partition}
\emph{GREEN-IX-Graph} (Algorithm \ref{alg:green_ix_graph}) run with learning parameter $\eps'=\frac{\eps}{5}$ has the following regret bound with probability at least $(1-\delta)$:
\begin{align*}
  \regret(f)=\widetilde{\bigO}\prn*{ \sqrt{\kappa L^{\star} \log(d/\delta)}+\log(d/\delta)}
\end{align*}
\end{theorem}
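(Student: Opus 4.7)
The plan is to establish an $\epsilon$-approximate regret bound of the form $O\bigl(\kappa \log(d/\delta)/\epsilon\bigr)$ with probability $1-\delta$, and then lift this to the claimed $\widetilde{O}\bigl(\sqrt{\kappa L^\star \log(d/\delta)}\bigr)$ regret using the doubling trick (Lemma~\ref{lem:double} with $q=1$ and $\Psi(\delta) = O(\kappa \log(d/\delta))$). This parallels how Corollary~\ref{cor:green_ix} is derived from Theorem~\ref{thm:green_ix} in the pure bandit case, but now needs to exploit the clique-partition structure of the fixed graph.

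First I would establish an analog of Lemma~\ref{lem:bounding_total_freezing} controlling the total frozen probability. The key change in Claim~\ref{Claim_3gamma_frozen} is that, instead of bounding the $\gamma$-frozen mass via a maximal independent set, one uses the clique partition: in each of the $\kappa$ cliques of the partition, the total probability in the clique is at most the observation probability of any arm in the clique (since the clique is in each arm's neighborhood), which is below $\gamma$; hence the $\gamma$-frozen mass is at most $\kappa\gamma$. Claim~\ref{Claim_2} applies verbatim for the propagation step, and the $\beta$-freezing contributes at most $d\beta$. Together these give total frozen mass $O(\epsilon')$, so $p_i^t \ge (1-O(\epsilon'))w_i^t$ for every unfrozen arm.

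Next, I would apply multiplicative-weights second-order regret (Lemma~\ref{lem:mwu_second_order_regret}) to the estimated losses to obtain $\sum_t\sum_i p_i^t \widetilde{\ell}_i^t \le \sum_t \widetilde{\ell}_f^t + \eta \sum_t\sum_i p_i^t(\widetilde{\ell}_i^t)^2 + \log(d)/\eta$. The main technical obstacle is converting this into a $\kappa$-dependent (rather than $d$-dependent) bound. Using $p_i^t \le W_i^t+\zeta$ and $\ell_i^t \le 1$ gives the deterministic inequality $\sum_i p_i^t(\widetilde{\ell}_i^t)^2 \le \sum_i \widetilde{\ell}_i^t$, exactly as in the bandit case; but a naive application of the closeness lemma (Lemma~\ref{lem:estimated_losses_close}, adapted to the $\beta$-threshold on $p_i^t$) to every pair of arms only recovers a factor of $d$. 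The plan is to instead combine the clique-partition identity $\sum_i p_i^t/(W_i^t+\zeta) \le \kappa$ with the MW-induced closeness within each clique: within a single clique all unfrozen arms observe each other, so the estimated losses stay close and can be aggregated into a single clique representative, leaving only $\kappa$ independent contributions in the cumulative sum $\sum_t\sum_i \widetilde{\ell}_i^t$. This is the crux of the analysis and where clique-partition geometry does the real work.

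Finally, I would bound the comparator's cumulative estimated loss by its actual loss using the implicit-exploration guarantee (Lemma~\ref{lem:implicit_exploration}), giving $\sum_t \widetilde{\ell}_f^t \le \sum_t \ell_f^t + \log(d/\delta)/(2\zeta)$ with probability $1-\delta/d$ simultaneously for all $f$. A concentration step via Lemma~\ref{lem:black_box_high_probability} transfers between the algorithm's realized loss $\sum_t \ell_{I(t)}^t$ and its conditional expectation $\sum_t\sum_i w_i^t \ell_i^t$, and Step~1 converts $\sum_i w_i^t \widetilde{\ell}_i^t$ to $\sum_i p_i^t \widetilde{\ell}_i^t$. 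Plugging in $\eta=\zeta = \epsilon'/(6\kappa)$, $\gamma = \epsilon'/\kappa$, $\beta = \epsilon'/d$, and $\epsilon' = \epsilon/5$, all error terms telescope into $O(\kappa \log(d/\delta)/\epsilon)$, yielding the desired high-probability $\epsilon$-approximate regret bound. A final application of Lemma~\ref{lem:double} converts this into the stated $\widetilde{O}(\sqrt{\kappa L^\star \log(d/\delta)})$ bound on the actual regret, with the doubling of the guess for $\kappa$ itself contributing only a logarithmic overhead. The central difficulty throughout is the second-order control step: replacing $d$ by $\kappa$ there is exactly what distinguishes this theorem from a direct application of Theorem~\ref{thm:green_ix}.
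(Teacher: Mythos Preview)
Your overall architecture is right, and you correctly identify the second-order control as the crux. However, there are two genuine gaps.

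First, the step you flag as the main obstacle is indeed the main obstacle, but your proposed fix does not work as stated. After the MW second-order bound, the quantity one must control is the \emph{weighted} sum $\sum_t \sum_i (p_i^t/W_i^t)\,\widetilde{\ell}_i^t$, not the unweighted $\sum_t\sum_i \widetilde{\ell}_i^t$; your coarser bound $\sum_i p_i^t(\widetilde{\ell}_i^t)^2 \le \sum_i \widetilde{\ell}_i^t$ throws away exactly the structure needed. Closeness of cumulative estimated losses (Lemma~\ref{lem:estimated_losses_close}) cannot collapse the weighted sum to $\kappa$ representatives because the weights $p_i^t/W_i^t$ are time-varying and cannot be factored out of $\sum_t$. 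The paper's mechanism is different: it first shows $W_i^t \ge P_c^t/3$ for any unfrozen arm $i$ in clique $c$ (this is where the dual threshold $\gamma,\gamma'$ is used), so that $\sum_i (p_i^t/W_i^t)\widetilde{\ell}_i^t \le 3\sum_{c} \sum_{i\in c}(p_i^t/P_c^t)\widetilde{\ell}_i^t$. The key observation is then that $p_i^t/P_c^t$ are precisely the probabilities of a \emph{fictitious multiplicative-weights algorithm restricted to clique $c$}, so Lemma~\ref{lem:mwu_second_order_regret} can be applied a second time, within each clique, to give $\sum_t \sum_{i\in c}(p_i^t/P_c^t)\widetilde{\ell}_i^t \le 2\sum_t \widetilde{\ell}_{f(c)}^t + O(\log(d)/\eta)$ for an arbitrary representative $f(c)\in c$. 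Only after this second MW application does one invoke closeness (via the $\beta$-threshold) to tie each $\sum_t \widetilde{\ell}_{f(c)}^t$ to $\sum_t \widetilde{\ell}_f^t$. Your ``aggregate into a clique representative via closeness'' is the right destination but skips the within-clique MW step that actually gets you there.

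Second, you implicitly reuse the bandit identity $\sum_t \ell_{I(t)}^t = \sum_t \sum_i (w_i^t+\zeta)\widetilde{\ell}_i^t$, but this is \emph{false} in the graph setting: several arms receive nonzero estimated loss each round, so $\sum_i (W_i^t+\zeta)\widetilde{\ell}_i^t$ equals the sum of all observed losses, not $\ell_{I(t)}^t$. Your plan jumps from $\sum_t\sum_i w_i^t \ell_i^t$ to quantities in $\widetilde{\ell}$ without justification. The paper closes this gap by writing $\sum_i p_i^t \ell_i^t = \En_{t-1}\bigl[\sum_i (p_i^t/W_i^t)(W_i^t+\zeta)\widetilde{\ell}_i^t\bigr]$ and applying a \emph{second} concentration (Lemma~\ref{lem:black_box_high_probability}) to this random variable; the point is that its range is at most $\alpha$ (via $\sum_i p_i^t/W_i^t \le \alpha$), which keeps the resulting error at $O(\alpha\log(1/\delta)/\epsilon')$ rather than $O(\log(1/\delta)/(\gamma'\epsilon'))$.
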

\begin{proof}
The novel idea of the proof is to think of the exponential weight algorithm as running on two levels: selecting a clique $c$ in a clique partition on the top level, and then selecting an arm $i\in c$ in the clique. At the top level there are $\kappa$ options to choose from, and at the bottom level we are in a full information setting, any node $i\in c$ can observe all other nodes in $c$. 

To make this analysis work for clique partition, we need to overcome a few difficulties.
\begin{itemize}
\item We are running the exponential weight algorithm, importance sampling and freezing on the real graph $G$: the algorithm is not explicitly using the clique-partition $C$, instead we break the analysis into the two level structure suggested above. 
\item To show the high probability guarantee we show that the expected loss is closely tied with the loss using expected losses observed by the full information algorithm. In particular, we show that the algorithm's loss $\sum_t \ell_{I(t)}^t$ is very close to the following expression: $\sum_t \sum_i \frac{w_i^t}{W_i^t}(W_i^t+\zeta)\widetilde{\ell_i^t}$. It is not hard to see that the two expressions have the same expectation: $\sum_t \sum_i w_i \ell_i$. To show that they remain very close we use Lemma \ref{lem:black_box_high_probability} with $x_t=\ell_{I(t)}^t$ as well as  $x'_t=\sum_i\frac{w_i^t}{W_i^t}(W_i^t+\zeta)\widetilde{\ell_i^t}$. Note that $(W_i^t+\zeta)\widetilde{\ell_i^t}$ is either 0 or $\ell_i^t$, so at most 1, and hence we get that $x'_t \le \sum_i\frac{w_i^t}{W_i^t} \le \alpha(G)$, the independence number of $G$ as shown by \cite{AlonCGM13} for any probability distribution $w_i^t$.
\item We add the third threshold $\beta$ to make sure that the estimated losses of all arms remain close, a property used in low-loss guarantees for the case of bandits.
\end{itemize}

Based on this idea the proof follows a similar structure to the proofs of Theorems \ref{thm:black_box_whp} and \ref{thm:green_ix} but needs some extra care in i) introducing an extra freezing threshold based on probability of being played, ii) applying the concentration bounds, and iii) appropriately tackling the resulting second-order term. More formally, we first show the approximate regret guarantee compared to any arm $f$ with probability at least $1-\delta'$ where $\delta'=\frac{\delta}{d}$. To facilitate the exposition, we present it in steps.

\textbf{Concentration on actual losses.} We proceed by upper bounding the loss of the algorithm by a sum of weighted estimated losses. In that, we apply Lemma \ref{lem:black_box_high_probability} two times. First we relate the loss of the algorithm to its expected performance. With probability $(1-\frac{\delta'}{3})$ we have:
\begin{equation}
    \sum_t \ell^t_{I(t)} \le (1+\eps')\sum_t \sum_i w^t_i \ell^t_i + \frac{(1+\epsilon')\ln(3/\delta')}{\epsilon'}
\end{equation}
\textbf{Bounding total frozen probability.} Since there are at most $d$ arms, the total  probability mass that is frozen at line 3 is at most $\beta \cdot d= \frac{\eps'}{5}$. In the subsequent freezing steps, by Lemma \ref{lem:bounding_total_freezing} is at most $4\gamma \cdot \alpha\leq \frac{4\eps'}{5}$. Therefore the total probability frozen is $\eps'$ and for any non-frozen arm $i$: $w_i^t\leq \frac{p_i^t}{1-\eps'}$.
\begin{equation}
    (1+\eps')\sum_t \sum_i w^t_i \ell^t_i\leq \frac{1+\eps'}{1-\eps'}\sum_t \sum_i p^t_i \ell^t_i
\end{equation}
\textbf{Reduction to estimated losses.}
We move forward by connecting the performance of the full information algorithm to an analogous expression on estimated losses. The following expression holds deterministically by analyzing the estimated loss term:
\begin{equation}
    \frac{1+\eps'}{1-\eps'}\sum_t\sum_i p_i^t\ell_i^t = \frac{1+\eps'}{1-\eps'}\sum_t\En\brk*{\sum_i \frac{p_i^t}{W_i^t}(W_i^t+\zeta)\cdot \widetilde{\ell_i^t}}
\end{equation}
\textbf{Concentration on estimated losses.} We now wish to connect this expectation to its realization. Unfortunately estimated losses of different arms are conditionally dependent for the same round. Therefore we define the whole summation over the number of arms as our random variable. Note that $(W_i^t+\zeta)\cdot\widetilde{\ell_i^t}\leq 1$ by definition of the estimated loss and
$\sum_{i:W_i^t\neq 0} \frac{p_i^t}{W_i^t}\leq \sum_{i:W_i^t\neq 0} \frac{p_i^t}{P_i^t}\leq \alpha$ as observed in \cite{AlonCGM13}. Therefore we can apply the converse direction of Lemma \ref{lem:black_box_high_probability} with range of values at most $\alpha$, and obtain with probability $(1-\frac{\delta'}{3})$:
\begin{eqnarray}
    \nonumber\frac{1+\epsilon'}{1-\epsilon'}\sum_t \En\brk*{\sum_i \frac{p_i^t}{W_i^t}(W_i^t+\zeta)\widetilde{\ell_i^t}}\le \frac{(1+\epsilon')}{(1-\epsilon')^2}\prn*{\sum_t \sum_ip_i^t\widetilde{\ell_i^t}+\sum_t \sum_i \frac{p_i^t}{W_i^t}\zeta\widetilde{\ell_i^t }}+\\ \alpha\cdot \frac{(1+\epsilon')^2\ln(3/\delta')}{(1-\epsilon')^2\epsilon'}
\end{eqnarray}
\textbf{Second-order bound of exponential weights.} We now relate the performance of the full information algorithm on estimated losses to the cumulative estimated loss of a comparator $f$ via Lemma \ref{lem:mwu_second_order_regret}:
\begin{equation}
     \sum_t \sum_i p_i^t \widetilde{\ell_i^t }\leq \sum_t \widetilde{\ell_f^t}+\eta\sum_t \sum_i p_i^t (\widetilde{\ell_i^t})^2+\frac{\log(d)}{\eta}\le
     \sum_t \widetilde{\ell_f^t}+\eta \sum_t \sum_i \frac{p_i^t}{W_i^t} \widetilde{\ell_i^t}+\frac{\log(d)}{\eta}
\end{equation}
\textbf{Decomposing across cliques.} What is left is to relate the double summation of the weighted estimated losses to the estimated loss of the comparator. For that we first decompose across cliques. Let $C$ be such a clique partition of minimum size. For a clique $c\in C$, let $P_c^t$ be the total probability of the nodes in the clique. Note that, for any such node $i\in c$ which is non-frozen at round $t$, it holds that $W_i^t\geq P_i^t/3$. This is because we are in one of two scenaria: i) no node in the clique is frozen; in this case $W_i^t\geq P_i^t \geq P_c^t$, or ii) some node is frozen which implies that $P_c^t<\gamma$; however $i$ is not frozen and therefore $W_i^t\geq \gamma'=\frac{\gamma}{3}\geq P_c^t$. We therefore obtain:
\begin{equation}\label{eq:clique_decomposition}
    \sum_t \sum_i \frac{p_i^t}{W_i^t}\widetilde{\ell_i^t} = 
   \sum_t \sum_{c\in C} \sum_{i\in c} \frac{p_i^t}{W_i^t}\widetilde{\ell_i^t} 
    \le 3\cdot \sum_t \sum_{c\in C} \sum_{i\in c} \frac{p_i^t}{P_c^t}\widetilde{\ell_i^t} 
    = 3\cdot \sum_{c\in C} \sum_t \sum_{i\in c} \frac{p_i^t}{P_c^t}\widetilde{\ell_i^t}
\end{equation}
\textbf{Fictitious exponential weights within each clique.}
The key insight of the proof is that the quantity $\frac{p_i^t}{P_c^t}$ appearing in the previous inequality is the probability of playing arm $i$ if we commit to play one arm from the clique. The sum for a clique in the right most expression is the expected loss of the expoential weight algorithm running on the clique. This is because, with exponential weights:
$$
\frac{p_i^t}{P_c^t}=\frac{\exp\prn*{-\eta \sum_{s=1}^t\widetilde{\ell_i^s}}}{\sum_{j\in c}\exp\prn*{-\eta \sum_{s=1}^t\widetilde{\ell_j^s}}}.
$$
As a result we can again apply Lemma \ref{lem:mwu_second_order_regret}, utilizing that estimated losses are at most $1/\gamma'$. Let $f(c)$ denote  an arbitrary representative of the clique $c\in C$. Then:
\begin{equation*}
    \sum_t\sum_{i\in c} \frac{p_i^t}{P_c^t}\widetilde{\ell_i^t}\leq \sum_t \widetilde{\ell}_{f(c)}^t+\eta \sum_t \sum_{i\in c}\frac{p_i^t}{P_c^t}(\widetilde{\ell_i^t})^2+ \frac{\log(d)}{\eta}
    \leq \sum_t \widetilde{\ell}_{f(c)}^t+\frac{\eta}{\gamma'} \sum_t \sum_{i\in c}\frac{p_i^t}{P_c^t}\widetilde{\ell_i^t}+ \frac{\log(d)}{\eta}
\end{equation*}
This implies (since $\gamma'=2\eta$):
\begin{equation}
    \sum_t \sum_i \frac{p_i^t}{P_c^t}\widetilde{\ell_i^t}\leq \frac{\sum_t \widetilde{\ell}_{f(c)}^t+\frac{\log(d)}{\eta}}{1-\frac{\eta}{\gamma'}}\leq 2\sum_t \widetilde{\ell}_{f(c)}^t+2\frac{\log(d)}{\eta}.
\end{equation}
\textbf{Estimated losses close to each other.}
We now use the same idea of Lemma \ref{lem:estimated_losses_close} to show that the cumulative estimated losses of the representatives of the cliques are close to the cumulative estimated loss of the comparator $f$. To this end, for any $c\in C$, we define $\tau_c=\max\{t\leq T: p_{f(c)}^t>\beta\}$. 
Beginning from: $p_{f(c)}^{\tau_c}>\beta$ and applying the arguments in the proof of Lemma \ref{lem:estimated_losses_close} and the fact that estimated losses are still bounded by $1/\gamma'$ (since they are weighted by the probability of observation), we obtain:
\begin{equation}\label{eq:MW-close}
    \sum_t \widetilde{\ell}_{f(c)}^t\leq \sum_t \widetilde{\ell}_f^t+\frac{1}{\gamma'}+\frac{\ln(\frac{1}{\beta})}{\eta}
\end{equation}
\textbf{Concentration from estimated to actual loss via implicit exploration.} Last we need to relate the cumulative estimated loss of the comparator $f$ to its actual loss. This occurs directly by Lemma \ref{lem:implicit_exploration}, with probability at least $1-\frac{\delta'}{3}$
\begin{equation}
    \sum_{t=1}^T \widetilde{\ell_f^t}\leq \sum_{t=1}^T \ell_f^t+ \frac{\log(3/\delta')}{2\zeta} 
\end{equation}
\textbf{Putting everything together.}
Combining the numbered inequalities and setting $\eta=\zeta=\gamma'/2$ and $\epsilon'$ such that $\frac{(1+\eps')(1+6\zeta+6\eta}{(1-\eps')^2}\leq \frac{1}{1-\eps}$, and applying union bound on the failure probabilities concludes the proof for approximate regret. The proof for regret then follows doubling trick arguments similar as in previous theorems.
\end{proof}

\textbf{LP relaxation: Beyond Clique Partition.} In the proof above, we can replace the clique-partition number $\kappa(G)$ with the smaller linear programming relaxation for clique partition $\kappa_f(G)$. Unfortunately this fractional clique partition number is also computationally unfeasible to obtain. 

\begin{align*}
\kappa_f(G)= \min \sum_{c \textrm{ Clique in }G} y_c\\
\mathrm{s.t.}~ \forall i \in [d]~~&~~~~ \sum_{c \ni i} y_c \ge 1\\
 \forall c \textrm{ Clique in }G~~&~~~~ y_c \ge 0
\end{align*}
We call a solution  $y$ to the above linear program a fractional clique-partition, and  will use $y$'s to represent the solution.

We note that Algorithm \ref{alg:green_ix_graph} only needed a bound on $\kappa$ and did not use the clique-partition in the algorithm. We claim that we can also use the fractional clique-partition number $\kappa_f$ in place of $\kappa$ and the analogous theorem would hold:

\begin{theorem}\label{thm:f_clique_partition}
Algorithm \ref{alg:green_ix_graph} run with an appropriate parameter $\eps'$ using $\kappa_f$ in place of $\kappa$ above, has the following regret bound with probability at least $(1-\delta)$:
$$
  \regret(f)=\widetilde{\bigO}\prn*{ \sqrt{\kappa_f L^{\star} \log(d/\delta)}}
$$
\end{theorem}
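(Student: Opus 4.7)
The plan is to adapt the proof of Theorem~\ref{thm:clique_partition} by replacing the integer clique partition $C$ with the fractional solution $y$ to the clique-cover LP, keeping every other ingredient (freezing rule, implicit exploration, MWU update) unchanged; only the internal parameters $\gamma,\gamma',\eta,\zeta$ are rescaled using $\kappa_f$ in place of $\kappa$. As Algorithm~\ref{alg:green_ix_graph} only needs a scalar bound on the clique-cover quantity, no change to the algorithm itself is required beyond this rescaling.

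The only step of the proof of Theorem~\ref{thm:clique_partition} that explicitly invokes the partition structure is the ``Decomposing across cliques'' inequality~\ref{eq:clique_decomposition}. I would replace this with a weighted decomposition that uses the LP constraint $\sum_{c\ni i} y_c \geq 1$:
\[
\sum_i \frac{p_i^t}{W_i^t}\widetilde{\ell_i^t}
\;\leq\; \sum_i \Bigl(\sum_{c\ni i} y_c\Bigr)\frac{p_i^t}{W_i^t}\widetilde{\ell_i^t}
\;=\; \sum_{c} y_c \sum_{i\in c}\frac{p_i^t}{W_i^t}\widetilde{\ell_i^t}.
\]
The subsequent ``fictitious multiplicative weights within each clique'' argument is purely local to a single clique $c$ in the support of $y$: it only uses that $W_i^t \geq P_c^t/3$ for every unfrozen $i\in c$, a property that follows from applying the freezing thresholds $\gamma,\gamma'$ to the actual graph $G$ and is insensitive to how the cliques combine into a (fractional) cover. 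Hence the per-clique bound $\sum_t\sum_{i\in c}\frac{p_i^t}{P_c^t}\widetilde{\ell_i^t} \leq 2\sum_t\widetilde{\ell}_{f(c)}^t + 2\log(d)/\eta$ carries over verbatim for every $c$ with $y_c>0$.

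I then multiply each per-clique bound by $y_c$ and sum. Lemma~\ref{lem:estimated_losses_close} applied to each pair $(f(c),f)$ converts $\sum_t \widetilde{\ell}_{f(c)}^t$ into $\sum_t \widetilde{\ell}_f^t$ plus the same additive slack as in the integer case, and the overall weighted sum now scales by $\sum_c y_c = \kappa_f$ rather than $|C|=\kappa$. All the remaining steps --- the concentration step producing an additive term proportional to the independence number $\alpha$, the MWU second-order bound, the implicit-exploration bias control from Lemma~\ref{lem:implicit_exploration}, and the freezing-mass control from Lemma~\ref{lem:bounding_total_freezing} --- go through unchanged, since each $\alpha$ contribution is dominated by $\kappa_f$ via the LP duality $\alpha\leq\kappa_f$ between fractional clique cover and fractional independent set. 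Setting $\gamma=\Theta(\epsilon/\kappa_f)$ and $\eta=\zeta=\Theta(\epsilon/\kappa_f)$ collapses the second-order MWU terms exactly as before, yielding the desired $\epsilon$-approximate regret, and the doubling trick of Lemma~\ref{lem:double} then upgrades this to the claimed $\widetilde{O}(\sqrt{\kappa_f L^\star \log(d/\delta)})$ regret bound.

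The main obstacle I expect is bookkeeping rather than any genuinely new idea: one must check that every place in the integer proof that counted cliques uniformly can be absorbed into a single multiplicative $\kappa_f$ factor, and confirm that the freezing cascade (Claims~\ref{Claim_3gamma_frozen} and~\ref{Claim_2}) still yields total frozen probability $O(\epsilon')$ with $\gamma=\epsilon'/\kappa_f$. The latter is immediate because that cascade only depends on $\alpha(G)$, which is again bounded by $\kappa_f$. The fractional clique-cover LP thus provides the precise replacement needed to transport the entire argument from $\kappa$ to $\kappa_f$.
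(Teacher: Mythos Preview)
Your proposal is correct and takes essentially the same approach as the paper: the paper's proof sketch also identifies Eq.~\eqref{eq:clique_decomposition} as the only step requiring modification and replaces the unweighted sum over cliques by the $y_c$-weighted sum using the LP constraint $\sum_{c\ni i} y_c \ge 1$, with the rest of the argument carried over verbatim. Your additional bookkeeping (e.g., invoking $\alpha \le \kappa_f$ via LP duality to control the freezing mass and the concentration term) is sound and simply fills in details the paper leaves implicit.
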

\begin{proof}[Proof Sketch]
The proof is completely analogous to the proof of Theorem \ref{thm:clique_partition}: the only change occurs in Eq. \eqref{eq:clique_decomposition} where one needs to
replace summing over cliques $\sum_{c\in C}$ in each sum with a weighted sum of the cliques used in the clique partition $\sum_{c \in C} y_c$. The first equation now becomes an inequality
$$
    \sum_t \sum_i \frac{p_i^t}{W_i^t}\widetilde{\ell_i^t} \le  
   \sum_t \sum_{c\in C} y_c\sum_{i\in c} \frac{p_i^t}{W_i^t}\widetilde{\ell_i^t} 
   $$
due to the constraint that $\sum_{c \ni i} y_c \ge 1$. The rest of the proof follows as before. 
\end{proof}

\section{Semi-bandits, contextual bandits, and shifting comparators} \label{sec:applications}
% !TEX root = main.tex
In this section, we extend our framework to three important applications defined in Section \ref{ssec:appl_models}. In Section~\ref{ssec:semi_bandits_black_box}, we build on Section~\ref{sec:black_box} to provide a black-box framework for combinatorial semi-bandits (modeling settings such as online routing) and subsequently also show the first high-probability optimal small-loss bound for this setting. In Section~\ref{ssec:contextual_bandits_black_box}, we discuss contextual bandits (including applications with infinite comparator classes). Finally, in Section~\ref{ssec:shifting_comparators}, we study learning against shifting comparators and discuss the consequences of our results to game theory.

\subsection{Combinatorial semi-bandits}\label{ssec:semi_bandits_black_box}
\newcommand{\mathsc}[1]{{\normalfont\textsc{#1}}}

To model combinatorial semi-bandits as a variant of our feedback graph framework, we construct a bipartite graph with nodes $\mathcal{F}$ and $\mathcal{E}$, and connect strategies $f$ to the elements included in $f$. We note that this graph does not need to be explicitly maintained by the algorithm as we discuss below and elaborate upon in Appendix \ref{app:sampling}.

Similarly to Section~\ref{sec:black_box}, we provide a reduction from full information to partial information for this setting. The full-information algorithm takes as input a loss vector (corresponding to losses of all elements in $\mathcal{E}$) and outputs a probability distribution over all actions in $\mathcal{F}$. In particular, it runs on estimated losses created by importance sampling as before and induces, at round $t$, a probability distribution $p^t$ on the set of strategies $\mathcal{F}$ as only strategies can be selected and not individual elements. We assume a bound on the expected approximate regret of $B\prn*{\eps,T,\mathcal{F}}$ for the full information algorithm when losses are in $[0,1]$. This will scale linearly with the magnitude of losses $L$ of an action $f\in\mathcal{F}$.
We apply importance sampling and freezing to the elements $e\in \mathcal{E}$. The probability of observing an element is the sum of the probabilities of the adjacent strategies. This shows the leverage freezing offers in bounding the number of samples required. We modify the freezing process to 
freeze elements when observed with probability less than $\gamma$, and then freeze all strategies that contain a frozen element. We note that computationally efficient algorithms~\footnote{These algorithms can be implemented efficiently with a linear optimization oracle by adding independent perturbation on the element set $\mathcal{E}$. In contrast, algorithms like Exponential Weights need to keep a weight for each action in $\mathcal{F}$.} such as \emph{Follow the Perturbed Leader} \cite{Hannan,KalaiVempala} do not output directly the probabilities of each strategy but only an action drawn from the corresponding distribution. Nevertheless, we can still use these algorithms as we can estimate these probabilities (see discussion after Theorem~\ref{thm:semisub}).

The reduction (Algorithm~\ref{alg:black_box_semibandits}) is similar to the one of Algorithm \ref{alg:black_box}. In the corresponding initial step and recursive process, we only freeze nodes that are in the set $\mathcal{E}$ if their observation probability is below the threshold~\footnote{This observation probability can be computed by resampling strategies multiple times as briefly discussed below and elaborated upon in Appendix~\ref{app:sampling}.} and apply a single threshold $\gamma=\frac{\eps'}{\abs{\mathcal{E}}}$ for all the recursive steps (instead of multi-thresholding). We subsequently freeze any node in $\mathcal{F}$ that is adjacent to a frozen node in $\mathcal{E}$, and repeat the recursive process until no unfrozen element $e$ has probability of observation smaller than $\gamma$. We refer to the set of frozen elementsand strategies as $F^t_{\mathsc{Elem}}$ and $F^t_{\mathsc{Str}}$ respectively. After the freezing process, the final probability distribution $w^t$ is derived again via a renormalization on the non-frozen strategies (as in step \ref{step_normalize_black_box} of Algorithm \ref{alg:black_box}).
\begin{algorithm}[!h]
\caption{Black-box combinatorial semi-bandit freezing algorithm}
\label{alg:black_box_semibandits}
\begin{algorithmic}[1]
\REQUIRE Full information algorithm $\mathcal{A}$, element and strategy sets $\mathcal{E}$ and $\mathcal{F}$, learning parameter $\epsilon'$, freezing threshold $\gamma:=\frac{\epsilon'}{\abs{\mathcal{E}}}$.
\STATE Initialize $p_i^1$ for arm $i$ based on the initialization of $\mathcal{A}$ and set $t=1$ (round 1). 
\FOR{$t=1$ \TO $T$}
\STATE  Initialize the freezing sets $F_{\mathsc{Elem}}^t=\emptyset$ and $F_{\mathsc{Str}}^t=\emptyset$ .
\STATE Recursively freeze elements
observed with probability below $\gamma$ and actions containing them:
\begin{align*}
F_{\mathsc{Elem}}^t &\gets F_{\mathsc{Elem}}^t\cup \crl*{e\in\mathcal{E}: P_e^t(F_{\mathsc{Str}}^t)<\gamma}\\
F_{\mathsc{Str}}^t &\gets F_{\mathsc{Str}}^t\cup \crl*{i\in\mathcal{F}: \exists e\in\mathcal{E}\cap F_{\mathsc{Str}}^t 
\text{ s.t. }e\in i}.
\end{align*}
estimating $P_e^t(F_{\mathsc{Str}}^t)=\sum_{i\in\mathcal{F}\backslash F_{\mathsc{Str}}^t, e\in i}p_i^t$ using sampling described in Appendix~\ref{app:sampling}.
\STATE Normalize the probabilities of unfrozen actions $i\in \mathcal{F}$ so that they form a distribution.
$$
w_i^t=\begin{cases}
0 & \text{ if } i \in F_{\mathsc{Str}}^t \\
\frac{p_i^t}{1-\sum_{j\in 
F^t} p_j^t} & \text{ else }
\end{cases}
$$
\STATE Draw arm $I(t)\sim w^t$ and incur loss $\ell_{I(t)}^t$.
\STATE Compute estimated loss for all elements $e\in\mathcal{E}$:
$$
\widetilde{\ell}_e^t=\begin{cases}\frac{\ell_e^t}{W_e^t} & \text{ if } e\in I(t)\\
0 & \text{ else }
\end{cases}.
$$
\vspace{-0.2em}
where $W^t_e=\sum_{i: e\in i} w_i^t$
\STATE Update $p_i^{t+1}$ using full information algorithm $\mathcal{A}$ with
loss $\widetilde{\ell}^t$ for round $t$.\label{step:update_semi_black_box}
\ENDFOR
\end{algorithmic}
\end{algorithm}

We now provide the equivalent lemma to Lemma \ref{lem:bounding_total_freezing} to bound the total frozen probability.
\begin{lemma}\label{lem:freezing_semi_bandits}
\vspace{0.1in}
At round $t$, the total probability that is $F^t_{\mathsc{Elem}}\subset \mathcal{F}$ is at most $\eps'$: $\sum_{i\in F_{\mathsc{Str}}^t} p_i^t\leq \eps'$.
\end{lemma}
\begin{proof}
When a node in $\mathcal{E}$ becomes frozen in the initial step, it means that its probability of observation is less than $\gamma$.  Since the probability of playing adjacent nodes in $\mathcal{F}$ contributes to this probability of observation, at the initial step of the recursive process, the total probability frozen is less than $\gamma$ times the number of nodes in $\mathcal{E}$ that are frozen. As before, freezing some nodes in $\mathcal{E}$, may cause other nodes to become frozen. By freezing $e\in \mathcal{E}$, we also freeze all its neighbors  $\mathcal{F}$, which can decrease the observation probability of other elements.
In the propagation process, if an element 
becomes frozen its total probability of observation by not already frozen strategies is at most $\gamma$. Hence the total frozen probability is at most $\gamma\cdot \abs{\mathcal{E}}$ which concludes the lemma.
\end{proof}

\textbf{Black-box result.} We now provide a small-loss regret guarantee assuming that the full-information algorithm used in Step \ref{step:update_semi_black_box} of Algorithm~\ref{alg:black_box_semibandits} has an approximate regret bound of $B(\eps,T,\mathcal{F})$. We note that computationally efficient versions of \emph{Follow the Perturbed Leader} enjoy a bound $B(\eps,T,\mathcal{F})=\bigO\prn*{m \log(\abs{\mathcal{E}})/ \eps}$ (e.g., Theorem 1.1b in \cite{KalaiVempala}).
We note that we apply freezing on elements based on their probability of observation (and freeze elements with observation probability below $\gamma$). Moreover, the estimated losses are computed via importance sampling and each strategy includes at most $m$ elements. As a result, the magnitude of the estimated losses of a strategy is at most $L=m/\gamma$ where $m$ corresponds to the maximum number of elements in a strategy, e.g.,  the maximum length of any path. 
\begin{theorem}\label{thm:semisub}
\vspace{0.1in}
Let $\mathcal{A}$ be any full information algorithm for the problem whose expected approximate regret is bounded as $\apx(f,\eps/3) \le L\cdot B\prn*{\eps,T,\mathcal{F}}$ when run on losses bounded by $L$. 
Then, the \emph{Semi-Bandit Freezing Algorithm} (Algorithm~\ref{alg:black_box_semibandits}) run with learning rate $\eps'=\eps/3$ on input $\mathcal{A}$ guarantees that for any $\delta>0$ with probability $1-\delta$,  
$$
\forall f \in \mathcal{F},~~ \apx(f,\eps)=\frac{9m|\mathcal{E}|\log\prn*{ \abs{\mathcal{E}}/\delta}}{\eps^2}+\frac{3m|\mathcal{E}| B\prn*{\eps,T,\mathcal{F}}}{\epsilon}
$$ 
\end{theorem}
\begin{proof}
The proof follows similarly as the one of Theorem \ref{thm:black_box_whp} adjusted to the semi-bandit setting.
We denote by $W_e^t$ the probability of observing an element $e$. Also we use the subscript $i$ for strategy nodes (paths) and the subscript $e$ for element nodes (edges). Recall that $m$ is the maximum number of edges in any path. More formally, for each comparator $f\in\mathcal{F}$, we obtain the following set of inequalities with probability at least $1-\delta'$: 
\begin{align*}
\sum_t \ell_{I(t)}^t &=\sum_t \sum_{e\in I(t)} \ell_e^t=\sum_t \sum_{e\in I(t)}W_e^t \widetilde{\ell_e^t}=\sum_t \sum_{e \in \mathcal{E}} W_e^t \widetilde{\ell_e^t}=\sum_t \sum_{i \in \mathcal{F}} w_i^t \widetilde{\ell_i^t}\\
&\leq \frac{1}{1-\eps'}\sum_t \sum_{i \in \mathcal{F}} p_i^t  \sum_{e\in i} \widetilde{\ell_e^t} & \textrm{Using Lemma \ref{lem:freezing_semi_bandits}}
\intertext{Using the full information guarantee and noting that losses are at most $L=\frac{m}{\gamma}$, this is bounded by}
& \leq \frac{1}{(1-\eps')^2}\prn*{\sum_t \widetilde{\ell_f^t}+m\prn*{\frac{B\prn*{\eps',T,\mathcal{F}}
}{\gamma}}}\\
\intertext{Now by applying concentration Lemma \ref{lem:black_box_high_probability}, for each $f$ and taking a union bound over $f \in \mathcal{F}$,  }
& \leq \frac{\prn*{1+\eps'}}{\prn*{1-\eps'}^2}\prn*{\sum_t \ell_f^t+\prn*{\frac{\ln\prn*{|\mathcal{F}|/\delta'}}{\gamma \cdot\eps'}+\frac{
m B\prn*{\eps',T,\mathcal{F}}}{\gamma}}}
\end{align*}
Since $|\mathcal{F}| \le |\mathcal{E}|^m$, using $\gamma = \frac{\epsilon'}{|\mathcal{E}|}$ and $\epsilon'=\frac{\eps}{3}$ such that $\frac{1+ \epsilon'}{(1 - \epsilon')^2}\leq (1-\eps)^{-1}$, we conclude the proof.
\end{proof}

\textbf{Sampling the probabilities of observation.}
In the previous part, we assumed that, at any point, we have access to the probability $P_e^t(F_{\mathsc{Str}}^t)$ that an element is observed. This is used both to define  which elements are frozen and to define the estimated loss $\widetilde{\ell_e^t}$. Note that algorithms for semi-bandits such as Follow the Perturbed Leader do not provide directly these probabilities, 
but instead maintain weights on elements only, and offer a method to sample the strategies using these weights.
This assumption can be removed by appropriate sampling. More formally, we first create estimates on the observation probabilities of all the elements via drawing actions from the full information algorithm.  If the elements have observation probability less than $\gamma$  then we freeze them as in the recursive steps of the algorithm. This process could in principle require many samples  to obtain such estimates. However, freezing provides leverage since we do not need to compute exact estimates but we are fine if we have established whether they are i) with high probability greater than $\gamma/2$ if we do not freeze them (so that we use $2/\gamma$ as a bound on the magnitude and ii) with high probability less than $2\gamma$ as we then just need to set $\eps$ half of what we discussed before to still get the same bound on the total probability of being frozen. This task requires $\widetilde{\bigO}\prn*{\frac{1}{\eps^2\gamma}}$ steps with high probability. Since there is at most a $1-\eps$ probability that is not frozen, the samples that need to be discarded as they include frozen arms are rare and do not affect the high probability guarantee. We provide details of this argument in Appendix \ref{app:sampling}.

\textbf{Optimal $\sqrt{L^{\star}}$ high-probability guarantee.}
In order to obtain an improved guarantee for semi-bandits, we need algorithm-specific arguments to address the inefficiencies in the black-box analysis. Neu \cite{Neu15_semibandits} provided an adaptation of the \emph{FPL} algorithm with optimal small-loss pseudo-regret guarantee in expectation which he termed \emph{FPL-TrIX}. This adaptation makes two modifications: implicit exploration and truncation. More formally, for some learning rate $\eta$, the algorithm selects the strategy with the minimum  cumulative perturbed loss, where the cumulative losses of all elements are perturbed based on a truncated exponential distribution. In particular, for $B=\log(d/m)-\log(\eta)$, each element's perturbation is drawn from a distribution with pdf:
\begin{equation}\label{eq:truncated_perturbation}
    f(z)= \left\{\begin{array}{ll} 
    \frac{e^{-z}}{1-e^{-B}} & \text{if } z\in [0,B]\\
    0 &  \text{otherwise}   
    \end{array} \right.
\end{equation}
In Algorithm~\ref{alg:sampler_fpltrix} we provide a procedure that takes as input a set of (unfrozen) elements $U$ and samples a strategy based on  Follow the Perturbed Leader procedure consisting of only unfrozen elements belonging to the set $U$. We note that this subroutine remains computationally efficient precisely because the feasible set is defined by removing individual elements $e \in \mathcal{E}$, so the original optimization oracle can be still used on a reduced element set. This would not be the case if the feasible set would be defined through removing strategies $i\in\mathcal{F}$ arbitrarily (e.g., for a shortest path problem, removing only certain paths instead of eliminating all paths including certain edges).

This procedure  is a key subroutine for our main algorithm \emph{FPL-TrIX with freezing} (Algorithm~\ref{alg:fpltrix_freezing}). The main idea is that we use the sampler to estimate probabilities of elements being selected based on which we perform recursive freezing. Next, using only the set of unfrozen elements in the sampler, we sample the strategy our main algorithm finally produces for that round. 

\begin{algorithm}[!h]
\caption{Sampler using Follow the Perturbed leader with truncations (samples a strategy)}
\label{alg:sampler_fpltrix}
\begin{algorithmic}[1]
\REQUIRE Learning rate $\eta$, set of unfrozen elements $U$, cumulative estimated losses $\widetilde{L}_e$ $\forall e\in U$.
\STATE {\bf Sample($U, \{\widetilde{L}_e : e \in U\}, \eta$)}
\STATE ~~~~~ Sample perturbations $Z_e$ according to a distribution with pdf defined as in Eq. \eqref{eq:truncated_perturbation}.
\STATE ~~~~~  Set $\mathcal{F}' = \{f \in \mathcal{F}: f \cap U = f \}$ 
\RETURN $\argmin_{f \in \mathcal{F}'} \sum_{e \in f}\left\{\eta\widetilde{L}_e - Z_e\right\}$
\end{algorithmic}
\end{algorithm}

\begin{algorithm}[!h]
\caption{FPL-TrIX with freezing}
\label{alg:fpltrix_freezing}
\begin{algorithmic}[1]
\REQUIRE Element and strategy sets $\mathcal{E}$ and $\mathcal{F}$, learning parameter $\epsilon'$, freezing threshold $\gamma:=\frac{\epsilon'}{\abs{\mathcal{E}}}$, implicit exploration parameter $\zeta:=\frac{\eps'}{2\abs{\mathcal{E}}}$, $m:=\max_{f\in \mathcal{F}}\abs{\{e:e\in \mathcal{F}\}}$, FPL parameter $\eta:=\frac{\epsilon'}{4m\abs{\mathcal{E}}}$.
\STATE Initialize $\widetilde{L}_e^0=0$ for all elements $e\in\mathcal{E}$ and set $t=1$ (round 1). 
\FOR{$t=1$ \TO $T$}
\STATE  Initialize the freezing sets $F_{\mathsc{Elem}}^t=\emptyset$ and $F_{\mathsc{Str}}^t=\emptyset$ .
\STATE Recursively freeze elements
observed with probability below $\gamma$ and actions containing them:
\begin{align*}
F_{\mathsc{Elem}}^t &\gets F_{\mathsc{Elem}}^t\cup \crl*{e\in\mathcal{E}: P_e^t(F_{\mathsc{Str}}^t)<\gamma}\\
F_{\mathsc{Str}}^t &\gets F_{\mathsc{Str}}^t\cup \crl*{i\in\mathcal{F}: \exists e\in\mathcal{E}\cap F_{\mathsc{Str}}^t 
\text{ s.t. }e\in i}.
\end{align*}
estimating the observation probability $P_e^t(F_{\mathsc{Str}}^t)$ of an element $e\in\mathcal{E}\setminus F_{\mathsc{Elem}}^t$, using the Sample (Algorithm~\ref{alg:sampler_fpltrix}) on set $U^t=\mathcal{E}\setminus F_{\mathsc{Elem}}^t$, i.e., $Sample(U^t, \crl{\widetilde{L}_e^{t-1}}, \eta)$.
\STATE Estimate the probabilities of observation for each element after freezing:
$$
W_e^t=\begin{cases}
0 & \text{ if } e \in F_{\mathsc{Elem}}^t \\
P_e^t(F_{\mathsc{Str}}^t) & \text{ otherwise }
\end{cases}
$$
\STATE Draw arm $I(t)=Sample(\mathcal{E}\setminus F^t,\crl{\widetilde{L}_e^{t-1},\eta})$ and incur loss $\ell_{I(t)}^t$.
\STATE Compute biased estimate of losses for all elements $e\in\mathcal{E}$ via implicit exploration $\zeta=\gamma$:
$$
\widetilde{\ell}_e^t=\begin{cases}\frac{\ell_e^t}{W_e^t+\zeta} & \text{ if } e\in I(t)\\
0 & \text{ else }
\end{cases}.
$$
\STATE Update the cumulative estimated loss: $\widetilde{L}_e^t=\widetilde{L}_e^{t-1}+\widetilde{\ell}_e^t$.
\ENDFOR
\end{algorithmic}
\end{algorithm}

\begin{lemma}[implied by Lemmas 6, 7, and 8, followed by Lemma 1 in \cite{Neu15_semibandits}]\label{lem:trufpl_full_info}
\vspace{0.1in}
FPL-TrIX run on estimated losses satisfies the following guarantee for any $f\in \mathcal{F}
$:
$$
\sum_t\sum_{g\in \mathcal{F}}p_g^t \widetilde{\ell}_g^t\leq \sum_t \widetilde{\ell}_f^t +2 \eta \cdot m \sum_t \sum_{j\in\mathcal{E}} \widetilde{\ell}_j^t+ \frac{m\log\prn*{|\mathcal{E}|/m}+1}{\eta}
$$
\end{lemma}
\begin{proof}
Following the notation of \cite{Neu15_semibandits}, we define $\widetilde{Z}$ to be a fixed exponentially distributed perturbation and $Z^{t}$ to be the truncated exponentially distributed perturbation drawn at round $t$ in step 6 of Algorithm~\ref{alg:fpltrix_freezing}. The selected action of our algorithm at round $t$ is $I(t)=\argmin_{g\in \mathcal{F}} \sum_{e\in g}\prn*{\eta\widetilde{L}^{t-1}_e-Z_e^t}$ and let also $\widetilde{I}(t)=\argmin_{g\in \mathcal{F}}  \sum_{e\in g}\prn*{\eta\widetilde{L}^{t-1}_e-\widetilde{Z}_e}$. Letting $\mathbb{E}_{t-1}$ denote the conditional expectation conditioned on all the information before round $t$, we let $p_g^t=\mathbb{E}_{t-1}\brk*{\mathbf{1}\crl*{I(t)=g}}$ and $\widetilde{p}_g^t=\mathbb{E}_{t-1}\brk*{\mathbf{1}\crl*{\widetilde{I}(t)=g}}$ for all $g\in\mathcal{F}$, while $q_e^t=\mathbb{E}_{t-1}\brk*{\mathbf{1}\crl*{e\in I(t)}}$ and $\widetilde{q}_e^t=\mathbb{E}_{t-1}\brk*{\mathbf{1}\crl*{e\in \widetilde{I}(t)}}$ for all $e\in\mathcal{E}$. Lemmas 6, 7, and 8 in \cite{Neu15_semibandits} directly provide the following bound:
\begin{align}\label{eq:semibandits_fixed_truncation}
\sum_t\sum_{g\in \mathcal{F}}\widetilde{p}_g^t \widetilde{\ell}_g^t\leq \sum_t \widetilde{\ell}_f^t + \eta \cdot m \sum_t \sum_{j\in\mathcal{E}} \widetilde{\ell}_j^t+ \frac{m\log\prn*{d/m}+1}{\eta}
\end{align}
Lemma 1 of \cite{Neu15_semibandits} proves that, for all $e\in\mathcal{E}$, $|q_e^t-\widetilde{q}_e^t|\leq \beta d$ where $d=|\mathcal{E}|$ and $\beta=e^{-B}=\frac{\eta\cdot m}{d}$. We can therefore rewrite the LHS of \eqref{eq:semibandits_fixed_truncation} as:
\begin{align}\label{eq:semibandits_changing_truncations}
\sum_t\sum_{g\in \mathcal{F}}\widetilde{p}_g^t \widetilde{\ell}_g^t= \sum_t\sum_{e\in\mathcal{E}}\widetilde{q}_e^t\widetilde{\ell}_e^t \geq  \sum_t\sum_{e\in\mathcal{E}}q_e^t\widetilde{\ell}_e^t-\beta d\sum_t\sum_e \widetilde{\ell}_e^t.
\end{align}
Combining Eq.~\eqref{eq:semibandits_fixed_truncation} and \eqref{eq:semibandits_changing_truncations} with the definition of $\beta=\frac{\eta\cdot m}{d}$, the lemma follows.
\end{proof}
To ensure that the cumulative estimated losses are not too far from each other, Neu truncated the perturbations that are higher than some parameter. The effect of this truncation is similar to one of the effects of freezing: if two strategies differ significantly in their cumulative loss, adding truncated noise does not change their order, so the higher loss strategy is not selected.
By using his algorithm he shows Lemma \ref{lem:estimated_losses_close_semi_bandits} which can be viewed as an equivalent of Lemma \ref{lem:estimated_losses_close}. This addresses the first inefficiency.

\begin{lemma}[Lemma 2 in \cite{Neu15_semibandits}]\label{lem:estimated_losses_close_semi_bandits}
\vspace{0.1in}
\emph{FPL-TrIX} run on the estimated losses 
guarantees that, for any element $j\in \mathcal{E}$ and strategy $g\in \mathcal{F}$, 
$$
\sum_{t=1}^T \widetilde{\ell}_j^t\leq \sum_{t=1}^T \sum_{e\in g}\widetilde{\ell}_{e}^t+\frac{m\log\prn*{|\mathcal{E}|/m}}{\eta}+\widetilde{\ell}_j^{T_j}
$$
where $T_j\le T$ is the last time that element $j$ had non-zero probability.
\end{lemma}
We now explain how freezing can enable transforming the above guarantees to high-probability bounds. Neu uses the technique of geometric resampling \cite{NeuB13} to create estimators that are equal in expectation to the ones developed by importance sampling (without actually computing the probabilities of each element). This technique works well in expectation but does not concentrate which creates a roadblock in providing high-probability guarantees. Instead, one can use actual sampling to create estimates close to these probabilities. With no lower bound on the targeted probability, the number of samples required for this purpose can be, in principle, unbounded. Freezing addresses this point by giving a lower bound on any probability of interest, and hence guaranteeing an upper bound on the number of samples required as established in Appendix \ref{app:sampling}. Combining this with the implicit exploration technique (as in \cite{Neu2015_implicit}) which, as before, addresses the inefficiency in the negative bias of the estimated loss of the comparator (Lemma \ref{lem:implicit_exploration}), we provide the following optimal high-probability approximate regret guarantee (Theorem \ref{thm:optimal_semi_bandits}). 

\begin{theorem}\label{thm:optimal_semi_bandits}
\vspace{0.1in} 
For $\epsilon<\frac{1}{2}$, \emph{FPL-TrIX with freezing} (Algorithm~\ref{alg:fpltrix_freezing}) run with learning parameter $\eps'=\eps/2$ guarantees for any $\delta>0$, with probability at least $1-\delta$, $\eps$-approximate regret at most
$$\frac{4m|\mathcal{E}|\log(|\mathcal{E}|/\delta)+24m^2|\mathcal{E}|\log(|\mathcal{E}|/m)+(8m+2)\abs{\mathcal{E}}}{\eps}
$$.
\end{theorem}
\begin{proof}
Let $W_e^t$ and $P_e^t$ be the probability that $e\in\mathcal{E}$ is observed after and before freezing respectively, i.e. the sum of the probabilities that a strategy $g\in\mathcal{F}$ containing $e$ is used. 
\begin{align*}
    \sum_t \ell_{I(t)}^t &= \sum_t\sum_{e\in \mathcal{E}}(W_e^t+\zeta)\widetilde{\ell_e^t} ~~~~~~~~~~~~~~~~~~~~~~~~~~~~~~~~~~~~~~~~~~~~~~~~~~~~\textrm{(by importance weighting)} \\
    &\leq \frac{1}{1-\eps'
    }\sum_t \sum_{e\in\mathcal{E}} P_e^t \widetilde{\ell_e^t}+\zeta \cdot \sum_t \sum_{e\in\mathcal{E}}\widetilde{\ell_e^t} ~~~~~~~~~~~~~~~~~~~~~~~~~~~~~~~~~~~~~~~~~~~~~~~~\textrm{(by Lemma \ref{lem:freezing_semi_bandits})}\\&=\frac{1}{1-\eps'
    }\sum_t\sum_{g\in \mathcal{F}} p_g^t \widetilde{\ell_g^t}+\zeta \cdot \sum_t \sum_{e\in\mathcal{E}}\widetilde{\ell_e^t}\\
    &\leq \frac{1}{1-\eps'
    }\prn*{\sum_t \widetilde{\ell}_f^t +(2\eta m+\zeta)}\cdot \sum_t\sum_{j\in\mathcal{E}} \widetilde{\ell_j^t}+\frac{m\log(|\mathcal{E}|/m)+1}{\eta}
    ~~~~~~~~~~~~~~ (\textrm{by Lemma~\ref{lem:trufpl_full_info}})\\ 
    \intertext{Interchanging the sums:}
    &\leq \frac{1}{1-\eps'
    }\prn*{\sum_t \widetilde{\ell}_f^t +(2\eta m+\zeta)}\cdot \sum_{j\in\mathcal{E}}\sum_t \widetilde{\ell_j^t}+\frac{m\log(|\mathcal{E}|/m)+1}{\eta}\\
    \intertext{Applying Lemma \ref{lem:estimated_losses_close_semi_bandits}:}
     &\leq \frac{1}{1-\eps'
    }\prn*{\sum_t \widetilde{\ell}_f^t +(2\eta m+\zeta)}\cdot \sum_{j\in\mathcal{E}}\prn*{\sum_t \sum_{e\in f}\widetilde{\ell}_e^t+\frac{m\log(|\mathcal{E}|/m)}{\eta}+\frac{1}{\gamma}}
    +\frac{m\log(|\mathcal{E}|/m)+1}{\eta}\\
    &\leq \frac{1}{1-\eps'
    }\prn*{\sum_t \widetilde{\ell}_f^t +(2\eta m+\zeta)}|\mathcal{E}|\cdot\prn*{\sum_t \widetilde{\ell}_f^t+\frac{m\log(|\mathcal{E}|/m)}{\eta}+\frac{1}{\gamma}}+\frac{m\log(|\mathcal{E}|/{m})+1}{\eta}\\
    \intertext{Applying Lemma~\ref{lem:implicit_exploration}:}
    &\leq \frac{\prn*{(2\eta m+\zeta)|\mathcal{E}|+1}}{1-\eps'
    }\prn*{\sum_t \ell_f^t +\frac{m\log(|\mathcal{E}|/\delta)}{2\zeta}+\frac{m\log(|\mathcal{E}|/m)}{\eta}+\frac{1}{\gamma}}+\frac{m\log(|\mathcal{E}|/{m})+1}{\eta}
\end{align*}
The theorem then follows by substituting the
parameters of the algorithm and statement.
\end{proof}

Since we have such a high-probability guarantee, we can apply doubling trick similarly as in Theorem \ref{thm:black_box_small_loss}
and derive the optimal 
small-loss high-probability guarantee answering the open question of Neu \cite{Neu15_semibandits,Neu2015_implicit}. We can therefore obtain the following small-loss guarantee.

\begin{theorem}\label{cor:semi_bandits_opt}
\vspace{0.1in}
The above algorithm applied with doubling trick on parameter $\eps$ has regret of
$\bigO\prn*{\sqrt{(L^{\star}+1)\prn*{m|\mathcal{E}|\log(\frac{\abs{\mathcal{E}}}{\delta})+m^2|\mathcal{E}|\log(\frac{\abs{\mathcal{E}}}{m})+1}\cdot \log(L^{\star}+1)}+m|\mathcal{E}|\log(\frac{\abs{\mathcal{E}}}{\delta})+m^2|\mathcal{E}|\log(\frac{\abs{\mathcal{E}}}{m})}$ with probability at least $1-\delta$.
\end{theorem}

\subsection{Contextual bandits}
\label{ssec:contextual_bandits_black_box}
Contextual bandits can be seen as a direct application of the graph based feedback learning problem. The nodes of our graph are the policies, and two policies $f$ and $f'$ are connected by an edge at time $t$ if they recommend the same action in the context of time $t$, that is if $f(x_t)=f'(x_t)$. 
The feedback graph $G^t$ in this case is changing at each time step, but it is always a set of disjoint cliques. If the feedback graph consists of just cliques, we do not need the recursive freezing step of Algorithms \ref{alg:black_box} or \ref{alg:black_box_semibandits}, as a node in $G^t$ becomes frozen together with the whole clique it is contained in: effectively, we are freezing an action in each step if the probability mass of the policies recommending the action is below $\gamma$.

As a full information algorithm for this problem, we can use the oracle-efficient contextual bandit algorithms of \cite{Rakhlin2016_bistro,Syrgkanis2016_perturbed,vasilis_nips}, where oracle-efficient refers to the fact that the algorithm chooses an action using an oracle without needing to keep track of information for each policy, which allows it deal with large policy sets. Such algorithms has approximate regret at most $B\prn*{\eps,T,\mathcal{F}}=\sqrt{T\log\prn*{\abs{\mathcal{F}}}}$
when losses are in $[0,1]$.
The magnitude of the estimated losses is $\frac{1}{\gamma}$. We note that the above papers make some assumptions on the way the contexts are coming (the contexts are either known in advance or coming from a known distribution). When such a full information algorithm exists, we obtain the following guarantee.

\begin{theorem}\label{thm:contextual}
\vspace{0.1in}
Assume an oracle-efficient full-information algorithm $\mathcal{A}$ with $\eps$-approximate regret $B(\eps,T,\mathcal{F})=\sqrt{T\log(\abs{F})}$ when applied on losses in $[0,1]$. Then our freezing algorithm run with learning parameter $\eps'=\frac{\eps}{2}$ on input $\mathcal{A}$  guarantees that, for any $\delta>0$, with probability $1-\delta$, an $\eps$-approximate bound of
$\apx(f,\eps)=\bigO\prn*{\frac{d\cdot \sqrt{T\log\prn*{\abs{\mathcal{F}}}}}{\epsilon} +\frac{d\cdot \log\prn*{\abs{\mathcal{F}}/\delta}}{\epsilon^2}}$ for all $f \in \mathcal{F}$.
\end{theorem}

The proof follows by exactly the same ideas as before and is provided in Appendix \ref{ssec:proofs_contextual}. Applying the doubling trick as above, we can create a guarantee that is partly data-dependent, i.e. $\widetilde{\bigO}\prn*{\sqrt{T}+T^{1/4}\prn*{L^{\star}}^{1/2}}$. If we use as a full information algorithm the algorithm of Syrgkanis et al. \cite{vasilis_nips}, this guarantee becomes $\widetilde{\bigO}\prn*{\sqrt{T}+T^{1/3}\prn*{L^{\star}}^{1/3}}$ which improves on the best known bound of $\widetilde{\bigO}{\prn*{T^{2/3}}}$ established by their paper. 

We note that, by using multiplicative weights  
as a full-information algorithm, one can derive a $\widetilde{\bigO}\prn*{\prn*{L^{\star}}^{2/3}}$ guarantee that is however inefficient as it needs to keep weights for every policy. This was sketched in the open problem of Agarwal et al. \cite{pmlr-v65-agarwal17a} who asked for an oracle-efficient version of the above bound. Although we do not provide such a bound, our result improved on the best known bounds by replacing some of the dependence on the time horizon by $L^{\star}$. Subsequently to our work, this result was enhanced via a more sophisticated freezing of suboptimal arms, which uses multiple experts with different freezing thresholds \cite{AllenZBL2018}.

\textbf{Infinite comparator class $\mathcal{F}$.}
The black-box analysis need not be restricted to finite set $\mathcal{F}$ of policies. One can also consider a, potentially uncountable, infinite set $\mathcal{F}$. In this case, most of the black-box reduction works just as in the finite case. The non-trivial part is the union bound to obtain high probability bounds. After bounding the probability that a single comparator $f$ has small approximate regret $\apx(f,\eps)$ with probability at least $\delta$, using union bound, we can derive a high-probability bound for all comparators simultaneously with an additional $\log(\abs{F})$ factor. This is not possible for an uncountable infinite class $\mathcal{F}$. To this end one needs tail bounds uniformly over $\mathcal{F}$ of the form: For any $\delta >0$, with probability at least $1 - \delta$,
$$
\sup_{f \in \mathcal{F}}\left\{ \sum_{t=1}^T \tilde{\ell}_t(f) - (1 + \epsilon) \sum_{t=1}^T \ell_t(f) \right\} \le \mathcal{R}_T\left(\mathcal{F}, \epsilon^2/d\right) + \frac{d \log(1/\delta)}{\epsilon^2}
$$
where $\mathcal{R}_T\left(\mathcal{F}, \epsilon^2/d\right)$ is the so called offset Rademacher Complexity introduced in \cite{RakSri15}.
This capacity measure of class $\F$ is defined as:
$$
\mathcal{R}_T(\mathcal{F},\epsilon) = \sup_{\mathbf{x}_{1},\ldots,\mathbf{x}_T}\mathbb{E}_{\sigma}
\left[\sup_{f \in \mathcal{F}}\sum_{t=1}^T \sigma_t f(\mathbf{x}_t(\sigma_{1},\ldots,\sigma_{t-1})) 
- \epsilon  (f(\mathbf{x}_t(\sigma_{1},\ldots,\sigma_{t-1})))^2 \right]
$$
where in the above each $\mathbf{x}_t: \{\pm1\}^{t-1} \mapsto \X$ is 
a mapping from a sequence of $\pm1$ bits to the context space and $\sigma_1,\ldots,\sigma_T$ are Rademacher random variables. This capacity measure is close in spirit to the Rademacher
complexity. If one drops the quadratic term in the definition, this would correspond to the sequential Rademacher complexity \cite{rakhlin2010online}. The quadratic term subtracted makes this complexity measure smaller than the Rademacher complexity. As a specific example, for a finite $\mathcal{F}$, this complexity can be bounded as  $\mathcal{R}_T(\mathcal{F},\epsilon) \le \frac{\log|\mathcal{F}|}{\eps}$ thus giving us the finite class result as a special case.
The above tail bound is proved in the Appendix \ref{ssec:proofs_contextual} and 
is based on the results from \cite{RakSri17}. Using the tail bound above, we prove the following result just as in the finite case.
\begin{theorem}\label{thm:contextual_infinite}
\vspace{0.1in}
Assume that the oracle-efficient full-information algorithm $\mathcal{A}$ has $\eps$-approximate regret $B(\eps,T,\mathcal{F})$ when applied on losses in $[0,1]$. Then our freezing algorithm with learning parameter $\eps'=\frac{\eps}{2}$ on input $\mathcal{A}$ guarantees, for any $\delta>0$, with probability $1-\delta$ $\eps$-approximate regret of
$$
\apx(f,\eps)=\bigO\prn*{\frac{d}{\epsilon} B\prn*{\eps,T,\mathcal{F}} + \mathcal{R}_T\left(\mathcal{F}, \epsilon^2/d\right) + \frac{d \log(1/\delta)}{\epsilon^2}}.
$$ 
\end{theorem}

\subsection{Shifting comparators}\label{ssec:shifting_comparators}
The last application studied in this paper is \emph{shifting bandits} where the comparator is stronger than the  best single fixed arm. In shifting bandits, the set $\mathcal{F}$ corresponds to a sequence of arms in $[d]$ that change over the $T$ rounds, and for $f \in \mathcal{F}$, we use $f(t)$ to denote the arm in the sequence at time $t$. We denote $K(f)=\abs{t<T: f(t)\neq f(t+1)}$ as the number of times $f$ changes over the $T$ rounds.

\textbf{Black-box reduction from full-information shifting algorithms.} With full information and losses lying in $[0,1]$, the expected approximate regret with shifting comparators is $\bigO\prn*{\frac{S(d,K(f),T)}{\epsilon}}$ where $S(d,K(f),T)=K(f) \log(d T) + 2 \log(d)$. This guarantee is, for instance, satisfied by \emph{Noisy Hedge}, i.e., multiplicative weights with uniform noise of $1/T$ added in, as presented in \cite{FosterLLST16}. Our black box reduction leads to bandit shifting algorithms with the following bound on approximate regret. 
\begin{theorem}\label{thm:shifting_black_box}
\vspace{0.1in}
Let $\mathcal{A}$ be any full information algorithm with an expected approximate regret guarantee for shifting comparators of: $\En\brk*{\apx\prn*{f,\eps/5}} \le \frac{5 L\cdot S(d,K(f),T)}{\eps}$,  against any sequence of arms $f\in[d]^T$, when run on losses in $[0,L]$. For any $\delta>0$, the \emph{Double-Threshold Freezing Algorithm} (Algorithm~\ref{alg:black_box})  run with learning parameter $\epsilon'=\eps/5$ on input $\mathcal{A}$, $\alpha$, $d$, has $\epsilon$-approximate regret for shifting comparators of: $\apx\prn*{f,\eps}=O\prn*{\frac{d\cdot \prn*{S\prn*{d,K(f),T}+K(f)\log\prn*{dT/\delta}}}{\eps^2}}$ with probability $1-\delta$.
\end{theorem}
\begin{proof}
The proof follows the arguments in Section~\ref{sec:black_box}. The pseudoregret guarantee copies exactly the proof of Theorem~\ref{thm:black_box} replacing the approximate regret term $A(d,T)$ by its shifting analogue $S(d,K(f),T)$ and noting that the independence number in bandits is equal to the number of arms~$d$.

For the high-probability guarantee, we copy the proof of Theorem~\ref{thm:black_box_whp} with a slight adaptation of item 3 in the proof as we need a more intricate union bound to account for the exponential number of comparators. A vanilla union bound would lead to a linear dependence on time horizon since the number of comparators is exponential. Instead, for comparator $f$, we create an approximate regret with failure probability $\delta'=\frac{\delta}{1+ {T\choose K(f)}}$. This provides a failure probability for each comparator level (number of changes) of $\delta/T$ and then splits it uniformly across comparators of the same number of changes. Therefore, what is coming as a linear term from the $\log\prn*{1/\delta'}$ term is a term logarithmic in $T$ and linear in the number of changes of the comparator instead of the time horizon. Since the linear term already appears in $S(d,K(f),T)$, this does not increase the asymptotic bound on the regret.
\end{proof}

\textbf{Obtaining shifting guarantees directly via freezing.} Interestingly, freezing not only enables transferring existing shifting guarantees from full information to partial information, but can also provide an alternative way to derive such shifting guarantees in the first place. Recall the GREEN-IX algorithm (Algorithm~\ref{alg:green_ix}), the algorithm that provided the optimal high-probability guarantee for bandits. The key property that this algorithm had was that the cumulative estimated losses stayed close to each other. As a result, when the comparator changes, the new comparator is not too penalized by its past performance and can therefore recover pretty fast, leading to effective shifting regret guarantees.
\begin{theorem}\label{thm:green_ix_shifting}
\vspace{0.1in}
 GREEN-IX run with learning parameter $\epsilon'=\epsilon/2$ guarantees an $\epsilon$-approximate regret with shifting comparators of $\bigO\prn*{\frac{(1+K(f))d\log(d/\delta)}{\epsilon}}$  with probability at least $1-\delta$.
\end{theorem}
The proof follows the arguments of Theorem~\ref{thm:green_ix} for each intervals where the comparator is fixed, but uses freezing to create interval-based analogs of two lemmas used in that proof. In particular, the analogue of Lemma~\ref{lem:estimated_losses_close} is:
\begin{lemma}
\vspace{0.1in}
GREEN-IX satisfies the following for any arms $i,j\in[d]$ and any interval $[\tau,\tau']$:
\begin{align*}
\sum_{t=\tau}^{\tau'}\widetilde{\ell_i^t}\leq \sum_{t=\tau}^{\tau'} \widetilde{\ell_j^t}+\frac{2}{\gamma}+\frac{2\ln(1/\gamma)}{\eta}
\end{align*}
\end{lemma}
\begin{proof}
 By Lemma~\ref{lem:estimated_losses_close}, for all arms $i$ and $j$:
\begin{align*}
    \sum_{t=1}^{\tau'-1}\widetilde{\ell}_i^t\leq \sum_{t=1}^{\tau'-1}\widetilde{\ell_j^t}+\frac{1}{\gamma}+\frac{\ln(1/\gamma)}{\eta} \qquad  - \sum_{t=1}^{\tau-1}\widetilde{\ell}_i^t\leq -\sum_{t=1}^{\tau-1}\widetilde{\ell_j^t}+\frac{1}{\gamma}+\frac{\ln(1/\gamma)}{\eta}
\end{align*}
Combining the two, the lemma follows.
\end{proof}
What is left is a shifting analogue for Lemma~\ref{lem:mwu_second_order_regret}:
\begin{lemma}
\vspace{0.1in}
For any interval $[\tau,\tau']$ where the comparator $f$ is fixed, it holds that:
\begin{align*}
\sum_{t=\tau}^{\tau'-1} \sum_i p_i^t \widetilde{\ell_i^t}-\sum_{t=\tau}^{\tau'-1}  \widetilde{\ell}_f^t \leq \eta \sum_{t=\tau}^{\tau'-1}\sum_i p_i^t\prn*{\widetilde{\ell_i^t}}^2+ \frac{\ln(10/\gamma)}{\eta}
\end{align*}
\end{lemma}
\begin{proof}
Following the proof of Theorem 2.4 in \cite{prediction_book}, we define $W_n=\sum_{i\in[d]}\exp(-\eta \sum_{t\leq n}\widetilde{\ell_i^t})$. By the exact same arguments, it holds that:
$$
\ln\prn*{\frac{W_{\tau'}}{W_{\tau}}}\leq  -\eta\cdot \sum_{t=\tau}^{\tau'} \sum_i p_i^t \widetilde{\ell_i^t} +\eta^2\cdot \sum_{t=\tau}^{\tau'} \sum_i p_i^t \prn*{\widetilde{\ell_i^t}}^2 .
$$
To complete the proof, we also need to lower bound this quantity. 
Let $q_i=\frac{\exp(-\eta\sum_{t\leq\tau}\widetilde{\ell_i^t})}{W_{\tau}}$ be the probability associated with arm $i$ prior to the freezing step. Then:
\begin{align*}
\ln\prn*{\frac{W_{\tau'}}{W_{\tau}}}=\ln\prn*{\sum_{i\in[d]}q_i \exp(-\eta \sum_{t=\tau}^{\tau'}\widetilde{\ell_i^t})}
\geq \ln\prn*{q_f \exp(-\eta \sum_{t=\tau}^{\tau'}\widetilde{\ell_f^t})}
\end{align*}
The fact that we apply freezing provides the desired lower bound on $q_f$. In particular, the last time that the estimated loss of arm $f$ is non-zero, its probability is above $\gamma$ and, at this point, it can incur at most $1/\gamma$ estimated loss. As a result, $$q_f\geq \gamma \cdot \exp(-\eta/\gamma)\geq \gamma/10.$$ 
Combining upper and lower bound, it holds that:
\begin{align*}
-\eta\cdot \sum_{t=\tau}^{\tau'} \sum_i p_i^t\widetilde{\ell_i^t}+\eta^2\cdot \sum_{t=\tau}^{\tau'} \sum_i \prn*{p_i^t\widetilde{\ell_i^t}}^2 \geq -\eta\cdot \sum_{t=\tau}^{\tau'} \widetilde{\ell_f^t} +\ln(\gamma/10),
\end{align*}
which concludes the lemma.
\end{proof}

\begin{proof}[Proof of Theorem~\ref{thm:green_ix_shifting}]
The proof copies the one of Theorem~\ref{thm:green_ix} for each interval when the comparator is fixed (using the two above lemmas instead of their non-interval-based analogues) and then summing the bounds for all the $1+K(f)$ comparators.
\end{proof}

\begin{remark} 
In the black-box reduction we assumed that the full information algorithm was Noisy Hedge (or another shifting full-information algorithm) which satisfies the shifting approximate regret in the full-information case. The reason why the small noise there was essential was to establish that, at the time of the switch of comparator, the new comparator arm does not have too low probability. However, our result on GREEN-IX demonstrates that this is also directly offered by freezing without adding such noise. The reason behind this result is that freezing offers an alternative way to ensure that the probability of each arm (prior to freezing) does not become too small. This shows one more property of freezing: achieving directly shifting guarantees for partial information even when applied on full information algorithms without this property. In fact, the resulting bounds have optimal dependence on $\epsilon$. We note that these bounds do not extend to small-loss bounds when we do not know the number of changes in the sequence (as we cannot tune $\epsilon$ appropriately). However, as we discuss in the next paragraph, the approximate regret guarantees are, in fact, what is needed in game-theoretic settings.
\end{remark}

\textbf{Implications to dynamic population games.} 
Our guarantees on low approximate regret have significant implications to repeated game settings where the player set is evolving over time \cite{LykourisST16,FosterLLST16}. In these papers, learning is used 
as a behavioral assumption. The papers consider a \emph{dynamic population} game where, at every round, each player is independently replaced by an adversarially selected player with some turnover probability. This model of evolving games was introduced in \cite{LykourisST16} and is further studied in \cite{FosterLLST16}. These papers show that in a broad class of games, if all players use algorithms to select their strategies that satisfy low approximate regret with shifting comparators then
 the time-average social welfare of the corresponding learning outcomes is approximately efficient even when the turnover probability is large (constant with respect to the number of players and inversely dependent on the $\eps$ of the approximate regret property). This means that, even when a large
 fraction of the population changes at every single round,  players manage to still adapt to the change and guarantee efficient outcomes in most time steps. The results of this paper can be applied to dynamic population games and strengthen the results of \cite{LykourisST16,FosterLLST16} extending it to games with cost and only partial feedback to the players.  Previous work only provided full information algorithms for achieving low approximate regret. Our results strengthen the behavioral assumption showing that low approximate regret with shifting comparators is achievable even at the presence of partial feedback by a small and natural change 
in any full-information
learning algorithms.

\section{Conclusions}\label{sec:conclusions}
% !TEX root = main.tex
We have shown how to obtain small-loss regret guarantees with high probability for general partial information settings in the graph-based feedback model. Our technique captures as special cases important partial-information paradigms such as  contextual bandits and semi-bandits, as well as learning with shifting comparators and bandit feedback.  
For all these settings, we provide a black-box small-loss high-probability guarantee of $\widetilde{\bigO}(
\alpha^{1/3}(L^{\star})^{2/3}),$ where  $\alpha$ can be thought of as an appropriate problem dimension of each paradigm and corresponds to the  independence number of the graph representing the feedback structure. We improve the guarantee to depending only on $\sqrt{L^{\star}}$ for bandits, semi-bandits, as well as fixed feedback graphs.

A number of
important problems related to our work remain open. Our $\sqrt{L^{\star}}$ bound for feedback graphs depends on the partition number, rather than the independence number and only works for fixed graph. Our results assume undirected feedback graphs, while a number of applications have directed feedback. Moreover, our bounds scale with the maximum independence number instead of the average as the corresponding uniform bounds; this is in some sense necessary but there are ways it could be relaxed. Finally, our results for shifting comparators (in Section \ref{ssec:shifting_comparators}) either require knowing the number of changes of the comparator, or are suboptimal in the dependence on this number. We elaborate on each:
\begin{itemize}
\setlength{\itemsep}{0pt}\setlength{\parsep}{0pt}\setlength{\parskip}{0pt}
    \item \emph{\textbf{Optimal dependence on $L^{\star}$ for general graphs:}} The first question is to derive an algorithm with an optimal dependence of $\widetilde{\bigO}\prn{\sqrt{\alpha L^{\star}}}$  or at least extend our $\widetilde{\bigO}\prn{\sqrt{\kappa L^{\star}}}$ result to evolving graphs. We have
    shown that our framework applied on specific algorithms can lead to such an improvement for the bandit and semi-bandit settings, resolving open questions by \cite{Neu15_semibandits,Neu2015_implicit}. 
    In Section \ref{ssec:clique_partition}
    , we show that when the feedback graph $G$ is not evolving, we obtain a $\widetilde{O}\prn{\sqrt{\kappa (G) L^{\star}}}$ regret guarantee, where $\kappa (G)$ is the minimum clique partition size. Recent subsequent
    work of \cite{AllenZBL2018} gave an optimal pseudo-regret bound for the contextual bandit setting (resolving an open problem raised by \cite{pmlr-v65-agarwal17a}), a special case of graph-based feedback where graphs are evolving and consist of disjoint cliques, which implies that $\kappa(G^t)=\alpha(G^t)$ for all $t$. This provides hope that our work can be extended to a $\widetilde{\bigO}\prn{\sqrt{\alpha L^{\star}}}$ or $\widetilde{\bigO}\prn{\sqrt{\kappa L^{\star}}}$ result for general graphs. 
    \item \emph{{\textbf{Small-loss shifting/tracking regret guarantees:}}} The second question is to derive an algorithm that achieves a shifting regret bound for the partial information case that is $\widetilde{\bigO}\prn{\sqrt{K(f)\cdot L^{\star}}}$ without knowledge of the number of changes $K(f)$. In Section \ref{ssec:shifting_comparators}, we provide an optimal approximate regret bound. Such a bound is directly    useful, for instance, in game-theoretic settings where the approximate regret is the essential quantity. However, unlike all of our other bounds, this one does not lift to small-loss guarantees through the usual doubling trick unless we know the number of changes. This is due to the fact that, if the tuning of $\eps$ does not depend on the number of changes, using the doubling trick only provides a regret bound of $\widetilde{\bigO}\prn{K(f)\sqrt{L^{\star}}}$. In contrast, 
    in the full-information setting,
    $\widetilde{\bigO}\prn{\sqrt{K(f)L^{\star}}}$ guarantee algorithms are known \cite{Daniely15, Luo2015}. It would be therefore interesting to obtain such guarantees in the partial-information setting. We note that the stronger requirement of \emph{strongly adaptive regret} (small regret within each subinterval) is not achievable \cite{Daniely15}.
    \item \emph{\textbf{Maximum vs average independence number:}} Another interesting direction is whether one can derive bounds that scale with the average independence number of the graph instead of the maximum. The trivial way to state such a guarantee would be to ask for a bound that scales with $avg_t(\alpha(G^t))L^{\star}$ instead of $\max_t(\alpha(G^t))L^{\star}$;  such a guarantee is achievable for uniform bounds in graph-based feedback, e.g. \cite{AlonCGMMS}. Unfortunately, for small-loss bounds, this is not attainable. Concretely, assume that there is an algorithm providing $\sqrt{avg_t(\alpha(G^t))L^{\star}}$. Taking the instance that gives the lower bound of $\sqrt{dT}$ for the multi-armed bandit setting and appending it with $(d-1)\cdot T$ rounds where all the arms have $0$ loss and the feedback is full information, we obtain an instance with average independence number of $2$ and the same losses as in the bandit setting. Therefore, if such a bound existed, it would contradict the lower bound for the bandit setting. Despite this negative statement, it is possible that a more modular guarantee can be achieved. More concretely, let $i^{\star}$ be the comparator arm. A bound that would scale with the average of a function of the independence number at the round and the loss of $i^{\star}$ is not precluded from the aforementioned negative statement. 
    \item \emph{\textbf{Directed graphs:}} Finally, it would be nice to extend the graph-based feedback results to directed graphs.  Our work relies on the undirected nature of the graph in controlling the cascade of propagation in the freezing process. Some of the previous work on uniform bounds \cite{AlonCGMMS,AlonCBDK15} also offer bounds that apply also for the directed case.  Providing small-loss bounds for directed feedback graphs is an interesting open problem.
\end{itemize}

\textbf{Acknowledgements} We thank Haipeng Luo for pointing out the connection to contextual bandits, Jake Abernethy, Adam Kalai, and Santosh Vempala for a useful discussion about Follow the Perturbed Leader, and Yang Liu and Mark Sellke for pointing to the fact that the general graph-feedback result can be also obtained without two thresholds. Finally, our results and their presentation have been significantly improved by the excellent feedback from  anonymous reviewers. Part of this work was done while the authors were visiting the Simons Institute.

\bibliographystyle{alpha}
\bibliography{bib1}

\appendix

\section{Supplementary material for Section \ref{sec:black_box}}
\label{app:proofs_section_3}
% !TEX root = main.tex
\subsection{Concentration inequality (Lemma \ref{lem:black_box_high_probability})}
\begin{proof}[Proof of Lemma \ref{lem:black_box_high_probability}]
The proof follows the basic outline of classical Chernoff bounds for independent variables combined with the law of total expectation to handle the dependence.

We start with the first claim. For any real number $b$ to be instantiated later:
\begin{equation}\label{eq:martingale_high_prob}
\Pr\brk*{X-(1+\epsilon)M>b}\le e^{-\lambda b} \En\brk*{e^{\lambda\prn*{X-\prn*{1+\epsilon}M}}}=e^{-\lambda b} \En\brk*{\prod_{t=1}^T e^{\lambda\prn*{x_t-\prn*{1+\epsilon}m_t}}}
\end{equation}
We will prove by induction on $T$ that the expectation above is at most 1 if we use  $\lambda=\ln(1+\epsilon)$. Given this fact, we can set $b$ such that $e^{-\lambda b}=e^{-\ln(1+\epsilon) b}=\delta$.
Using that $\ln(1+\eps)\geq \frac{\eps}{ 1+\eps}$ for all $\eps\geq 0$, it follows that $b=
\frac{\ln\prn*{\frac{1}{\delta}}}{\ln\prn*{1+\eps}}\leq \frac{\prn*{1+\eps}\cdot\ln\prn*{\frac{1}{\delta}}}{\eps}$.

Now consider the expectation $\En\brk*{\prod_t e^{\lambda\prn*{x_t-\prn*{1+\epsilon}m_t}}}$, we will prove by induction on $T$ that with the above choice of $\lambda$ this is at most 1. For the base case of $T=1$ we have a single random variable $x\in[0,1]$ and its expectation $m=\En\brk*{x}$. The expectation is $\En\brk*{e^{\lambda\prn*{x-\prn*{1+\epsilon}m}}}=\En\brk*{e^{\lambda x}}\cdot e^{-\lambda \prn*{1+\epsilon}m}$.

We note that for a value $x\in[0,1]$ and any $\lambda$, the following simple inequality holds:
$$e^{\lambda x} \le x e^{\lambda}   -x+1$$
This is true as it holds with equality for $x=0$ and 1, and the difference is a concave function (as the second derivative of $g(x)=e^{\lambda x} - x e^{\lambda} +x-1$ is $g''(x)=\lambda^2 e^{\lambda x}\ge 0$),  so the inequality is true between the two points.

Now write the expectation as
$$
\En\brk*{e^{\lambda x}}\le \En\brk*{x e^{\lambda}   - x+1}=\En\brk*{x \cdot\prn*{ e^{\lambda}-1}+1}
=m\cdot \prn*{e^{\lambda}-1}+1\le e^{m\cdot \prn*{e^{\lambda}-1}}.
$$
Using this in the expectation we get
\begin{align*}
\En\brk*{e^{\lambda\prn*{x-\prn*{1+\epsilon}m}}}&\le e^{m\cdot \prn*{e^{\lambda}-1}}
\cdot e^{-\lambda\prn*{1+\epsilon}m}
=e^{m\prn*{e^{\lambda}-1-\lambda\prn*{1+\eps}}} \le 1
\end{align*}
where the last inequality follows from the choice of $\lambda=\ln(1+\eps)$, as the multiplier of $m$ in the exponent with this choice of $\lambda$ is 
$$
e^{\lambda}-1-\lambda\prn*{1+\eps}=\eps-\prn*{1+\eps}\ln\prn*{1+\eps}\le \eps-\prn*{1+\eps}\prn*{\eps-\frac{\eps^2}{2}}=-\frac{\eps^2\prn*{1-\eps}}{2}<0.
$$

Now we are ready to prove the general case. Using the law of total expectation, we obtain:
\begin{align*}
\En\brk*{\prod_{t=1}^T e^{\lambda\prn*{x_t-\prn*{1+\epsilon}m_t}}}
&=\En\brk*{\prod_{t=1}^{T-1} e^{\lambda\prn*{x_t-\prn*{1+\epsilon}m_t}}\cdot e^{\lambda\prn*{x_T-\prn*{1+\epsilon}m_T}}}\\
&=\En\brk*{\prod_{t=1}^{T-1} e^{\lambda\prn*{x_t-\prn*{1+\epsilon}m_t}}\cdot \En_{T-1}\brk*{e^{\lambda\prn*{x_T-\prn*{1+\epsilon}m_T}}}}
\end{align*}
where $\En_{t-1}\brk*{\cdot}$ is the random variable taking expectation over the last term conditioned on all the previous terms $x_1,\ldots, x_{T-1}$. Note that conditioned on the previous terms, the conditional expectation $\En_{T-1}\brk*{e^{\lambda\prn*{x_T-\prn*{1+\epsilon}m_T}}}$ is exactly the base case, and hence at most 1 by the above, so we can conclude that 
\begin{align*}
\En\brk*{\prod_{t=1}^T e^{\lambda\prn*{x_t-\prn*{1+\epsilon}m_t}}}
&\le \En\brk*{\prod_{t=1}^{T-1} e^{\lambda\prn*{x_t-\prn*{1+\epsilon}m_t}}}
\end{align*}
and the statement follows by the induction hypothesis. 

To prove the lower bound, we proceed in an analogous way. For $\lambda=-\ln\prn*{1-\eps}$, using that $\frac{1}{1-\eps}\geq 1+\eps$, we obtain the equivalent of the inequality \eqref{eq:martingale_high_prob} with 
$b'=
\frac{\ln\prn*{1/\delta}}{\ln\prn*{\frac{1}{1-\eps}}}\leq \frac{\ln\prn*{1/\delta}}{\ln\prn*{1+\eps}}$.
\begin{equation}
\Pr\brk*{\prn*{1-\eps}M-X>b'}\le e^{-\lambda b'} \En\brk*{e^{\lambda\prn*{\prn*{1-\eps}M-X}}}=e^{-\lambda b'} \En\brk*{\prod_{t=1}^T e^{\lambda\prn*{\prn*{1-\epsilon}m_t-x_t}}}
\end{equation}
Regarding the bound on the expectation, consider
first a single variable $m=\En\brk*{x}$.
$$\En\brk*{e^{-\lambda x}} \le \En\brk*{x e^{-\lambda}-x+1}
=m\prn*{e^{-\lambda}-1}+1\le e^{m\prn*{e^{-\lambda}-1}}
$$
We now bound the expectation as
$$
\En\brk*{e^{\lambda\prn*{\prn*{1-\eps}m-x}}}\le e^{\lambda (1-\eps)m}\En\brk*{e^{-\lambda x}}\le e^{\lambda \prn*{1-\eps}m}\cdot e^{m\cdot \prn*{e^{-\lambda}-1}}
=e^{m(\lambda(1-\eps)+(e^{-\lambda}-1))}\le 1
$$
where the last inequality follows from the choice of $\lambda=-\ln(1-\eps)$, as the multiplier of $m$ in the exponent with this choice of $\lambda$ is 
$$
\lambda(1-\eps)+(e^{-\lambda}-1)=-(1-\eps)\ln(1-\eps)-\eps \le (1-\eps)\eps -\eps=-\eps^2 <0.
$$
using the fact that $\ln(1-\eps) \le -\eps$. The induction then follows as before.
\end{proof}

\subsection{Transforming approximate regret to small-loss guarantees (Lemma \ref{lem:double})}
\begin{proof}[Proof of Lemma~\ref{lem:double}]
We denote the loss of the algorithm within phase $\tau$ as $\widehat{L}_\tau$ and the loss of the best arm within the phase as $L_\tau^{\star}$. Now note that on any phase $\tau$, by our premise about approximate regret on each phase, we have that with probability at least $1 - \delta'$,
$$
\widehat{L}^\tau - L^\star_\tau \le \epsilon_\tau \widehat{L}_\tau + \frac{\Psi(\delta')}{(\epsilon_\tau)^q}
$$
The term $\epsilon_\tau \widehat{L}_\tau$ of the right hand side can be split in two terms, i) all but the last round of the phase and ii) the last round. The first term is bounded  by $ \frac{\Psi\prn*{\delta'}}{\prn*{\eps_\tau}^q}$ due to the doubling condition. The second term can be upper bounded by $\eps_{\tau}$ since the losses are in $[0,1]$. Hence, for phase $\tau$, with probability $1 - \delta'$:
$$
\widehat{L}_\tau-L_\tau^{\star}\leq \frac{2\cdot \Psi\prn*{\delta'}}{\prn*{\eps_\tau}^q}+\eps_\tau.
$$
Letting $\Gamma$ denote the last phase and summing over the phases, we have:
\begin{align*}
 \widehat{L}-L^{\star}&\leq \sum_{\tau=0}^{\Gamma-1} \frac{2\Psi\prn*{\delta'}}{\prn*{\eps_\tau}^q}+\sum_{\tau=0}^{\Gamma-1} \eps_\tau +\eps_\Gamma \widehat{L}_\Gamma + \frac{\Psi\prn*{\delta'}}{\prn*{\eps_\Gamma}^q}\\
&\leq 2\Psi\prn*{\delta'} \sum_{\tau=0}^\Gamma \frac{1}{\kappa}^{-q\cdot \tau}+\sum_{\tau=0}^{\Gamma-1}2^{-\tau}+\eps_\Gamma \widehat{L}_\Gamma\\
&\leq 2\Psi\prn*{\delta'} \cdot\frac{\kappa^{q(\Gamma+1)}-1}{\kappa^q-1}+\frac{\kappa}{\kappa-1}+\eps_\Gamma \widehat{L}_\Gamma\\
&\leq \frac{2\kappa^q}{\kappa-1} \Psi\prn*{\delta'}\frac{1}{\prn*{\eps_\Gamma}^q}+\frac{\kappa}{\kappa-1}+\eps_\Gamma \widehat{L}_\Gamma & \textrm{Since $q \ge 1$}\\
&\leq \prn*{\frac{\frac{2\kappa^q}{\kappa-1} \Psi\prn*{\delta'}}{\prn*{\eps_\Gamma}^q}}^{\frac{1}{q+1}}\cdot\prn*{\kappa^q \cdot\frac{\frac{2\kappa^q}{\kappa-1} \Psi\prn*{\delta'}}{\prn*{\eps_{\Gamma-1}}^q}}^{\frac{q}{q+1}}+\prn*{\eps_\Gamma \widehat{L}_\Gamma}^{\frac{1}{q+1}}\cdot \prn*{\eps_\Gamma \widehat{L}_\Gamma}^{\frac{q}{q+1}}+\frac{\kappa}{\kappa-1}\\
&\leq \prn*{\frac{\frac{2\kappa^q}{\kappa-1}\Psi\prn*{\delta'}}{\prn*{\eps_\Gamma}^q}}^{\frac{1}{q+1}}\cdot \prn*{\frac{2\kappa^{2q}}{\kappa-1}\eps_{\Gamma-1}\widehat{L}_{\Gamma-1}}^{\frac{q}{q+1}}+\prn*{\frac{\Psi\prn*{\delta'}}{\prn*{\eps_{\Gamma}}^q}}^{\frac{1}{q+1}}\cdot \prn*{\eps_{\Gamma}\widehat{L}_\Gamma}^{\frac{q}{q+1}}+\frac{\kappa}{\kappa-1}\\
&\leq \prn*{\frac{\frac{2\kappa^q}{\kappa-1}\Psi\prn*{\delta'}}{\prn*{\eps_\Gamma}^q}}^{\frac{1}{q+1}}\cdot \prn*{\frac{2\kappa^{2q+1}}{\kappa-1} \eps_{\Gamma}\widehat{L}_{\Gamma-1}}^{\frac{q}{q+1}}+\prn*{\frac{\Psi\prn*{\delta'}}{\prn*{\eps_{\Gamma}}^q}}^{\frac{1}{q+1}}\cdot \prn*{\eps_{\Gamma}\widehat{L}_\Gamma}^{\frac{q}{q+1}}+\frac{\kappa}{\kappa-1}\\
\end{align*}
Upper bounding $\widehat{L}_{\Gamma-1}\leq \widehat{L}$ and $\widehat{L}_{\Gamma}\leq \widehat{L}$, we conclude that:
\begin{align}\label{eq:from_lhat_to_lst}
\widehat{L}-(L^{\star}+\frac{\kappa}{\kappa-1})\leq\prn*{\frac{4\kappa^{2q^2+2q+1}+1}{(\kappa-1)^{q+1}}\cdot \Psi\prn*{\delta'}}^{\frac{1}{q+1}}\cdot \prn*{\widehat{L}}^{\frac{q}{q+1}}
\end{align}
To replace the dependence of $\widehat{L}$ by $L^{\star}$, we apply Young's inequality, the approximate regret property, and the sub-additivity property.
\begin{align*}
\widehat{L}-(L^{\star}+ +\frac{\kappa}{\kappa-1})&\leq\prn*{\frac{4\kappa^{2q^2+2q+1}+1}{(\kappa-1)^{q+1}}\cdot \Psi\prn*{\delta'}}^{\frac{1}{q+1}}\cdot \prn*{\widehat{L}}^{\frac{q}{q+1}}\leq  \frac{1}{q+1}\cdot\frac{4\kappa^{2q^2+2q+1}+1}{(\kappa-1)^{q+1}}
\cdot \Psi\prn*{\delta'}+\frac{q}{q+1}\widehat{L}\Rightarrow
\end{align*}

\begin{align*}
    \widehat{L}\leq (q+1)\cdot(L^{\star}+\frac{\kappa}{\kappa-1})+\prn*{\frac{4\kappa^{2q^2+2q+1}+1}{(\kappa-1)^{q+1}}\cdot\Psi\prn*{\delta'}}
\end{align*}
Applying this to \eqref{eq:from_lhat_to_lst} we obtain:
\begin{align*}
    \widehat{L}-L^{\star}
    &\leq\prn*{\frac{4\kappa^{2q^2+2q+1}+1}{(\kappa-1)^{q+1}}\cdot \Psi\prn*{\delta'}}^{\frac{1}{q+1}}\cdot \prn*{(q+1)\cdot(L^{\star}+\frac{\kappa}{\kappa-1})+ \frac{4\kappa^{2q^2+2q+1}+1}{(\kappa-1)^{q+1}\cdot\Psi\prn*{\delta'}}}^{\frac{q}{q+1}}+\frac{\kappa}{\kappa-1}
\end{align*}
Since there are at most $\log\prn*{L^{\star}+1}+1$ phases, setting $\delta'=\frac{\delta}{\log\prn*{L^{\star}+1}+1}$ suffices for the high probability statements to hold for all phases.
\end{proof}

\section{Supplementary material for Section \ref{sec:applications}}
\label{app:sec:contextual}
% !TEX root = main.tex
\subsection{Sample complexity to estimate probabilities in oracle-efficient settings}
\label{app:sampling}
In this section we provide a formal upper bound on the number of samples needed to create the estimates on the probabilities we use for semi-bandits (Section \ref{ssec:semi_bandits_black_box}) and for oracle-efficient contextual bandits (Section \ref{ssec:contextual_bandits_black_box}). Since we are only allowed to sample solutions instead of directly computing the observation probabilities $P_e^t$,
we draw $N$ samples at each round and create estimates on these probabilities.

Let $f^t_1,\ldots,f^t_N$ be these samples, and
and let $\widehat{P}^t_e = \frac{1}{N}\sum_{i=1}^N \mathbf{1}_{e \in f^t_i}$ be the empirical frequency of appearances of each element $e$. 
We now run our algorithm by doing importance sampling using $\widehat P$. 
We use the losses $\widehat{\ell}_e^t=\frac{\ell_e^t}{\widehat{P}_e^t}$ as estimated losses for the elements $e$ of the selected solution
and apply freezing also based on this estimate $\hat P$. That is we freeze element $e$ if $\widehat{P}^t_e < \gamma$ and only look for solutions that do not contain frozen edges while sampling solutions. This gives us the following lemma that, with the appropriate concentration can result to an approximate regret guarantee as in Theorems \ref{thm:semisub} and \ref{thm:optimal_semi_bandits}.

\begin{lemma}\vspace{0.1in}
If we run the semi-bandit algorithm with $\epsilon'=\epsilon/6$ based on $\widehat{P}^t_e$'s as estimates in place of  $P^t_e$'s, and $N = \frac{(1+2\epsilon')m \log(d T/\delta)}{\epsilon' \gamma} $ then for any $\epsilon , \delta>0$ with probability at least $1 - \delta$ over randomization, the following inequality is true:
\begin{align*}
    \sum_{t} \ell^t_{I(t)} & \le    (1 + \epsilon)  \sum_{t} \sum_{e \in f} \tilde{\ell}^t_{e} +  \frac{2}{\gamma} + \frac{m B(\epsilon/6,T, \mathcal{F})}{\gamma}
\end{align*}
\end{lemma}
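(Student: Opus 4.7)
The plan is to reduce the problem to the already-proved Theorem~\ref{thm:semisub} by showing that the empirical probabilities $\widehat{P}^t_e$ are a multiplicatively-accurate proxy for $P^t_e$ whenever $P^t_e$ is above the freezing threshold. Since $\widehat{P}^t_e$ is the empirical mean of $N$ i.i.d.~Bernoulli$(P^t_e)$ random variables, a multiplicative Chernoff bound gives, for any fixed $t$ and $e$ with $P^t_e \ge \gamma/(1+\epsilon')$, that $|\widehat{P}^t_e - P^t_e|\le \epsilon' P^t_e$ with probability at least $1 - 2\exp\prn*{-\epsilon'^2 N P^t_e / 3}$. The choice $N = \Theta\prn*{\frac{m\log(dT/\delta)}{\epsilon'\gamma}}$ is enough (after union-bounding over all $T$ rounds and all $e\in\mathcal{E}$) to ensure that, with probability at least $1-\delta$, for every round $t$ and every element $e$: (i) if $P^t_e < \gamma/2$ then $\widehat{P}^t_e < \gamma$, so $e$ is frozen by the sampling-based algorithm; and (ii) if $e$ is not frozen then $(1-\epsilon')P^t_e \le \widehat{P}^t_e \le (1+\epsilon')P^t_e$.

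Condition on this good event. Combining (i) with Lemma~\ref{lem:freezing_semi_bandits} (applied on the natural set of elements whose true observation probability falls below $2\gamma$, now that each frozen element only removes at most a $1+\epsilon'$ factor of probability mass from its neighbors) shows that the total probability of frozen strategies is at most $(1+\epsilon')\cdot |\mathcal{E}|\cdot \gamma = O(\epsilon')$, which lets us pass from $W^t_e$ to $P^t_e$ losing only a $1/(1-O(\epsilon'))$ factor exactly as in the proof of Theorem~\ref{thm:semisub}. Similarly (ii) lets us pass from the algorithm's estimator $\widehat{\ell}^t_e = \ell^t_e / \widehat{P}^t_e$ to $\tilde{\ell}^t_e = \ell^t_e / P^t_e$ losing only another $1/(1-\epsilon')$ factor, and the upper bound $\widehat{\ell}^t_e \le 2/\gamma$ (from (ii)) is what yields the explicit $2/\gamma$ term in the claimed inequality, while the magnitude bound $L = O(m/\gamma)$ feeds into the full-information regret term $mB(\epsilon/6,T,\mathcal{F})/\gamma$.

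With these two approximation steps in hand, the chain of inequalities from the proof of Theorem~\ref{thm:semisub} goes through almost verbatim: write $\sum_t \ell^t_{I(t)}$ as $\sum_t\sum_{e\in I(t)} \ell^t_e$, rewrite via the importance-sampling identity using the true $P^t_e$'s (the true, not empirical, probabilities govern the expectation of the sampling randomness on round $t$), multiply in the freezing slack, invoke the expected regret bound of the full-information algorithm $\mathcal{A}$ on the losses actually fed to it (which are $\widehat\ell^t_e$, bounded by $O(m/\gamma)$), and translate the resulting estimated-loss bound for the comparator $f$ back to the $\tilde\ell^t_e$ losses. Choosing $\epsilon'=\epsilon/6$ makes the product $\tfrac{1+\epsilon'}{(1-\epsilon')^2}\cdot(1+O(\epsilon'))$ at most $1+\epsilon$.

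The main obstacle I expect is the bookkeeping in the Chernoff step: we need the sample-size bound $N=\Theta(m\log(dT/\delta)/(\epsilon'\gamma))$ to handle the union bound over $T\cdot|\mathcal{E}|\le dT$ events \emph{and} to give the correct multiplicative $(1\pm\epsilon')$-accuracy at the threshold $P^t_e = \Theta(\gamma)$. This requires using the Chernoff inequality in the regime where the threshold $\gamma$ appears once in the denominator (so only $1/\gamma$, not $1/\gamma^2$, scaling) by exploiting that we only need multiplicative accuracy when $P^t_e$ is large relative to $\gamma$ and an additive guarantee of $\gamma$ suffices when $P^t_e$ is small. Once that concentration is settled, the rest of the argument is a mechanical replay of the proof of Theorem~\ref{thm:semisub} with explicit tracking of the $(1\pm\epsilon')$ slack.
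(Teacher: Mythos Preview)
Your proposal is essentially correct and lands on the same structure as the paper, though the presentation differs. The paper does not try to argue that the empirical-freezing algorithm coincides with the true-probability one; instead it uses its own concentration Lemma~\ref{lem:black_box_high_probability} (applied to the Bernoulli samples) to get the two mixed bounds $\widehat P_e^t\le(1+\epsilon')P_e^t+\tfrac{\log(dT/\delta)}{N\epsilon'}$ and the reverse, then inserts and removes $(1+\epsilon')\widehat\ell_e^t$ at two places in the chain from Theorem~\ref{thm:semisub}. This produces two explicit correction terms, $\sum_t\sum_{e\in I(t)}W_e^t(\widetilde\ell_e^t-(1+\epsilon')\widehat\ell_e^t)$ on the algorithm side and $\sum_t\sum_{e\in f}(\widehat\ell_e^t-(1+\epsilon')\widetilde\ell_e^t)$ on the comparator side, each of which is shown to be at most $\epsilon'$ times a quantity already present in the bound. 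Your ``replay with $(1\pm\epsilon')$ slack'' is exactly this decomposition said informally; the paper's version simply makes the two insertion points explicit.

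Two remarks on details. First, your concern that a na\"{\i}ve Chernoff bound would give $1/\gamma^2$ scaling is well placed, and the fix you sketch (multiplicative accuracy only needed when $P_e^t\gtrsim\gamma$, crude additive accuracy otherwise) is precisely what the paper's use of Lemma~\ref{lem:black_box_high_probability} encodes; you could equally use the paper's lemma here instead of standard Chernoff. Second, your attempt to argue that the freezing sets nearly coincide and then invoke Lemma~\ref{lem:freezing_semi_bandits} on a modified threshold is more delicate than the paper's route. The paper sidesteps comparing frozen sets by never needing equality of freezing: it just uses that the empirical-freezing algorithm itself satisfies Lemma~\ref{lem:freezing_semi_bandits} (the proof of that lemma only uses that each frozen element has observed probability below $\gamma$ at the moment of freezing, which holds for $\widehat P$), and then controls the two estimator differences directly. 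That bypass is the cleanest thing to borrow if you flesh out your argument.
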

\begin{proof}
From Lemma \ref{lem:black_box_high_probability},
 with probability at least $1 - \delta$, both the following statements are true:
\begin{align}\label{eq:one_conc}
    \forall e \in \mathcal{E}, t\leq T ~~~  \widehat{P}^t_e \le (1 + \epsilon)   P^t_e + \frac{(1+\epsilon)\log(dT/\delta)}{N \epsilon} \text{.  and}
\end{align}
\begin{align}\label{eq:two_conc}
\forall e \in \mathcal{E}, t\leq T ~~~ 
P^t_e  \le (1 + \epsilon) \widehat{P}^t_e + \frac{(1+2\epsilon)\log(dT/\delta)}{N \epsilon}.
\end{align}

We adapt the analysis in the proof of Theorem \ref{thm:semisub} to deal with the fact that we apply importance sampling based on samples and not the actual probabilities:
\begin{align*}
\sum_{t} \ell^t_{I(t)} &= \sum_{t} \sum_{e \in I(t)} \ell^t_{e}  = \sum_{t} \sum_{e \in I(t)} W^t_e \widetilde{\ell}^t_{e}  \\
&= \sum_t \sum_{e\in I(t)} W_e^t\prn*{\widetilde{\ell}_e^t-(1+\eps')\widehat{\ell}_e^t}+(1+\eps')\sum_t \sum_{e\in I(t)}W_e^t\widehat{\ell}_e^t\\
&= \sum_t \sum_{e\in I(t)} W_e^t\prn*{\widetilde{\ell}_e^t-(1+\eps')\widehat{\ell}_e^t}+(1+\eps')\sum_t \sum_{e\in \mathcal{E}}W_e^t\widehat{\ell}_e^t\\ 
&\leq \sum_t \sum_{e\in I(t)} W_e^t\prn*{\widetilde{\ell}_e^t-(1+\eps')\widehat{\ell}_e^t}+\frac{1+\eps'}{1-\eps'}\sum_t \sum_{e\in \mathcal{E}}P_e^t\widehat{\ell}_e^t\\
&= \sum_t \sum_{e\in I(t)} W_e^t\prn*{\widetilde{\ell}_e^t-(1+\eps')\widehat{\ell}_e^t}+\frac{1+\eps'}{1-\eps'}\sum_t \sum_{i\in \mathcal{F}}p_i^t\widehat{\ell}_i^t\\
&\leq \sum_t \sum_{e\in I(t)} W_e^t\prn*{\widetilde{\ell}_e^t-(1+\eps')\widehat{\ell}_e^t}+\frac{1+\eps'}{(1-\eps')^2}\prn*{\sum_t \sum_{e\in f}\widehat{\ell}_e^t+\frac{m B(\eps',T,\mathcal{F})}{\gamma}}\\ 
&= \sum_{t} \sum_{e \in I(t)} W^t_e \prn*{\widetilde{\ell}^t_{e} - (1+\epsilon') \widehat{\ell}^t_{e}} +  \frac{1 + \epsilon'}{(1 - \epsilon')^2}  \sum_t  \sum_{e \in f} \prn*{\widehat{\ell}^t_e - (1 + \epsilon')\widetilde{\ell}^t_e} \\
& ~~~~~~~~~~ +  \frac{(1 + \epsilon')^2}{(1 - \epsilon')^2} \left( \sum_{t} \sum_{e \in f} \widetilde{\ell}^t_{e}  + \frac{m B(\epsilon',T, \mathcal{F})}{\gamma}\right)
\end{align*}
where we used Lemma \ref{lem:freezing_semi_bandits} for the first inequality and the regret bound for the full information algorithm for the second one, noting that the magnitude of the losses is $L=\frac{m}{\gamma}$. The third term can be bound similarly as in the proof of Theorem \ref{thm:semisub}. What is left is to bound the two first terms.

\textbf{First term:} With probability at least $1-\delta$, the following holds:
\begin{align*}
 \sum_{t} \sum_{e \in I(t)} W^t_e \prn*{\widetilde{\ell}^t_{e} - (1+\epsilon') \widehat{\ell}^t_{e}}&=  \sum_{t} \sum_{e \in I(t)} W^t_e \ell^t_{e} \prn*{\frac{1}{P^t_e} - (1+\epsilon') \frac{1}{\widehat{P}^t_e}}\\
&=  \sum_{t} \sum_{e \in I(t)} W^t_e \ell^t_{e} \frac{1}{\widehat{P}^t_e P^t_e}\prn*{\widehat{P}^t_e - (1+\epsilon') P^t_e}\\
&\leq \sum_{t} \sum_{e \in I(t)} W^t_e \ell^t_{e} \frac{1}{\widehat{P}^t_e P^t_e}\frac{(1+\eps')\log(dT/\delta)}{N\eps'}\\
&\leq   \frac{(1+\eps')\log(dT/\delta)}{N (1-\eps') \eps'} \sum_{t} \sum_{e \in I(t)} \frac{\ell^t_e}{\widehat{P}^t_e}\\
&\le  \frac{(1+\eps')\log(dT/\delta)}{N \gamma (1-\eps') \eps'} \sum_{t} \sum_{e \in I(t)} \ell^t_e\\
&=  \frac{(1+\eps')\log(dT/\delta)}{N \gamma (1-\eps') \eps'} \sum_{t}  \ell^t_{I(t)}\\
& \leq \eps' \sum_{t}  \ell^t_{I(t)}
\end{align*}
where the first inequality holds with probability $1-\delta$ by relation \eqref{eq:one_conc}. The second inequality holds by Lemma \ref{lem:freezing_semi_bandits}. The third inequality holds because $\widehat{P}_e^t\geq \gamma$ for non-frozen nodes. The last inequality holds since $N\geq \frac{(1+\eps')\log(dT/\delta)}{(\eps')^2 \gamma (1-\eps')}$.

\textbf{Second term:} We focus only on rounds where $f$ is not frozen as else the quantity is anyway negative. With probability at least $1-\delta$, the following holds:
\begin{align*}
    \frac{1 + \epsilon'}{(1 - \epsilon')^2}  \sum_t \sum_{e \in f} \left(\widehat{\ell}^t_e - (1 + \epsilon')\widetilde{\ell}^t_e\right) &=  \frac{1 + \epsilon'}{(1 - \epsilon')^2}  \sum_t \sum_{e \in f} \ell^t_e \left(\frac{1}{\widehat{P}^t_e} - (1 + \epsilon')\frac{1}{P^t_e}\right)\\
    &= \frac{1 + \epsilon'}{(1 - \epsilon')^2}  \sum_t \sum_{e \in f} \frac{\ell_e^t}{\widehat{P}^t_e P^t_e} \prn*{P^t_e - (1 + \epsilon') \widehat{P}^t_e}\\
     &\leq \frac{1 +\epsilon'}{(1 -\epsilon')^2}  \sum_t \sum_{e \in f} \frac{\ell_e^t}{\widehat{P}^t_e P^t_e} \cdot\frac{(1+2\eps')\log(dT/\delta)}{N\eps'}\\
      &\leq \frac{1 +\epsilon'}{(1 -\epsilon')^2} \cdot\frac{(1+2\eps')\log(dT/\delta)}{N\eps'} \cdot  \sum_t \sum_{e \in f} \frac{\widetilde{\ell}_e^t}{\widehat{P}^t_e} \\
&\leq \frac{1 +\epsilon'}{(1 -\epsilon')^2} \cdot\frac{(1+2\eps')\log(dT/\delta)}{N \gamma \eps'} \cdot  \sum_t \sum_{e \in f} \widetilde{\ell}_e^t \\
&\leq \eps' \sum_t \sum_{e \in f} \widetilde{\ell}_e^t = \eps' \sum_t \sum_{e \in f} \widetilde{\ell}_e^t
\end{align*}
The first inequality holds from relation \eqref{eq:two_conc}. The second inequality holds since $\widetilde{\ell}^t_e = \ell^t_e/P^t_e$.  The third inequality holds because for non-frtozen arms $\widehat{P}^t_e > \gamma$. The final inequality  hold as $N\geq \frac{2(1+2\eps')\log(dT/\delta)}{(\eps')^2 \gamma}\cdot\frac{(1+\eps')^2}{(1-\eps')^2}$.

\textbf{Summing up:} Using the above two bounds on the summands we finally conclude that using $N = \frac{2(1+2\eps')\log(dT/\delta)}{(\eps')^2 \gamma}\cdot\frac{(1+\eps')^2}{(1-\eps')^2}$ samples on every round, 
with probability at least $1 - \delta$,
\begin{align*}
\sum_{t} \ell^t_{I(t)} & \le
\eps'\sum_t\ell_{I(t)}^t+\eps'\sum_t \sum_{e\in f} \widetilde{\ell}_e^t+
\frac{(1+\eps')^2}{(1-\eps')^2}\prn*{ \sum_{t} \sum_{e \in f} \widetilde{\ell}^t_{e}  + \frac{m B(\epsilon',T, \mathcal{F})}{\gamma}}
\end{align*}
Assuming $\epsilon < 1$ and $\epsilon'=\epsilon/6$, we can conclude that 
\begin{align*}
\sum_{t} \ell^t_{I(t)} & \le    (1 + \epsilon)  \sum_{t} \sum_{e \in f} \widetilde{\ell}^t_{e} + \frac{(1+\frac{\eps}{6})^2m B(\epsilon/6,T, \mathcal{F})}{(1-\frac{\eps}{6})^2\gamma}.
\end{align*}
\end{proof}
The above lemma can be applied
in Section \ref{ssec:semi_bandits_black_box}, e.g., in the proof of Theorem \ref{thm:optimal_semi_bandits} to replace analysis for the case when $P^t_e$'s can be computed exactly to one where they aare estimated via sampling. The statements are still true in high probability with only additional $\log(T)$ factor term in regret bound. Moreover, it can be also applied in Section~\ref{ssec:contextual_bandits_black_box} for the oracle-efficient guarantees we provide.

\subsection{Proofs for contextual bandits (Theorems \ref{thm:contextual} and \ref{thm:contextual_infinite})}\label{ssec:proofs_contextual}
\begin{proof}[Proof of Theorem~\ref{thm:contextual}]
The proof is similar to the proof of Theorem \ref{thm:semisub} adjusted to the contextual bandit setting. Note that the estimated losses $\widetilde{\ell}^t$ are bounded by $1/\gamma$ and so, by our premise about the full information algorithm, we have:
\begin{align}
   (1 - \eps') \sum_{t=1}^T\ell^t_{I(t)} & = (1 - \eps') \sum_{t=1}^T \sum_{i=1}^d w^t_i \tilde{\ell}^t_i \notag\\
    & \le \min_{f \in \mathcal{F}}\sum_{t=1}^T \tilde{\ell}^t_{f(x_t)} + \frac{1}{\gamma}  B(\epsilon',T,\mathcal{F}) \label{eq:base}
\end{align}
Now note that by Lemma \ref{lem:black_box_high_probability} for any $f \in \mathcal{F}$, and any $\delta > 0$, with probability at least $1 - \delta$, 
$$
\sum_{t=1}^T \tilde{\ell}^t_{f(x_t)} 
\le (1 + \epsilon')\sum_{t=1}^T \tilde{\ell}^t_{f(x_t)} 
+ \frac{(1 + \eps') \log(1/\delta)}{\eps'}
$$
Hence using union bound over $f \in \mathcal{F}$ we conclude that for any $\delta > 0$, with probability at least $1 - \delta$,
\begin{align}\label{eq:union}
   \min_{f \in \mathcal{F}} \sum_{t=1}^T \tilde{\ell}^t_{f(x_t)} \le (1 + \epsilon')\min_{f \in \mathcal{F}} \sum_{t=1}^T \tilde{\ell}^t_{f(x_t)} + \frac{(1 + \eps') \log(|\mathcal{F}|/\delta)}{\eps'}
\end{align}
Plugging the above in Eq. \ref{eq:base} we conclude that for any $\delta$, with probability at least $1- \delta$, 
\begin{align*}
    (1 - \eps') \sum_{t=1}^T\ell^t_{I(t)} &\le (1 + \eps')\min_{f \in \mathcal{F}}\sum_{t=1}^T \tilde{\ell}^t_{f(x_t)} + \frac{(1 + \eps') \log(|\mathcal{F}|/\delta)}{\eps'} + \frac{1}{\gamma}  B(\epsilon',T,\mathcal{F})
\end{align*}
Since $\frac{1+\eps'}{1 - \eps'} = \frac{1}{1-\eps}$ for $\eps'=\frac{\eps}{2}$ and using the fact that $\gamma = \eps'/d$, we conclude the proof.
\end{proof}

\begin{proof}[Proof of Theorem \ref{thm:contextual_infinite}]
The analogue of Theorem \ref{thm:contextual} for infinite set $\mathcal{F}$ has a similar proof as above. In fact, notice that if one has a full information algorithm for the problem in the same vein as in the above theorem, then irrespective of the fact $\mathcal{F}$ could be an uncountably infinite class, the proof till Eq. \ref{eq:base} is just true. The main hurdle comes when proving the analogue of Eq. \ref{eq:union}. Unlike the finite $\mathcal{F}$ case, where we simply used union bound over the concentration statement from Lemma \ref{lem:black_box_high_probability} for infinite classes we need a more careful analysis. Specifically we will replace Eq. \ref{eq:base} by an appropriate tail bound given by: for any $\delta > 0$, w.p. at  least $1- \delta$:
$$
\sup_{f \in \F} \sum_t \left\{  \tilde{\ell}^t_{f(x_t)} - (1 + \epsilon) \ell^t_{f(x_t)}  \right\} \le \mathcal{R}_T(\mathcal{F},\gamma \epsilon) +  \frac{\log(3/\delta)}{\epsilon \gamma}
$$
where the term $\mathcal{R}_T(\mathcal{F},\epsilon)$ is the so called offset Rademacher complexity of class $\mathcal{F}$ defined in \cite{RakSri15}. 

The above tail bound is proved in the following lemma. Using this tail bound in place of Eq. \eqref{eq:union} and plugging it into Eq. \eqref{eq:base} yields the infinite comparator version of Theorem \ref{thm:contextual}.
\end{proof}

Now we are ready to prove the tail bound for the infinite class which is based on result from \cite{RakSri17}. 
\begin{lemma}\vspace{0.1in}
For any possibly infinite class $\mathcal{F}$, (under  mild assumption of $\F$ and $\X$ for measurability), for any $\delta > 0$, w.p. at  least $1- \delta$:
$$
\sup_{f \in \F} \sum_{t=1}^T \left\{   (1 -\eps)\tilde{\ell}^t_{f(x_t)} - (1 + \eps) \ell^t_{f(x_t)})  \right\} >  \mathcal{R}_T(\F,\epsilon \gamma)+ \frac{ \log(3/\delta)}{\gamma \epsilon}
$$
\end{lemma}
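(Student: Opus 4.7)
The plan is to apply the offset Rademacher tail-bound machinery of Rakhlin--Sridharan \cite{RakSri17} to the centered process
$$Z^t(f) := (1-\eps)\widetilde{\ell}^t_{f(x_t)} - (1+\eps)\ell^t_{f(x_t)},$$
viewed as a real-valued stochastic process indexed by $f \in \F$.  (A remark on the inequality direction: as displayed, the lemma reads ``with probability at least $1-\delta$, the supremum of $\sum_t Z^t(f)$ strictly exceeds the RHS.''  The downstream use of the lemma inside the proof of Theorem~\ref{thm:contextual_infinite} requires the complementary concentration statement, and the technique below establishes the two-sided control that matches what is needed there.  We prove the concentration statement below; the stated inequality of the lemma is to be read in that sense.)  The first step is to extract the ``self-bounded variance'' structure of $Z^t(f)$ that makes the quadratic offset in $\mathcal{R}_T(\F, \eps\gamma)$ effective: by the freezing rule, $\widetilde{\ell}^t_{f(x_t)} \in [0, 1/\gamma]$, so $Z^t(f)$ has range of order $1/\gamma$; by importance sampling, $\En_{t-1}[(\widetilde{\ell}^t_{f(x_t)})^2] \le \ell^t_{f(x_t)}/\gamma$, since $\widetilde{\ell}^t_{f(x_t)} = \ell^t_{f(x_t)}/W^t_{f(x_t)}$ on an event of probability $W^t_{f(x_t)} \ge \gamma$.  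Consequently the conditional second moment of $Z^t(f)$ is $O(\ell^t_{f(x_t)}/\gamma)$, and multiplying this by $\eps\gamma$ (to match the scale of the offset) yields exactly the $\eps (f(x_t))^2$ penalty baked into $\mathcal{R}_T(\F,\eps\gamma)$.

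The second step is to invoke the sequential symmetrization result of \cite{RakSri17} applied to the conditionally-dependent sequence $\{Z^t(f)\}_{t \le T}$.  Symmetrization replaces the supremum over $\F$ by a Rademacher-signed sum over a binary tree of contexts, and crucially, transfers our self-bounded variance term into the quadratic offset inside the definition of $\mathcal{R}_T(\F, \eps\gamma)$.  This produces the expectation-level bound $\En[\sup_f \sum_t Z^t(f)] \lesssim \mathcal{R}_T(\F, \eps\gamma)$.  The third step boosts this to the claimed high-probability tail by the Freedman-type exponential martingale argument of \cite{RakSri17}: one controls the MGF of $\sup_f \sum_{s \le t} Z^s(f)$ via the cumulative conditional variance (itself already absorbed into the offset), then applies Markov's inequality at parameter $\lambda \asymp \gamma\eps$ to produce the additive correction $\log(3/\delta)/(\gamma\eps)$ with failure probability $\delta$, matching the form stated in the lemma.

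The main obstacle is executing the sequential-tangent / chaining construction for the heavy-tailed process $Z^t(f)$, whose range $1/\gamma$ is large and so rules out any naive Hoeffding-style concentration.  The quadratic offset in $\mathcal{R}_T(\F, \eps\gamma)$ is exactly what rescues the argument: the variance is self-bounded by $\ell^t_{f(x_t)}/\gamma$, and the factor of $\eps\gamma$ converts this into the penalty $\eps (f(x_t))^2$, so the heavy tail is absorbed rather than propagating through the concentration inequality.  Matching this variance bookkeeping faithfully to the freezing-based importance-sampled estimator, and porting the tangent-sequence construction of \cite{RakSri17} from its original i.i.d.\ setting to the adversarial partial-information setting of the paper, is where the bulk of the technical effort lies; once that is in place, the displayed bound with additive slack $\log(3/\delta)/(\gamma\eps)$ and complexity term $\mathcal{R}_T(\F, \eps\gamma)$ drops out directly.
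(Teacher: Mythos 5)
Your plan points at the same tool the paper uses---the offset-Rademacher tail bounds of \cite{RakSri17}---and your observation about the misprinted inequality direction is correct. But the two steps that actually make the lemma come out are missing or misstated. First, the variance bookkeeping: you claim that the conditional second-moment bound $\mathbb{E}_{t-1}[(\tilde{\ell}^t_{f(x_t)})^2] \le \ell^t_{f(x_t)}/\gamma$, after multiplication by $\eps\gamma$, ``yields exactly'' the $\eps (f(\mathbf{x}_t))^2$ offset in $\mathcal{R}_T(\F,\eps\gamma)$. It does not: the offset in the definition of $\mathcal{R}_T$ penalizes the (tree-indexed) function values, not the true losses, and a variance bound expressed through $\ell^t_{f(x_t)}$ alone cannot produce the $(1-\eps)$ coefficient on $\tilde{\ell}^t_{f(x_t)}$ in the statement. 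The paper's route is different and is what makes both coefficients appear: rescale to the $[0,1]$-bounded class $g(Z_t)=\gamma\tilde{\ell}^t_{f(x_t)}$ (boundedness holds because freezing caps the estimates at $1/\gamma$), apply Corollary~8 of \cite{RakSri17} together with the online nonparametric regression bound of \cite{RakSri15} to get a tail bound with the per-function penalty $\frac{\eps}{2}\mathbb{E}_{Z'_t}(g(Z_t)-g(Z'_t))^2$, and then bound this symmetrized variance by $\mathbb{E}_{t-1}[g(Z_t)]+g(Z_t)$ using $g^2\le g$ on $[0,1]$. That last inequality is precisely what converts the quadratic penalty into the multiplicative relaxations $(1-\eps)\tilde{\ell}$ and $(1+\eps)\mathbb{E}_{t-1}[\tilde{\ell}]$; combined with the negative bias $\mathbb{E}_{t-1}[\tilde{\ell}^t_{f(x_t)}]\le \ell^t_{f(x_t)}$ (which your sketch never invokes, yet is needed to replace the conditional mean by the true loss) and the scaling identity $\frac{1}{\gamma}\mathcal{R}_T(\G,\eps)=\mathcal{R}_T(\F,\eps\gamma)$, the lemma follows with $\theta=\log(3/\delta)/(\eps\gamma)$.

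Second, you place ``the bulk of the technical effort'' in porting the tangent-sequence construction of \cite{RakSri17} from an i.i.d.\ setting to the adversarial, dependent setting. That is not where the work is: the cited results are already stated for dependent (martingale) sequences---sequential symmetrization over trees is exactly what they supply---so no porting is required, and the Freedman-type MGF control of the supremum that you sketch is precisely what Corollary~8 packages. As written, your proposal defers the proof to a re-derivation you have not carried out, while the concrete reductions the proof actually needs (the $\gamma$-rescaling to a bounded class, the $g^2\le g$ variance bound, and the negative-bias step) are absent; so the approach is in the right spirit, but there is a genuine gap between the sketch and a proof of the stated bound.
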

\begin{proof}
 Let us define the random variable $Z_t = (x_t,\tilde{\ell}^t)$. and define $\mathbb{E}_{t-1}[\cdot] = \mathbb{E}\left[ \cdot | x_1,\ldots,x_t,  \tilde{\ell}^1,\ldots,\tilde{\ell}^{t-1} \right]$. Further we define the class $\G$ such that  each $g \in \G$ corresponds to an $f \in \F$  and $g(Z_t) = \gamma  \tilde{\ell}^t_{f(x_t)}$. Notice that $|g(Z_t)| \le 1$ because losses are bounded by $1/\gamma$. Now, using Corollary 8 in \cite{RakSri17} with regret bound for online non-parametric regression from \cite{RakSri15} we obtain (just as in the proof of Theorem 18 in \cite{RakSri17}) that for any class $\G \subseteq [0,1]^{\Z}$,
\begin{align*}
 P\left(\sup_{g \in \G} \sum_{t=1}^T \left\{  (g(Z_t) - \mathbb{E}_{t-1}[g(Z_t)]) - \frac{\epsilon}{2} \mathbb{E}_{Z'_t}(g(Z_t) - g(Z'_t))^2  \right\} > \mathcal{R}_T(\G,\epsilon)+ \theta\right) \le 3 \exp(- \epsilon \theta)
\end{align*}
Since $ \mathbb{E}_{Z'_t}(g(Z_t) - g(Z'_t))^2 \le  \mathbb{E}_{Z'_t}[g(Z'_t)^2] + g(Z_t)^2 \le \left( \mathbb{E}_{Z'_t}[g(Z'_t)] + g(Z_t)\right) = \left( \mathbb{E}_{t-1}[g(Z_t)] + g(Z_t)\right)$: 
\begin{align*}
 P\left(\sup_{g \in \G} \sum_{t=1}^T \left\{  (g(Z_t) - \mathbb{E}_{t-1}[g(Z_t)]) - \epsilon\left(\mathbb{E}_{t-1}[g(Z_t)] + g(Z_t) \right)  \right\} > \mathcal{R}_T(\G,\epsilon)+ \theta\right) \le 3 \exp(- \epsilon \theta)
\end{align*}
Hence we conclude the tail bound:
\begin{align*}
 P\left(\sup_{f \in \F} \sum_{t=1}^T \left\{   \tilde{\ell}^t_{f(x_t)} - \mathbb{E}_{t-1}[\tilde{\ell}^t_{f(x_t)}] - \epsilon  \left( (\mathbb{E}_{t-1}[\tilde{\ell}^t_{f(x_t)}]) +  \tilde{\ell}^t_{f(x_t)}\right)  \right\} > \frac{1}{\gamma} \mathcal{R}_T(\G,\epsilon)+ \theta\right) \le 3 \exp(- \epsilon \gamma \theta)
\end{align*}
Now further noting that for our estimate, $ \mathbb{E}_{t-1}[\tilde{\ell}^t_{f(x_t)}] \le \ell^t_{f(x_t)}$, we conclude that,
\begin{align*}
 P\left(\sup_{f \in \F} \sum_{t=1}^T \left\{   (\tilde{\ell}^t_{f(x_t)} - \ell^t_{f(x_t)}) - \epsilon  \left( \ell^t_{f(x_t)} +  \tilde{\ell}^t_{f(x_t)}\right)  \right\} > \frac{1}{\gamma} \mathcal{R}_T(\G,\epsilon)+ \theta\right) \le 3 \exp(- \epsilon \gamma \theta)
\end{align*}
Noting that, $\frac{1}{\gamma}  \mathcal{R}_T(\G,\epsilon) = \mathcal{R}_T( \frac{1}{\gamma} \G,\epsilon \gamma) = \mathcal{R}_T( \F,\epsilon \gamma)$, and setting probability of failure to $\delta$ and rewriting the above statement we obtain that: for any $\delta >0$, w.p. at least $1 - \delta$,
$$
\sup_{f \in \F} \sum_{t=1}^T \left\{   (1 -\eps)\tilde{\ell}^t_{f(x_t)} - (1 + \eps) \ell^t_{f(x_t)})  \right\} >  \mathcal{R}_T(\F,\epsilon \gamma)+ \frac{ \log(3/\delta)}{\gamma \epsilon} 
$$
This concludes the proof.
\end{proof}
\end{document}